\def\eqref#1{equation~\ref{#1}}
\def\Eqref#1{Equation~\ref{#1}}
\def\1{\bm{1}}
\DeclareMathAlphabet{\mathsfit}{\encodingdefault}{\sfdefault}{m}{sl}
\SetMathAlphabet{\mathsfit}{bold}{\encodingdefault}{\sfdefault}{bx}{n}
\newcommand{\E}{\mathbb{E}}
\newcommand{\R}{\mathbb{R}}
\newcommand{\parents}{Pa} 
\DeclareMathOperator*{\argmax}{arg\,max}
\newcolumntype{Y}{>{\centering\arraybackslash}X}
\newcommand{\blueKeep}[1]{\color{blue}{#1}  \color{black}}
\renewcommand{\blue}[1]{} 
\title{Discovering group dynamics in coordinated time series \\ via hierarchical recurrent switching-state models}
\newcommand{\tuftsAffil}{\textnormal{1}}
\newcommand{\harvardAffil}{\textnormal{2}}
\newcommand{\msuAffil}{\textnormal{3}}
\newcommand{\ccdcAffil}{\textnormal{4}}
\author{\name Michael T. Wojnowicz\textsuperscript{\tuftsAffil,\harvardAffil,\msuAffil}, 
\name Kaitlin Gili\textsuperscript{\tuftsAffil}, 
\name Preetish Rath\textsuperscript{\tuftsAffil},\\
\name Eric Miller\textsuperscript{\tuftsAffil},
\name Jeffrey Miller\textsuperscript{\harvardAffil},
\name Clifford Hancock\textsuperscript{\ccdcAffil}, 
\name Meghan O'Donovan\textsuperscript{\ccdcAffil},\\
\name Seth Elkin-Frankston\textsuperscript{\tuftsAffil,\ccdcAffil},
\name Tad T. Brunye\textsuperscript{\tuftsAffil,\ccdcAffil}, and
\name Michael C. Hughes\textsuperscript{\tuftsAffil}  \\
\addr\textsuperscript{\textnormal{1}}Tufts University, Medford, MA, USA\\
\addr\textsuperscript{\textnormal{2}}Harvard University, Boston, MA, USA\\
\addr\textsuperscript{\msuAffil}Montana State University, Bozeman, MT, USA\\
\addr\textsuperscript{\textnormal{4}}U.S. Army Combat Capabilities Development Command Soldier Center (CCDC SC), Natick, MA, USA
}
\begin{document}

\maketitle

\addtocontents{toc}{\protect\setcounter{tocdepth}{0}}

\begin{abstract}

We seek a computationally efficient model for a collection of time series arising from multiple interacting entities (a.k.a. ``agents''). 
Recent models of temporal patterns across individuals fail to incorporate explicit system-level collective behavior that can influence the trajectories of individual entities.
To address this gap in the literature, we present a new hierarchical switching-state model that can be trained in an unsupervised fashion to simultaneously learn both system-level and individual-level dynamics.
We employ a latent system-level discrete state Markov chain that provides top-down influence on latent entity-level chains which in turn govern the emission of each observed time series.
Recurrent feedback from the observations to the latent chains at both entity and system levels allows recent situational context to inform how dynamics unfold at all levels in bottom-up fashion.
We hypothesize that including both top-down and bottom-up influences on group dynamics will improve interpretability of the learned dynamics and reduce error when forecasting. 
Our \textit{hierarchical switching recurrent dynamical model} can be learned via closed-form variational coordinate ascent updates to all latent chains that scale linearly in the number of entities. This is asymptotically no more costly than fitting a separate model for each entity. Analysis of both synthetic data and real basketball team movements suggests our lean parametric model can achieve competitive forecasts compared to larger neural network models that require far more computational resources. Further experiments on soldier data as well as a synthetic task with 64 cooperating entities show how our approach can yield interpretable insights about team dynamics over time.

\end{abstract}

\section{Introduction}
\begin{identifiableVersion}
\let\thefootnote\relax\footnotetext{Open-source code: \url{\githubURL}}
\end{identifiableVersion}
\begin{anonVersion}
\let\thefootnote\relax\footnotetext{Code (anonymized): \url{\anonURL}}
\end{anonVersion}

We consider the problem of jointly modeling a collection of multivariate time series arising from individual entities that can influence each other's behavior over time and might share goals. Each series in the collection describes the evolution of one \emph{entity}, sometimes also called an ``agent''~\citep{yuanAgentFormerAgentAwareTransformers2021}. All entities are observed over the same time period within a shared environment or \emph{system}. Our work is motivated by the need to capture an essential property of such data in many applications: the temporal behaviors of the individual entities are \emph{coordinated} in a systematic but fundamentally latent (i.e., unobserved) manner. 

As a motivating example, consider the dynamics of a team sport like basketball~\citep{ternerModelingPlayerTeam2021}. To accomplish a team goal, one player might set a ``screen,'' physically blocking a defender to allow a teammate an open drive to the basket.
This screen could be preplanned or arise as players act on an in-moment opportunity.
As another example, consider marching band players that practice moving together in a coordinated fashion across a field. Most movements are planned out by a coach. However, in some cases, situational adjustments are needed: when enough individuals make mistakes in their own trajectories, the coach may decide to rein in all of the players and reset. 
Our experiments directly cover basketball and marching band case studies later in Sec.~\ref{sec:basketball_experiment} and Sec.~\ref{sec:marchingband}, respectively.
Similar group dynamics arise in many other domains, such as businesses in a common economic system~\citep{dijkSmoothTransitionAutoregressive2002}, animals in a shared habitat~\citep{sunMultiAgentBehaviorDataset2021}, biological cells sharing copy-number mutations \citep{babadi2023gatk}, or students in a collaborative problem-solving session~\citep{vexing_questions,earle2023confusion}.

In all these examples, the dynamics of the individuals are far from independent.
Instead, the observed trajectories exhibit ``top-down'' patterns of coordination that are planned in advance or learned from extensive training together. They also exhibit ``bottom-up'' adaptations of the individuals and the group to evolving situational demands. 
We seek to build a model that can infer how group dynamics evolve over time given only entity-level sensory measurements, while taking into account top-down and bottom-up influences.

While modeling individual time series has seen many recent advances~\citep{linderman2017bayesian,guEfficientlyModelingLong2022,farnooshDeepSwitchingAutoRegressive2021},
there remains a need for improved models for coordinated collections of time series.
To try to model a collection of time series,
a simple approach could repurpose models for individual time series. 
As a step beyond this, some efforts pursue \emph{personalized} models that allow custom parameters that govern each entity's dynamics while sharing information between entities via common priors on these parameters~\citep{severson2020personalized,lindermanHierarchicalRecurrentState2019,alaa2019attentive}, often using mixed effects~\citep{altman2007mixed,liu2011mixed}. But personalized models allow each sequence to unfold asynchronously without interaction. 
In contrast, our goal is to specifically model coordinated behavior  within the same time period.
Others have pursued this goal with complex neural architectures that can jointly model ``multi-agent'' trajectories~\citep{zhanGeneratingMultiAgentTrajectories2019,alcornBaller2vecLookAheadMultiEntity2021,xu2022groupnet}. Instead, we focus on parametric methods that are easier to interpret and more likely to provide sample-efficient quality fits in applications with only a few minutes of available data (such as the data described in Sec.~\ref{sec:soldiers}). 

One potential barrier to modeling coordination across entities is computational complexity. For instance, a model with discrete hidden states which allows interactions among entities has a factorial structure with inference that scales \textit{exponentially} in the number of entities (see Sec.~\ref{sec:inference}).
In this paper we present a tractable framework for modeling collections of time series that overcomes this barrier. All estimation can be done with cost linear in the number of entities, making our model's asymptotic runtime complexity no more costly than fitting separate models to each entity.

The \textbf{first key modeling contribution} is an explicit representation of the \emph{hierarchical} structure of group dynamics, modeled via a latent system-level state that exerts ``top-down'' influence on each individual time series. We use well-known switching-state models~\citep{rabinerTutorialHiddenMarkov1989} as a building block for both system-level and individual-level dynamics. As shown in Fig.~\ref{fig:graphical_model_for_rHSDM}, our model posits two levels of latent discrete state chains: a system-level chain shared by all entities and an entity-level chain unique to each entity. We assume that the system-level chain is the sole mediator of cross-entity coordination; each entity-level chain is conditionally independent of other entities given the system-level chain. Our model achieves ``top-down'' coordination via the system-level chain's influence on entity-level state transition dynamics. 
In turn, an emission model produces each entity's observed time series given the entity-level chain.

The \textbf{second key modeling contribution} is to allow ``bottom-up'' adjustments  to recent situational demands \emph{at all levels} of the hierarchy. In our assumed generative model in Fig.~\ref{fig:graphical_model_for_rHSDM}, the next system-level state and entity-level state both depend on feedback from per-entity observed data at the previous timestep. In the basketball context, this feedback captures how a basketball player driving to the basket switches to another behavior after reaching their goal.  Critically, with our model that situational change in one player can quickly influence the trajectories of other players.   We refer to observation-to-latent feedback as \emph{recurrent}, following \citet{linderman2017bayesian}.   Note that this recurrent feedback also allows the model to learn state duration distributions that are more expressive than the geometric distributions that would be implied otherwise   \cite{ansariDeepExplicitDuration2021}. Previously, \citet{linderman2017bayesian} incorporated such recurrent feedback into a model with a flat single-level of switching states. We show how recurrent feedback can inform a two-level hierarchy of system-level and entity-level states, so that entity-level observations can drive system-level transitions in a computationally efficient manner.

Our overall contribution is thus a proposed framework -- \emph{hierarchical switching recurrent dynamical models} -- by which our two key modeling ideas provide a natural and \emph{cost-effective} solution to the problem of unsupervised modeling of coordinated time series.
Unlike other models, our framework allows each entity's next-step dynamics to be driven by both a system-level discrete state (``top-down'' influence) and recurrent feedback from previous observations (``bottom-up'' influence). Optional exogenous features (e.g. the ball position in basketball) can also be incorporated.
We further provide a variational inference algorithm for simultaneously estimating model parameters and approximate posteriors over system-level and entity-level chains. 
Each chain's posterior maintains the model's temporal dependency structure while remaining affordable to fit via efficient dynamic programming that incorporates recurrent feedback.
We conduct experiments on two synthetic datasets as well as two real-world tasks to demonstrate the model's superior capability in discovering hidden dynamics as well as its competitive forecasts obtained at low computational cost. 


\section{Model Family} \label{sec:model_family}

\begin{figure}[t]
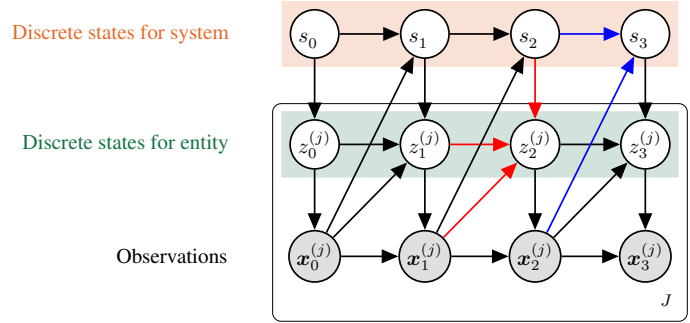

  \centering
  \scalebox{0.8}{
  \tikz{ %
  

\node[latent, thick](s0){$\systemState_0^{\phantom{(j)}}$}; %
 \node[latent, right=of s0, thick](s1){$\systemState_1^{\phantom{(j)}}$};%
 \node[latent, right=of s1, thick](s2){$\systemState_2^{\phantom{(j)}}$};%
 \node[latent, right=of s2, thick](s3){$\systemState_3^{\phantom{(j)}}$}; %

\node[latent, below=of s0, thick](z0j){$\state_0^^\entity$}; %
 \node[latent, below=of s1, thick](z1j){$\state_1^^\entity$};%
 \node[latent, below=of s2, thick](z2j){$\state_2^^\entity$}; %
 \node[latent, below=of s3, thick](z3j){$\state_3^^\entity$}; %

\node[obs, below=of z0j, thick](x0j){$\observation_0^^\entity$}; %
 \node[obs, below=of z1j, thick](x1j){$\observation_1^^\entity$};%
 \node[obs, below=of z2j, thick](x2j){$\observation_2^^\entity$}; %
 \node[obs, below=of z3j, thick](x3j){$\observation_3^^\entity$}; %

\definecolor{deepcarrotorange}{rgb}{0.91, 0.41, 0.17}
\definecolor{darkspringgreen}{rgb}{0.09, 0.45, 0.27}
\definecolor{hanblue}{rgb}{0.27, 0.42, 0.81}


\begin{scope}[on background layer]
\node[draw=deepcarrotorange!20,fill=deepcarrotorange!20, thick,fit=(s0) (s1) (s2) (s3)] {};
\node[draw=darkspringgreen!20,fill=darkspringgreen!20, thick,fit=(z0j) (z1j) (z2j) (z3j)] {};
\end{scope}

\node[const, left=of s0]{\color{deepcarrotorange}{Discrete states for system}};
 \node[const, left=of z0j]{\color{darkspringgreen}{Discrete states for entity}};
 \node[const, left=of x0j]{{Observations}};
\plate[inner sep=0.25cm, xshift=0cm, yshift=0cm] {plate0} {(x0j)(x1j)(x2j)(x3j)(z0j)(z1j)(z2j)(z3j)} {$\numEntities$}; %
\edge[thick]{x0j}{x1j};
\edge[thick]{x1j}{x2j};
\edge[thick]{x2j}{x3j};
\edge[thick]{z0j}{z1j};
\edge[thick,red]{z1j}{z2j};
\edge[thick]{z2j}{z3j};
\edge[thick]{s0}{s1};
\edge[thick]{s1}{s2};
\edge[thick,blue]{s2}{s3};
\edge[thick]{z0j}{x0j};
\edge[thick]{z1j}{x1j};
\edge[thick]{z2j}{x2j};
\edge[thick]{z3j}{x3j};
\edge[thick]{s0}{z0j};
\edge[thick]{s1}{z1j};
\edge[thick,red]{s2}{z2j};
\edge[thick]{s3}{z3j};
\edge[thick]{x0j}{z1j};
\edge[thick,red]{x1j}{z2j};
\edge[thick]{x2j}{z3j};
\edge[thick]{x0j}{s1};
\edge[thick]{x1j}{s2};
\edge[thick,blue]{x2j}{s3};
  }
 }
\caption{Graphical model representation of our \textit{hierarchical switching recurrent dynamical model} (\HSRDM) for a system of 
$\numEntities$ interacting entities. Colored edges highlight key insights behind our flexible transition models of system-level hidden states $s$ and entity-level hidden states $z$.
Transitions to the next system state depend via blue arrows on the current system state and recurrent feedback from observations of all entities (up-diagonal). 
Transitions to the next entity state depend via red arrows on the next system-level state (down arrow), current entity state (horizontal), and recurrent feedback from that entity's observations (up-diagonal).
}
\label{fig:graphical_model_for_rHSDM}
\end{figure}
Here we present a family of \textit{hierarchical switching recurrent dynamical models} (\HSRDMs) to describe a collection of time series gathered from $J$ entities that interact over a common time period (discretized into timesteps $t \in \{0, 1, 2, \hdots, T\}$) and in a common environment or \emph{system}. For each entity, indexed by $j \in \{1, \ldots, \numEntities\}$, we observe a time series of feature vectors $\set{\observation_t^^\entity \in \R^{\dimObservations}, t=0,1,2,\hdots, T}$.

Our \HSRDM~represents the $j$-th entity via two random variables: the observed or ``emitted'' features $\observation_{0:T}^^j$ and a hidden entity-level discrete state sequence $\state_{0:T}^j =$ $\set{\state_t^^\entity \in \set{1,\hdots, \numEntityStates_\entity}, t=0,\hdots, T}$.
We further assume a system-level latent time series of discrete states $\systemState_{0:T} {=} \set{\systemState_t \in \set{1,\hdots, \numSystemStates}, t=0,\hdots, T}$. 
The complete joint density of all random variables across all $J$ entities factorizes according to the graphical model in Figure~\ref{fig:graphical_model_for_rHSDM} as
\begin{align}
p(\allObservations^^\allEntities, \allStates^^\allEntities, \allSystemStates  \cond \parameters) 
&= 
\explaintermbrace{system state init.}{p(\systemState_0 \cond \parameters)} 
\prod_{t=1}^T \explaintermbrace{system state transitions}{
	p(\systemState_t \cond \systemState_{t-1}, \observation_{t-1}^^\allEntities,  \parameters)
}
~\cdot~ \prod_{j=1}^J \explaintermbrace{per-entity states and emissions}{p( \observation^^\entity_{0:T}, \state^^\entity_{0:T} \cond \allSystemStates, \parameters)}.
\labelit
\label{eqn:complete_data_likelihood_factorization_for_rHSDM} 
\end{align}
Here, $\theta$ denotes all model parameters, and superscript $^^\allEntities$ denotes the concatenation of variables over all entities. For a specific entity at index $j$, its individual states and emitted data factorize as
\begin{align*}
p( \observation^^\entity_{0:T}, \state^^\entity_{0:T} \cond \allSystemStates, \parameters) &=
\explaintermbrace{entity state init.}{
	p(\state_0^^\entity \cond \systemState_0, \parameters)
}
\prod_{t=1}^T\explaintermbrace{entity state transitions}{
	p(\state_t^^\entity \cond \state_{t-1}^^\entity, \observation_{t-1}^^\entity, \systemState_t,  \parameters)
} 
~\cdot~ 
\explaintermbrace{emission init.}{
	p(\observation_0^^\entity \cond \state_0^^\entity, \parameters)
}
\prod_{t=1}^T \explaintermbrace{emission dynamics}{
	p(\observation_t^^\entity \cond \observation_{t-1}^^\entity, \state_t^^\entity, \parameters)
}. 
\end{align*}

The design principle of \HSRDMs~is to coordinate the switching-state dynamics of multiple entities so they receive top-down influence from system-level state as well as bottom-up influence via recurrent feedback from entity observations (diagonal up arrows in Fig.~\ref{fig:graphical_model_for_rHSDM}).
Under the generative model, the next entity-level state 
depends on the interaction of three sources of information: the next state of the system,
the current state of the entity,
and the current entity observation.
Likewise, the next system state
depends on the current system state
and observations from \emph{all} entities.

\textbf{Transition models.} \begin{subequations}
\label{eqn:instantiated_transitions}
To instantiate our two-level discrete state transition distributions, we use categorical generalized linear models to incorporate each source of information via additive utilities
\begin{align}
\systemState_t 
    \cond \systemState_{t-1}, \observation_{t-1}^^\allEntities 
& \sim \text{Cat-GLM}_\numSystemStates \bigg( 
\label{eqn:instantiated_system_transitions}
\explaintermbrace{transition preferences}{\logTransitionMatrixForSystem^T \oneHotify{\systemState_{t-1}}} 
+ 
\explaintermbrace{recurrent feedback}{\recurrenceMatrixForSystem \,\recurrenceFunctionForSystem_\recurrenceParameterForSystem \big(\observation_{t-1}^^\allEntities, \covariatesForSystem_{t-1}\big)} 
\bigg),
\end{align}
\begin{align}
\state_t^^\entity \cond \state_{t-1}^^\entity, \observation_{t-1}^^\entity, & \systemState_t   
 \sim \text{Cat-GLM}_\numEntityStates \bigg(
\label{eqn:instantiated_entity_transitions}   	
\explaintermbrace{transition preferences}{(\logTransitionMatrixForEntity_\entity^^{\systemState_t})^T \oneHotify{\state^^\entity_{t-1}}} 
+
\explaintermbrace{recurrent feedback}{\recurrenceMatrixForEntity_\entity^^{\systemState_t} \recurrenceFunctionForEntity_\recurrenceParameterForEntity \big(\observation_{t-1}^^\entity,  \covariatesForEntity_{t-1}^^\entity \big)} 
\bigg).
\end{align}
%
\end{subequations}
Across levels, common sources of information drive these utilities.
First, the \emph{state-to-state transition} term
selects an appropriate log transition probability vector from matrices $\logTransitionMatrixForSystem$, $\logTransitionMatrixForEntity$
via a one-hot vector
$\oneHotify{k}$ indicating the previous state $k$.
Second, \emph{recurrent feedback} governs the next term, via featurization functions for the system
$\recurrenceFunctionForSystem_\recurrenceParameterForSystem : \R^{\dimObservations \numEntities} \to \R^{\dimAllObservationsTransformed}$ and for entities
$\recurrenceFunctionForEntity_\recurrenceParameterForEntity : \R^\dimObservations \to \R^{\dimObservationsTransformed}$ 
with parameters $\recurrenceParameterForSystem,\recurrenceParameterForEntity$ (known or learned) and weights $\recurrenceMatrixForSystem, \recurrenceMatrixForEntity_j$.

Some applications may benefit from using available \emph{exogenous covariates} to inform recurrent feedback, such as using the location of the ball itself or the remaining time on the game clock to better model future basketball player movement. Our general framework denotes such covariates at the system-level as $\covariatesForSystem_{t-1}$ or entity-level as $\covariatesForEntity_{t-1}^^\entity$. When available, these covariates can drive transition probabilities; if no such variables exist they can be left out.
Note that inference (Sec.~\ref{sec:inference}) applies not merely to \Eqref{eqn:instantiated_transitions}, but to arbitrary instantiations. 

\textbf{Emission model.}
We generate the next observation for entity $j$ via a state-conditioned autoregression:
\begin{align}
\observation_t^^\entity  \cond \observation_{t-1}^^\entity , \state_t^^\entity  &\sim  \emissionsDistribution_{\emissionsParameter}, \quad \text{ where } \quad \emissionsParameter = \emissionsParameter(\observation_{t-1}^^\entity , \state_t^^\entity).
\label{eqn:emissions}
\end{align}
Users can select the emission distribution $\emissionsDistribution$ to match the domain of observed features $\observation_t^^\entity$: our later experiments use Gaussians for real-valued vectors and  Von-Mises distributions for angles.
The parameter $\emissionsParameter = \emissionsParameter(\observation_{t-1}^^\entity , \state_t^^\entity)$ of the chosen $\emissionsDistribution$ depends on the previous observation $\observation_{t-1}^^\entity$ and current entity-level state $\state_t^^\entity$. We focus on lag 1 autoregression here, though extensions that condition on more than just one previous timestep are possible.


\textbf{Priors.} The Appendix describes prior distributions $p(\parameters)$ on parameters assumed for the purpose of regularization. We use a ``sticky" Dirichlet prior \cite{foxStickyHDPHMMApplication2011} to obtain smoother system-level segmentations.

\textbf{Specification.}
To apply \HSRDM~to a concrete problem, a user must select the number of system states $L$ and entity states $K$ as well as functional forms of $\recurrenceFunctionForSystem,\recurrenceFunctionForEntity$. We assume that $g$ can be evaluated in $\mathcal{O}(\numEntities)$.  

\textbf{Special cases.}
If we remove the top-level system states $\systemState_{0:T}$ (or equivalently set $L=1$), our \HSRDM~reduces to separate models for each of $J$ entities, where each per-entity model is a recurrent autoregressive HMM (rAR-HMM) as in \citet{linderman2017bayesian}. If we remove the recurrence from our \HSRDM, we obtain a multi-level autoregressive HMM that we refer to as a \textit{hierarchical switching dynamical model} (\texttt{HSDM}). Later on, we  compare our model to both ablations in several experiments.  

\blue{FOR LATER: Perhaps we should provide a second pgm showing direct recurrence from ALL entity observations to each entity state.  But point out this is quadratic in nature.  We could compare performance of these recurrences in the basketball study.}

\blue{FOR LATER: Would the construction seem more elegant, at least pictorally in the pgm, if we re-label time for the system indices such that all edges are generated by the previous time step?  Is it weird to have the asymmetry that two sources of information come from the past (time $t-1$), and on source of information comes from the present (time $t$)? }

\blue{FOR LATER: Connection to \HSRDM~, as a recurrent hierarchical switching state space model.  In that generalization,  $\allObservations$ are latent rather than directly observed.  When they are directly observed, a \HSRDM~reduces to a collection of switching recurrent autoregressive models (also called recurrent autoregressive hidden Markov models CITE LINDERMAN), tied together by a second-level group observation.}

\blue{FOR LATER:  Think: Does ``hierarchical switching" truly apply to the general graphical model / CDL decomposition?  Or merely to some specific ways that $\systemState$ can modulate the parameters of the entity-transitions, as is done in the example? I THINK that since $\systemState$, any way that it could be included (even as an additive bias to transition probabilities) is effectively a ``switch", but it would be good to become more sure about this.}

\blue{FOR LATER: Add any additional assumptions that I need for inference (if any).}

\blue{TODO: Incorporate learnable transformations on covariates, for symmetry.}

\blue{TODO: Should we just combine the recurrence and covariates into a single term?  For example, in BB we want might to have ``distance to the ball" influence the entity transitions, but that combines both recurrence and covs.  If not (e.g. we don't end up using that in BB), we could just mention this an another construction.}


%

%

\section{Inference} \label{sec:inference}
Given observed time series $\observation_{0:T}^^\allEntities$, we now explain how to simultaneously estimate parameters $\theta$ and infer approximate posteriors over hidden states $s_{0:T}$ for the system and hidden states $\entityState_{0:T}^^\allEntities$ for all $\numEntities$ entities.
Because all system-level and entity-level states are unobserved, the marginal likelihood $p(\allObservations \cond \parameters)$ is a natural objective for parameter estimation. However, exact computation of this quantity, by marginalizing over all hidden states, is intractable.
Given $L$ system-level states and $K$ entity-level states, computing $p(\allObservations \cond \parameters)$ naively via the sum rule requires a sum over $(\numSystemStates \numEntityStates^\numEntities)^T$ values.
While the forward algorithm~\citep{rabinerTutorialHiddenMarkov1989} resolves the exponential dependence in time, the exponential dependence in the number of entities persists: $T \numSystemStates \numEntityStates^{2\numEntities}$ operations are required to do forward-backward on \HSRDMs.  This exponential dependence remains prohibitively costly even in moderate settings; for instance, when $(T, \numEntities, \numSystemStates,\numEntityStates) = (100, 10,2,4)$, a direct application of the forward algorithm requires around 220 trillion operations.  
 
Instead, we will pursue a structured approximation $q$ to the true (intractable) posterior over hidden states.
Following previous work~\citep{alameda2021variational,linderman2017bayesian}, we define
\begin{align}
q(\allSystemStates, \allStates^^\allEntities) 
= q(\allSystemStates) \; \textstyle q(\allStates^^\allEntities),
\label{eqn:variational_family_for_rHSDM}
\end{align}
intending $q(\allSystemStates, \allStates^^\allEntities) 
\approx 
p(\allSystemStates, \allStates^^\allEntities  \cond \allObservations^^\allEntities, \parameters)$.  Each factor of $q$ is parameterized separately, without dependence on $\theta$ or other random variables in the model. Each factor retains temporal dependency structure, avoiding the simplistic independence assumptions of complete mean-field inference \cite{barber2011bayesian}.

%

Using this approximate posterior $q$, we can form a variational lower bound on the marginal log likelihood $\VLBO \leq \log p(\allObservations^^\allEntities \cond \parameters)$, defined as $\VLBO [\parameters, q ] = \E_q \big[\log  p(\allObservations^^\allEntities, \allStates^^\allEntities, \allSystemStates \cond \parameters) \big] + \mathbb{H} \big[q(\allStates^^\allEntities, \allSystemStates)\big]$.
As shown in the Appendix, computation of this bound scales as $O(T \numEntities \numSystemStates^2 \numEntityStates^2)$,  crucially \textit{linear} rather than exponential in the number of entities $J$. This reduces the approximate number of operations required for inference on the earlier moderate example setting $(T{=}100, J{=}10,L{=}2,K{=}4)$ from 220 trillion to 64 thousand.

To estimate $\theta$ and $q$ given data $\allObservations$, we pursue
coordinate ascent variational inference (CAVI; \cite{blei2017variational}) on the \VLBO, known as variational expectation maximization \cite{beal2003variational} when  $\theta$ is approximated with a point mass.  Given a suitable initialization, we alternate between specialized update steps to each variational posterior or parameter:
\begin{align}
q(\allSystemStates) &\propto \exp \bigg\{ \E_{q(\allStates^^\allEntities) } [\log  p(\allObservations^^\allEntities, \allStates^^\allEntities, \allSystemStates \cond \parameters)]  \bigg\}, \label{eqn:CAVI_updates} 
\\
q(\allStates^^\allEntities) &\propto  \exp \bigg\{ \E_{q(\allSystemStates)} [\log  p(\allObservations^^\allEntities, \allStates^^\allEntities, \allSystemStates \cond \parameters)]  \bigg\},
\notag \\
\parameters = &\argmax_{\parameters} \bigg\{ \; \E_{q(\allStates^^\allEntities) q(\allSystemStates)  } \bigg[ 
\log  p(\allObservations^^\allEntities, \allStates^^\allEntities, \allSystemStates \cond \parameters) \bigg]  + \log p(\parameters) \bigg\}.   \nonumber
\end{align}
The updates above define the variational E-S step (VES~step), variational E-Z step (VEZ~step), and M-step, respectively. 
The first two formulas in~\eqref{eqn:CAVI_updates} are derived by following the well-known generic variational recipe for optimal updates~\citep{blei2017variational}. 
The M-step allows the inclusion of an optional prior on some or all parameters.
We've worked out efficient ways to achieve the optimal update for each step, as described below.
Full details about each step, as well as recommendations for initialization, are in the Appendix. We also share code via the link at bottom of page 1, built upon JAX for efficient automatic differentiation~\citep{bradburyJAXComposableTransformations2018}.


\textbf{VES step for system-level state posteriors.}
We can show the VES step reduces to updating the posterior of a surrogate Hidden Markov Model with $\numEntities$ independent autoregressive categorical emissions.  
Optimal variational parameters for this posterior can be computed via a dynamic-programming algorithm that extends classic forward-backward for an AR-HMM to handle recurrence. 
The runtime required is $\mathcal{O}\big( T \numEntities (\numStates^2 + \numStates \dimObservations + \numStates \numSystemStates + \numStates \dimCovariates) + T\numSystemStates^2\big)$.

\textbf{VEZ step for entity-level state posteriors.}
We can show that the VEZ update reduces to updating the posterior of separate surrogate Hidden Markov Models for each entity $j$ with autoregressive categorical emissions which recurrently feedback into the transitions.
Given a fixed system-level factor $q(s_{0:T})$, we can update the state posterior for entity $j$ independently of all other entities. This means inference is \textit{linear} in the number of entities $\numEntities$, despite the fact that the \HSRDM~ couples entities via the system-level sequence. The linearity arises even though our assumed mean-field variational family of \Eqref{eqn:variational_family_for_rHSDM} did not make an outright assumption that $q(\allStates^^\allEntities) = \prod_{j=1}^J q(\allStates^^j)$.
Optimal variational parameters for this posterior can again be computed by dynamic programming that extends the forward-backward algorithm. The runtime required to update each entity's factor is $\mathcal{O}\big( T \big[ \numStates^2 + \numStates \dimObservations^2 + \numStates \numSystemStates + \numStates \dimCovariates \big] \big)$. 

\blue{TODO: These statements need to be updated; the Appendix has corrected statements.}

\textbf{M step for transition/emission parameters.}
Updates to some parameters, particularly for emission model parameters when $H$ has exponential family structure (such as the Gaussian or Von-Mises AR likelihoods we use throughout experiments), can be done in closed-form. Otherwise, in general, we optimize $\theta$ by gradient ascent on the \VLBO objective. 
This gradient ascent has the same per-iteration cost as the computation of the \VLBO, with runtime
$\mathcal{O}(TJL^2K^2)$. If the recurrence function parameters $\psi$ or $\phi$ are \emph{learnable}, these can also be updated in the M step.

Like many variational methods, the alternating update algorithm in~\eqref{eqn:CAVI_updates} provides a useful guarantee. Each successive step will improve the \VLBO~objective (or if incorporating priors, the modified objective \VLBO + $\log p(\parameters)$) until convergence to a local maximum. This improvement assumes any M step that uses gradient ascent relies on a suitable implementation that guarantees a non-decrease in utility.

\blue{TODO: Consider making these statements more complete - we are ignoring here some details (quadratic complexity in obs dim due to autoregression, effect of covariates and recurrence, etc.) Or explicitly state that we ignore those things here for big picture; details are given below.}   

\blue{TODO: Either in the conclusion of this paragraph, or elsewhere, provide this interesting parallel:
\begin{itemize}
\item Ordinary time series models:  naive marginalization has exponential dependence in time, but forward backward converts it to linear.	
\item Group time series models: forward backward has exponential dependence in the number of entities, but structured VI converts it to linear. 
\end{itemize}
}

\blue{TODO: Mention theoretical and empirical reasons why this manuever should give a good approaximation}

\blue{TODO: We probably don't want to pick on fwd-bwd too much - could be confusing since we use it in our variational updates.}

\blue{TODO: Should we mention the complexity of naive marginalization, $10^{385}$ operations ?}

%
%
\blue{TODO: Perhaps relate how switching SSM's uses independent discrete-continuous chains to how we are using independent discrete-discrete-continuous chains.}

\blue{TODO: Consider calling out here or elsewhere that the skip-level recurrence preserves linear complexity in the number of entities, whereas entity-state-to-system-state recurrence is \textit{exponential} in the number of entities. }

\blue{TODO: Perhaps help motivate how ``nice" the linear inference is by comparing to what happens if there is recurrent feedback from entity states to system states.  For example, as we saw in te other document, this construction gives us  emissions ``from above", and so the VEZ step gives the posterior of a factorial HMM.}
 
\blue{TODO: Mention that a key to making inference work is that for recurrent autoregressive HMM's, both \textit{filtering} and \textit{smoothing} can be done using the same recursions as used in a classical HMM, except that the variable interpretations differ for both the emissions step and transition step. Give formal statement and proof in the appendix.  From a graphical modeling perspective, the result relies on the fact that while each node (observation or observation) can have \textit{many} observation parents, it can have only \textit{one} observation parent (namely, the closest in time from the present or past).}

 \section{Related Work}
Below we review several threads of the scientific literature in order to situate our work.

\textbf{Continuous representations of individual sequences.}
Other efforts focus on latent continuous representations of individual time series. These can produce competitive predictions, but do not share our goal of providing a segmentation at the system and entity level into distinct and interpretable discrete regimes. 
Probabilistic models with continuous latent state representations are often based on classic linear dynamical system (LDS) models~\citep{shumwayApproachTimeSeries1982}. 
Deep generative models like the Deep Markov Model \citep{krishnanStructuredInferenceNetworks2017} and DeepState \citep{rangapuramDeepStateSpace2018} extend the LDS approach with more flexible transitions or emissions via neural networks.

\textbf{Discrete state representations of individual sequences.}
Our focus is on discrete state representations which provide interpretable segmentations of available data, a line of work that started with classic approaches to entity-level-only sequence models like hidden Markov models (HMM)~\citep{rabinerTutorialHiddenMarkov1989} and autoregressive hidden Markov models (AR-HMM) \citep{ghahramani2000variational}, later extended to include both discrete and continuous latent representations as in switching-state linear dynamical systems (SLDS) \citep{alameda2021variational}. Recent efforts such as DSARF \citep{farnooshDeepSwitchingAutoRegressive2021} and DS3M \citep{xuDeepSwitchingState2021} have extended such base models to non-linear transitions and emissions via neural networks.  All of these efforts still represent each time series with only entity-level (not system-level) dynamics, and do not incorporate recurrent feedback to guide the evolution of latent variables.

\textbf{Discrete states via recurrence on continuous observations.}
\citet{linderman2017bayesian} add a notion of \emph{recurrence} to classic AR-HMM and SLDS models, increasing the flexibility in each timestep's transition distribution by allowing dependence on the previous continuous features, not just the previous discrete states. Later work has extended recurrence ideas in several directions that improve entity-level sequence modeling, such as multi-scale transition dependencies via the tree-structured construction of the TrSLDS~\citep{nassarTreeStructuredRecurrentSwitching2019}, recurrent transition models via SNLDS \citep{dongCollapsedAmortizedVariational2020},  or  recurrent transition models that can explicitly model state durations via RED-SDS~\citep{ansariDeepExplicitDuration2021}.
To model multiple recordings of worm neural activity, 
\citet{lindermanHierarchicalRecurrentState2019} pursue recurrent state space models that are described as \emph{hierarchical} because they encourage similarity between each worm entity's custom dynamics model via common parameter priors in hierarchical Bayesian fashion. 
Their model assumes only entity-level discrete states. 
\blue{TODO: Relate to Linderman's scheme for modeling groups using his recurrent models.}

\textbf{Multi-level discrete representations.}
\citet{stanculescuHierarchicalSwitchingLinear2014} developed a hierarchical switching linear dynamical system (HSLDS) for modeling the vital sign trajectories of individual infants in an intensive care unit. The root level of their directed graphical model assumes a discrete state sequence (analogous to our $s$) indicating whether disease was present or absent in the individual over time, while lower level discrete states (analogous to our $z$) indicate the occurrence of specific ``factors'' representing clinical events such as brachycardia or desaturation. 
While their graphical model also contains a multi-level discrete structure, we emphasize three key differences.   First, they require \textit{fully-supervised} data for training, where each timestep $t$ is \emph{labeled} with top-level and factor-level states. In contrast, our structured VI routines to simultaneously estimate parameters and hidden states in the \textit{unsupervised} setting are new.  Second, their model does not incorporate recurrent feedback from continuous observations. Finally, they model individual time series not multiple interacting entities. 

More recently, hierarchical time series models composed of Recurrent Neural Networks (RNNs) have been proposed for dynamical systems reconstitution \cite{brenner2024}. Different from our work, this framework is not designed for modeling entity interactions within a single system, but modeling shared properties among multi-domain dynamical systems for time series transfer learning.

Lastly, Hierarchical Hidden Markov Models (HHMMs) \citep{fineHierarchicalHiddenMarkov1998} and their extensions \citep{buiHierarchicalHiddenMarkov2004,hellerInfiniteHierarchicalHidden2009} describe a single entity's observed sequence with multiple levels of hidden states. The chief motivation of the HHMM is to model different temporal length scales of dependency within an individual sequence. While HHMMs have been applied widely to applications like text analysis \citep{skounakisHierarchicalHiddenMarkov2003} or human behavior understanding \citep{nguyenLearningDetectingActivities2005}, to our knowledge HHMMs have not been used to coordinate multiple entities overlapping in time.


\textbf{Models of teams in sports analytics.} 
\citet{ternerModelingPlayerTeam2021} survey approaches to player-level and team-level models in basketball.
\citet{millerPossessionSketchesMapping2017} apply topic models to tracking data to discover how low-level actions (e.g. run-to-basket) might co-occur among teammates during the same play.
\citet{metuliniModellingDynamicPattern2018} model the convex hull formed by the court positions of the 5-player team throughout a possession via one system-level hidden Markov model.
In contrast, our work provides a coordinated two-level segmentation representing the system as well as individuals.

\textbf{Personalized models.}
Several switching state models assume each sequence in a collection have unique or personalized parameters, such custom transition probabilities or emission distributions \citep{severson2020personalized,alaa2019attentive,foxJointModelingMultiple2014}. In this style of work, entity time series may be collected asynchronously, and entities are related by shared priors on their parameters.
In contrast, we focus on entities that are synchronous in the same environment, and relate entities directly via a system-level discrete chain that modifies entity-level state transitions.

\textbf{Models of coordinated entities.}
Several recent methods do jointly model multiple interacting entities or ``agents'', often using sophisticated neural architectures.
\citet{zhanGeneratingMultiAgentTrajectories2019} develop a variational RNN where trajectories are coordinated in short time intervals via entity-specific latent variables called ``macro-intents''.
\citet{yuanAgentFormerAgentAwareTransformers2021} develop the AgentFormer, a transformer-inspired stochastic multi-agent trajectory prediction model.
\citet{alcornBaller2vecLookAheadMultiEntity2021} develop baller2vec++, a transformer specifically designed to capture correlations among basketball player trajectories.
\citet{xu2022groupnet} introduce GroupNet to capture pairwise and group-wise interactions.
Unlike these approaches, ours builds upon switching-state models with \emph{closed-form} posterior inference and produces \emph{discrete} segmentations. Our approach may also be more sample efficient for applications with only a few minutes of data, as in Sec.~\ref{sec:soldiers}.

\textbf{Models of interaction graphs.}
Some works \citep{wuDiscoveringNonlinearRelations2020,loweAmortizedCausalDiscovery2022} seek to learn an interaction graph from many entity-level time series. In such a graph, nodes correspond to entities and edge existence represents a direct, pairwise interaction between entities. Other works~\citep{kipfNeuralRelationalInference2018,webbFactorisedNeuralRelational2019} assume a fully-connected graph, but can learn to annotate each edge with a different \emph{discrete type} representing different kinds of interaction. One possible type may be hard-coded to mean a ''non-edge'' for no interaction. Intentional priors can control the graph sparsity (frequency of non-edge labels). Recently, the \emph{GRAph Switch-
ing dynamical Systems (GRASS)} approach \citep{liuGraphSwitchingDynamical2023} models interactions via a latent graph whose edges change dynamically over time.

For some applications, discovering possible pairwise interactions or interaction types that influence data is an explicit goal. In our chosen applications (e.g. basketball player movements), domain expertise indicates the graph is fully-connected. Moreover, when the graph is fully connected, interaction graph approaches burdensomely require runtimes that are quadratic in the number of entities $J$. In contrast, our approach models system-level dynamics explicitly with a more affordable runtime cost that is linear in $J$. Very recent work by \citet{wangStructuralInferenceDynamics2024} offers a different route to scaling to $J$ in the hundreds. 
Their approach can estimate directed graph structure by combining a variational dynamics encoder with partial correlation ideas. 



\section{Experiments}

We now demonstrate our model's utility across several experiments. As we aim to highlight our model across two tasks - multi-step-ahead forecasting and interpretability of system dynamics - we show one of each on synthetic and real datasets. In all but the first task, the data generation model is either unknown or not the same as the HSRDM. Within each task, we compare our model's performance to task-specific competitive baselines. We also compare to ablations that remove either the top-down influence of system-level hidden states or bottom-up recurrent feedback from observations. Overall, we see that our compact and efficient HSRDM is able to outperform alternatives with respect to discovering hidden system dynamics and maintain similar or better prediction performance than computationally intensive neural network methods.

\subsection{FigureEight: Synthetic task of forecasting coordinated 2D trajectories}
\label{sec:fig8}
To illustrate the potential of our \HSRDM~for the purpose of \textbf{high-quality forecasting} of several coordinated entities, we study a synthetic dataset we call \emph{FigureEight}.
In the true generative process (detailed in App.~\ref{sec:appendix_fig8}), 
each entity switches between clockwise motion around a top loop and counter-clockwise motion around a bottom loop. The overall observed entity-level 2D spatial trajectory over time approximates the shape of an ``8'' as in Fig.~\ref{fig:fig8_forecasts}.
Each entity has two true states, one for each loop, with a specific Gaussian vector autoregression process for each.
Transition between these loop states depends on both top-down and bottom-up signals in the data-generating process.
For top-down, a binary system-level state sets which loop is favored for all entities at the moment. 
For bottom-up, switches between loop states are only probable when the entity's current position is near the origin, where the loops intersect. 
Though coordinated, entity trajectories are not perfectly synchronized, varying due to individual rotation speeds and initial positions. 

We generate data for 3 entities over 400 total timesteps. For training, we provide complete data for the first two entities (times 1-400) and partial data for the last entity (times 1-280). The prediction task of interest is \emph{entity-specific partial forecasting}: 
estimate the remaining trajectory of entity $j=3$ for times 281-400 (120 time steps), given partial information (timesteps 1-280) for that entity and full information (1-400) for other entities.
The true trajectory for this heldout window is illustrated in Fig.~\ref{fig:fig8_forecasts}: we see a smooth transition over time from the top loop to the bottom loop of the "8". We wish to compare our method to competitors at estimating this heldout trajectory.

\textbf{Baseline selection.}
To show the benefits of modeling system-level dynamics, we compare to baseline methods that can produce high-quality forecasts with discrete latent variables, yet only model entity-level (not system-level) dynamics and can do partial forecasts. First, we select the \emph{deep switching autoregressive factorization model} (\DSARF; \citep{farnooshDeepSwitchingAutoRegressive2021}). This model represents recent state-of-the-art forecasting performance and uses
deep neural networks to flexibly define transition and emission structures yet still can produce discrete segmentations. 
Second, we compare to a \emph{recurrent autoregressive hidden Markov model} (\rARHMM; \citep{linderman2017bayesian}). This is an ablation of our method that removes our system-entity hierarchy. 
This experiment requires \emph{partial forecasting} of one entity at times 281-400 given context from other entities. Some neural net baselines like \Agentformer~and \GroupNet~considered in later experiments do not easily handle such partial forecasting in released code, so we exclude them here on this task.

\textbf{Entity-to-system strategy.}
Since DSARF and rAR-HMM baselines each only model entity-level dynamics, for each one we try three different \textit{entity-to-system strategies} to convert any entity-only model to handle a system of entities.
First, complete \textit{Independence} fits a separate model to each entity's data only, with no information flow between entities. 
Next, complete \textit{Pooling} fits a single model on $N' = N * J$ total sequences, treating each sequence $n$ from each entity $j$ as an \emph{i.i.d.} observation.
Finally, \textit{Concatentation} models a multivariate time series of expanded dimension $D' = J\cdot D$ constructed by stacking up all entity-specific feature vectors $\observation_t^^1, \observation_t^^2, \ldots \observation_t^^J$ at each time $t$. 


\textbf{Method configuration.}
For the \HSRDM,~we set $L=2$ system states and $K=2$ entity states. 
For emission model, we pick a Gaussian autoregressive to match the true process.
We do not use any system-level recurrence $g$ as the true data generating process does not have this feedback. 
We set entity-level recurrence $f$ to a radial basis function indicating distance from the origin.  
While we do not expect this $f$ to improve training fit, we do expect it to improve forecasting, as it captures a key aspect of the true process: that switches between loops are only probable near the origin. 

For all baselines (\DSARF~and \rARHMM), we set the number of entity states at $K=2$ to match the intended ground truth. The \rARHMM~uses the same emission model and entity-level recurrence $f$ as our \HSRDM. 

\textbf{Hyperparameter tuning.}
Optimal hyperparameters for each model and entity-to-system strategy are determined independently.
For \DSARF, we tune its number of spatial factors and the lags indicating how the next timestep depends on the past.
We don't tune any specific hyperparameters of our \HSRDM~or the \rARHMM. 

\textbf{Training.}
For each tested method (where method means a model and (if needed) entity-to-system strategy), at each hyperparameter we train via 5 separate trials with different random seeds implying distinct initializations of parameters. This helps avoid local optima common to models with latent discrete state. \DSARF~is intentionally allowed more trials (10) to stress fair evaluation to external baselines. 
All \DSARF~models were trained with 500 epochs and a learning rate of $0.01$. The \HSRDM~and its ablations are trained with 10 CAVI iterations. All models required similar training time on this small dataset: $<1$ minute per run. 
Reproducible details for all methods (including specific hyperparameters) are in the Appendix and released code. 

\textbf{Forecasting.}
To forecast, from each trial's fit model we draw 5 samples of the heldout trajectory $x^{3}_{281:400}$ for target entity $j=3$ over the time period of interest. 
We keep the ``best'' sample, meaning the sample with lowest mean squared error (MSE) compared to the true (withheld) trajectory.
Each method can be represented by the trial and hyperparameter setting with lowest best-sample MSE, or via summary statistics across samples, trials, and hyperparameters.  

\begin{figure}[!t]
    \centering
    \includegraphics[width=1\linewidth]{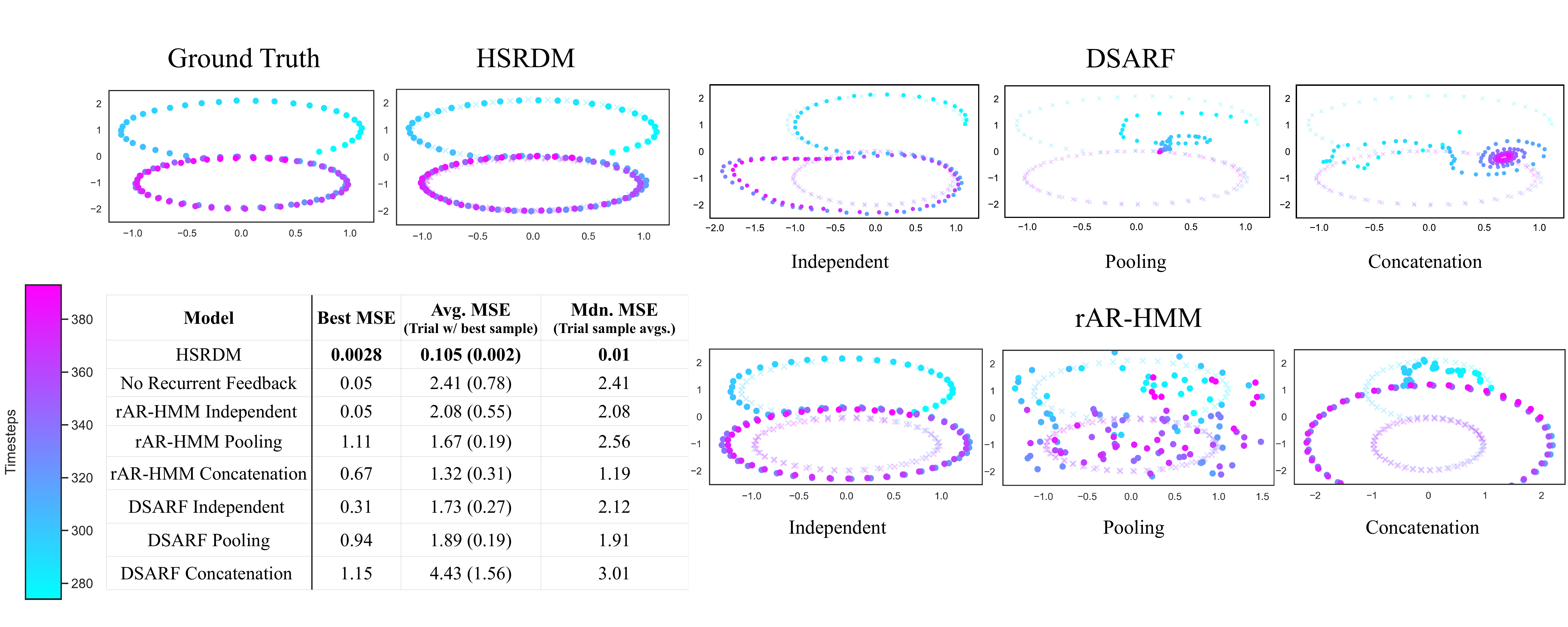}
    \caption{Predictions for the heldout time segment of one entity in \textit{FigureEight} task. Models were trained on all data (1-400) from entities 1-2 and times 1-280 for entity 3, then asked to forecast the heldout period (times 281-400) for entity 3. Colors correspond to individual time steps, shown in the bar on the bottom left. X-marks in each panel show the ground truth trajectory; solid dots show the model forecast. \textbf{Top left}: Our best \HSRDM~prediction closely matches the ground truth at all time points. 
     \textbf{Top right:} Best overall sample for the DSARF baseline under each strategy. 
    \textbf{Bottom right:} Best overall sample for the rAR-HMM baseline, under each strategy (Indep., Pool, and Concat., defined in Sec.~\ref{sec:fig8}) for adapting an entity-only model to our hierarchical setting. rAR-HMM Independent is a ``no system state'' ablation of our model.
    \textbf{Bottom left:} Table reports for each model: best mean-squared-error (MSE) across independent trials and forecast samples, average MSE (with standard error in parens) across the samples from the trial with the best MSE, and median MSE across sample averages from all trials. This table includes ablation results for a version of our \HSRDM~with no recurrent feedback. Its forecasting plot can be found in App.~\ref{section:fig8recurr}.
    }
    \label{fig:fig8_forecasts}
\end{figure}

\textbf{Results: Quantitative error.}
For each method, we report the best-sample MSE in Fig.~\ref{fig:fig8_forecasts} (bottom left). 
Our \HSRDM~outperforms others by a wide margin, scoring a best-sample MSE less than 0.003 compared to the next-best value of 0.05 and values of 0.31 - 1.75 among others.
To indicate reliability across forecast samples, we further report the average sample MSE from the trial that produces the best sample (with standard errors to indicate variation). This average is useful to distinguish between a trial that consistently produces quality forecasts from a trial that ``gets lucky.'' While the \rARHMM~\emph{Independent} model achieves a relatively low best MSE, the average MSE across all 5 samples from that trial is much worse than our method (2.08 vs. 0.105). Note that the \rARHMM~\emph{Independent} model is a ``no system state'' ablation of the regular \HSRDM. Our approach also outperforms a ``no recurrent feedback'' ablation (qualitative results in App.~ \ref{section:fig8recurr}).

To understand whether MSE is consistent across trials, we also report the median across trials of the average-sample MSE. For our~\HSRDM, the median-over-trials MSE is quite low (0.01), indicating that the~\HSRDM~is typically reliable. The~\HSRDM's best-sample trial happened to also yield one of the worst samples, which is why that trial's average-sample MSE is higher than the median. Comparing the best column to the median column, all methods show wide gaps indicating variation in quality across trials. There is a clear need to avoid local optima by considering many random initializations. 

\textbf{Results: Visual quality.}
Fig.~\ref{fig:fig8_forecasts} also shows the best sampled trajectory from each method.
The forecast from our proposed \HSRDM~looks quite similar to the true trajectory. In contrast, every competitor struggles to reproduce the truth.
The closest competitor method is the \rARHMM~with the Independent entity-to-system strategy. The main difference lies in how that model produced a larger outer circle for the bottom loop.

\subsection{NBA Basketball: Real task of forecasting 2D player trajectories}
\label{sec:basketball_experiment}
We next aim to evaluate the \HSRDM's forecasting capabilities on real movements of professional basketball team. Specifically, we model the 5 players of the NBA's Cleveland Cavaliers (CLE), together with their 5 opponents, across multiple games in an open-access dataset~\citep{linouNBAPlayerMovements2016} of player positions over time recorded from CLE's 2015-2016 championship season. 
To better evaluate the ability to model specific entities, we focused exclusively on 29 games involving one of CLE's four most common starting lineups. We randomly assigned these games to training (20 games), validation (4 games), and test (5 games) sets. 




We split each game into non-overlapping basketball \textit{event segments}, typically lasting 20 seconds to 3 minutes. Event segments contain periods of uninterrupted play (e.g. shot block $\rightarrow$ rebound offense $\rightarrow$ shot made) from the raw data, ending when there is an abrupt break in player motion or a sampling interval longer than the nominal sampling rate. Each event segment gives 2D court positions over time for all 10 players that form a multi-entity emission sequence $\allObservations^^\allEntities$. Each such sequence is modeled as an independent draw from our proposed \HSRDM~or competitor models. We standardized the court so that CLE's offense always faces the same direction (left), and downsampled the data to 5 Hz.

\textbf{Baseline selection.}
We compare the forecasting performance of the \HSRDM~to multiple competitors.  Because of  \DSARF's poor prediction performance on even our synthetic task, here we instead explore~\SNLDS~\cite{dongCollapsedAmortizedVariational2020} as a neural network baseline which can provide a flexible model of complex time series.  Because the \SNLDS~baseline is restricted to entity-level (not system-level) dynamics, we fit an independent \SNLDS~model to each player. To exemplify neural network methods that can predict trajectories of groups directly, we select \GroupNet~\cite{xu2022groupnet} and \Agentformer~\cite{yuanAgentFormerAgentAwareTransformers2021}, given \GroupNet~and \Agentformer's leading reputations for high-quality forecasting of multiple entities~\cite{xu2022groupnet, yuanAgentFormerAgentAwareTransformers2021}.     

We further consider two ablations of our method. First, removing system-level states yields an independent \rARHMM~\cite{linderman2017bayesian} for each player. Second, we remove recurrent feedback. 
As in \citet{yeh2019diverse}, we also try a simple but often competitive \textit{fixed velocity} baseline.

\textbf{Model configuration.}
The ground truth number of states is unknown; we pick $K=10$ entity states and $L=5$ system states. 
For our \HSRDM~and its ablations, our emissions distribution is a Gaussian vector autoregression with entity-state-dependent parameters (see Sec.~\ref{sec:appendix_basketball}).
System-level recurrence $g$ reports \textit{all} player locations $\observation_t^^\allEntities$ to the system-level transition function, allowing future latent states to depend on player locations. Following \citet{linderman2017bayesian}, our entity-level recurrence function $f$ reports an individual player's location $\observation_t^^\entity$ and out-of-bounds indicators to that player's entity-level transition function, allowing each player's next state probability to vary over the 2D court. We use a sticky prior for system-level state transitions with $\alpha=1$ and $\kappa=50$ (see Sec.~\ref{sec:models_and_hyperparameter_tuning} for details).

\textbf{Hyperparameters.} Optimal hyperparameters for each model and strategy are determined independently.  For \Groupnet, we used hyperparameters recommended by \citet{xu2022groupnet} from their own basketball models. For \Agentformer, we use the recommended architecture and training settings from \citet{yuanAgentFormerAgentAwareTransformers2021}, adjusting learning rates and number of epochs for our data (see App.~\ref{sec:appendix_basketball}). For \SNLDS, we use the architecture and training settings from \citet{ansariDeepExplicitDuration2021} to model an electricity dataset, adjusting certain hyperparameters to match the \HSRDM~well~(see App.~\ref{sec:appendix_basketball}).
We don't tune any hyperparameters of our \HSRDM~or its ablations. Ablations inherit hyperparameters like number of states where applicable. 

\textbf{Training time.} Our \HSRDM~is more computationally efficient for this task. Training an \HSRDM~on a 2023 Macbook with Apple M2 Pro chip on $n=1,5,20$ training games took 2, 15, and 45 minutes, respectively. Training \Agentformer~on an Intel Xeon Gold 6226R CPU took 1.5, 6, and 13 \emph{hours}, respectively.  That is, our \HSRDM~was 17-45 times faster to train.
We saw similar gains with the other neural network baselines; on the same 2023 Macbook, \HSRDM~trained 120 times faster than \Groupnet~and 10-50 times faster than~\SNLDS.

\textbf{Model size.} Our compact \HSRDM~has 9,930 parameters. In contrast, \GroupNet~has over 3.2 million parameters (320x larger) and \Agentformer~has over 6.5 million (650x larger).  The collection of \SNLDS~models has about 0.5 million parameters (50x larger), even though this approach does not model cross-entity interactions.

\textbf{Evaluation procedure.}
We randomly select a 6 second forecasting window within each of the 75 test set events. Preceding observations in the event are taken as context, and postceding observations are discarded. We sample 20 forecasts from each method for the 5 starting players on the Cavaliers. We report the mean distance in feet from forecasts to ground truth, with the mean taken over all events, samples, players, timesteps, and dimensions. We perform paired t-tests on the per-event differences in mean distances between our model vs competitors, using \citet{benjamini1995controlling}'s correction for multiple comparisons over positively correlated tests.  All tests were performed at the .05 significance level for two-sided tests. 

\begin{table}[!t]
\begin{subtable}{.48\textwidth}
    \resizebox{\textwidth}{!}{%
	\begin{tabular}{l c c c}
	   & $n=1$ & $n=5$ & $n=20$  
	\\
	\Groupnet & 11.6 (0.5) \, \greenplus & 12.2 (0.4) \, \greenplus & 12.4 (0.3)  \,\greenplus \\
	\fcolorbox{black}{white}{\HSRDM~(ours)} & \fcolorbox{black}{white}{15.5 (---)}  \, \phantom{\greenplus} & \fcolorbox{black}{white}{13.8 (---)}  \, \phantom{\greenplus}   & \fcolorbox{black}{white}{13.4  (---)}   \, \phantom{\greenplus}  \\
	No system state (\rARHMMs) & 15.9 (0.3) \, \phantom{\greenplus} & 15.4 (0.4)  \, \redminus & 15.9 (0.3) \, \redminus
	\\
	No recurrent feedback (\HSDM) &  16.7 (0.4) \, \redminus &  16.0 (0.3) \, \redminus &  16.0 (0.3) \, \redminus
	\\
	\SNLDSs  & 	17.2 (0.7)  \, \redminus & 	16.0 (0.8)  \, \redminus & 	16.1 (0.8)  \, \redminus \\
	Fixed velocity & 17.0 (0.6)  \, \redminus & 17.0 (0.6)  \, \redminus & 17.0 (0.5)  \, \redminus
	\\
	\Agentformer & 33.1 (0.4)  \, \redminus & 21.3 (0.7)  \, \redminus & 25.9 (0.4)  \, \redminus  \\
	\end{tabular}
    }
    \caption{Forecast error (in feet) vs. train set size (num. games $n$).}\label{tab:basketball_error_vs_n}
    \end{subtable}
\quad
\begin{subtable}{.48\textwidth}
    \resizebox{\textwidth}{!}{%
     \begin{tabular}{l l l}
  & \textit{Directional Variation} &  \textit{\% In Bounds}
 	\\
 	\HSRDM~(ours) &  .449 & .954 
 	\\
 	No system state (\rARHMMs)  & .631  (.013) \, \redminus &  .908 (.003) \, \redminus \\
 	No recurrent feedback (\HSDM)  &  .469 (.017)  &  .814 (.008) \, \redminus
    \\
    ~\\
 	~\\
 	\end{tabular}
    }
    \caption{Statistical comparisons to ablations.}    \label{tab:basketball_stats_vs_ablations}
\end{subtable}
\caption{Quantitative evaluation on \emph{Basketball}.}
\end{table}


\newcommand{\BBW}{1.1in} 
\begin{figure}[!t]
\vspace{-0.24cm} 
\centering
\setlength\tabcolsep{0.15pt} 
{\small 
\begin{tabular}{c c c c c c}
& \textbf{Lebron James} & \textbf{Kevin Love} & \textbf{T. Mozgov} & \textbf{J. Smith} & \textbf{M. Williams} \\
\makecell{\texttt{GroupNet} }
&
\includegraphics[width=\BBW, valign=c]{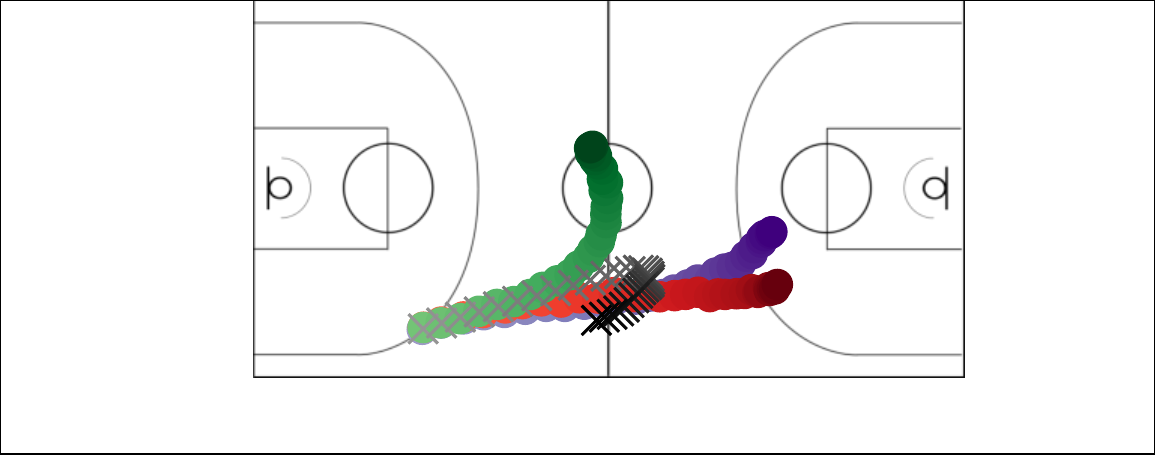} 
&
\includegraphics[width=\BBW, valign=c]{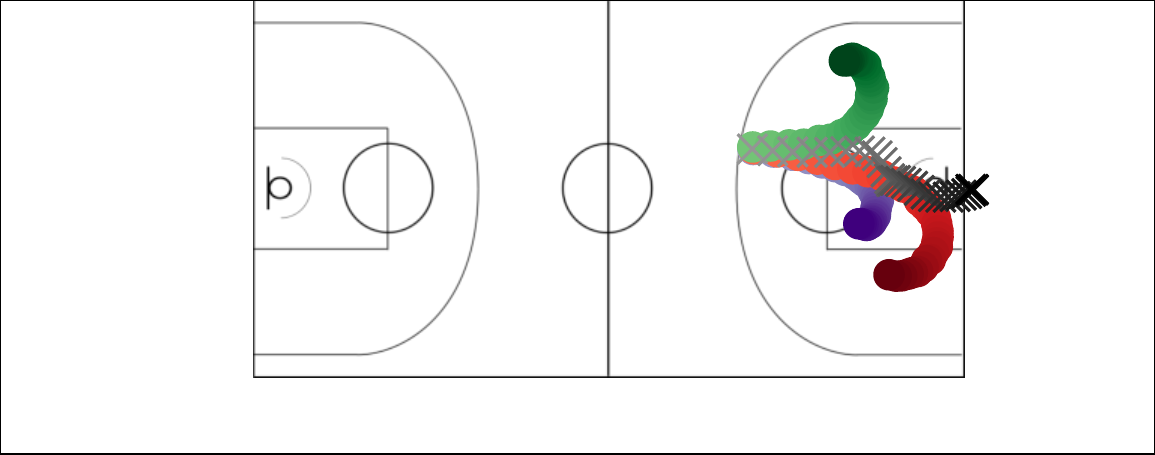} 
&
\includegraphics[width=\BBW, valign=c]{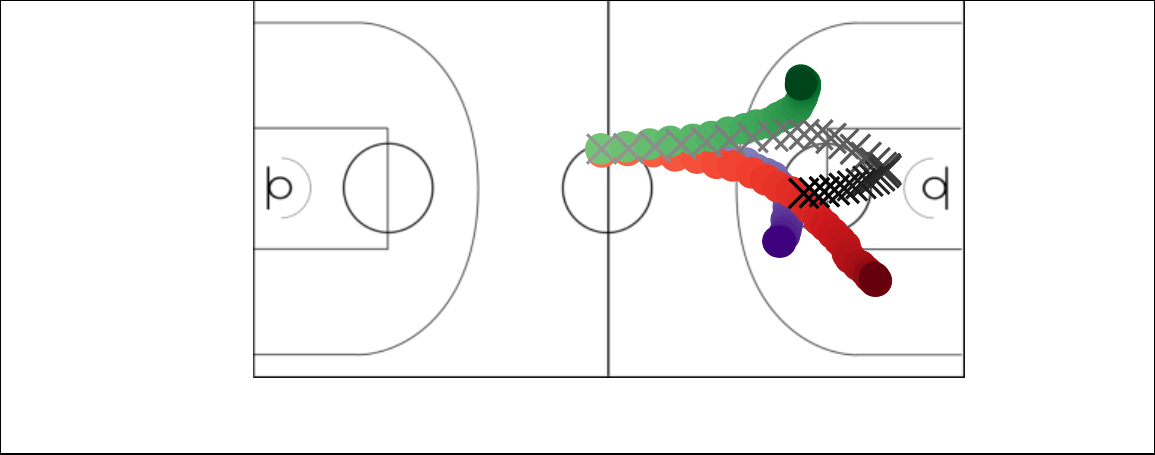} 
&
\includegraphics[width=\BBW, valign=c]{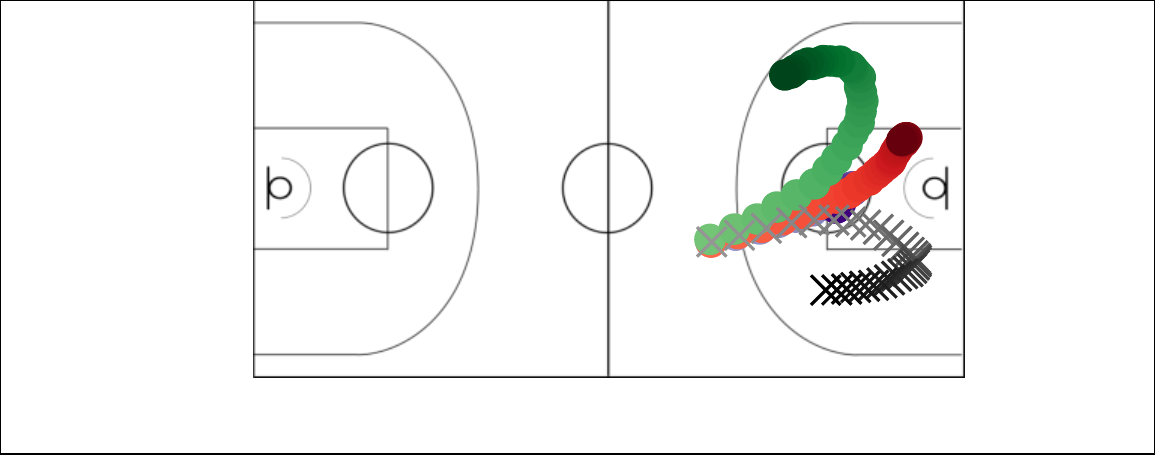} 
&
\includegraphics[width=\BBW, valign=c]{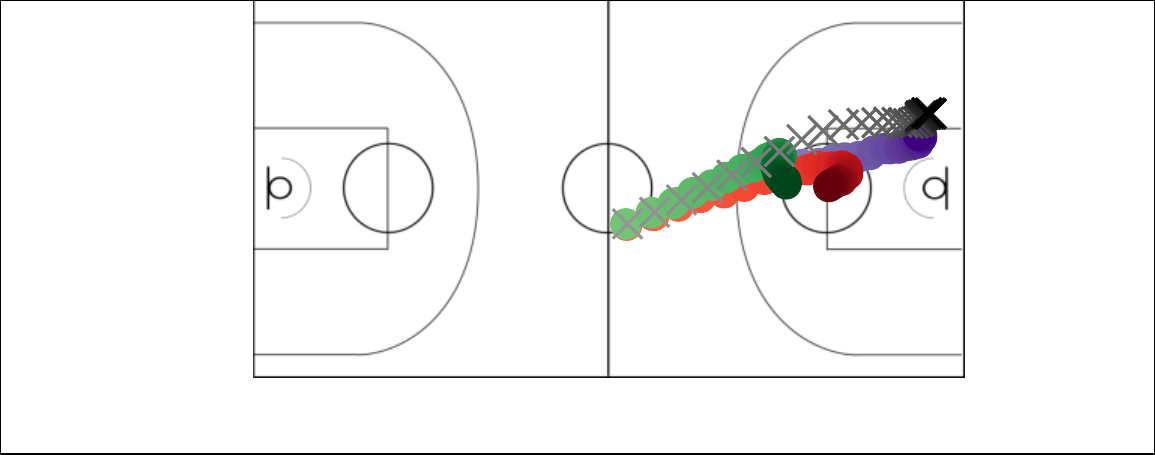} \\
\makecell{\HSRDM}
& 
\includegraphics[width=\BBW, valign=c]{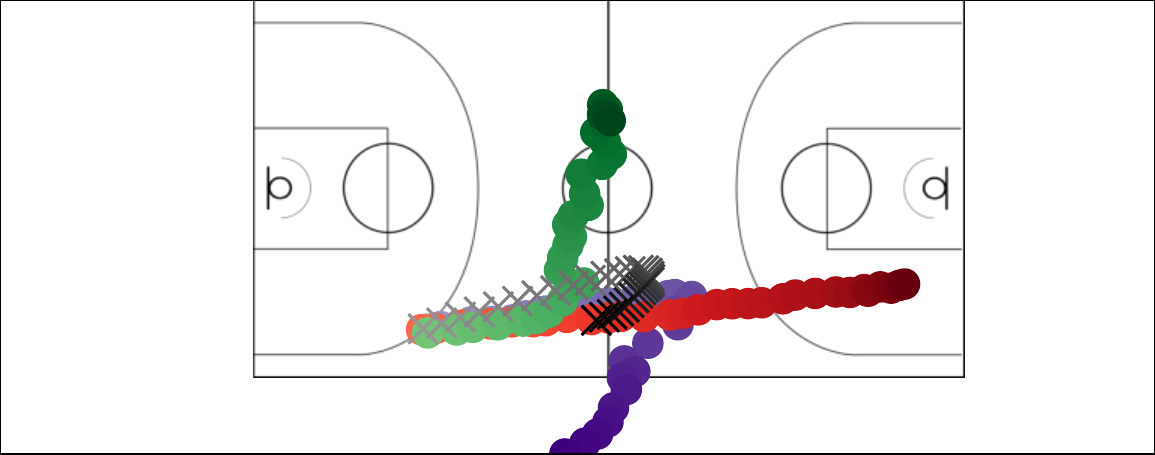} 
&
\includegraphics[width=\BBW, valign=c]{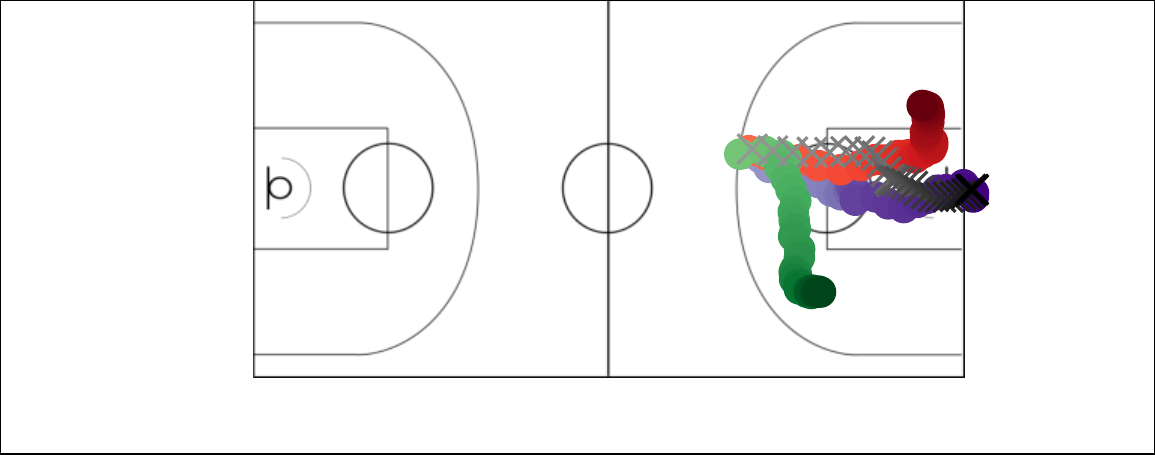} 
&
\includegraphics[width=\BBW, valign=c]{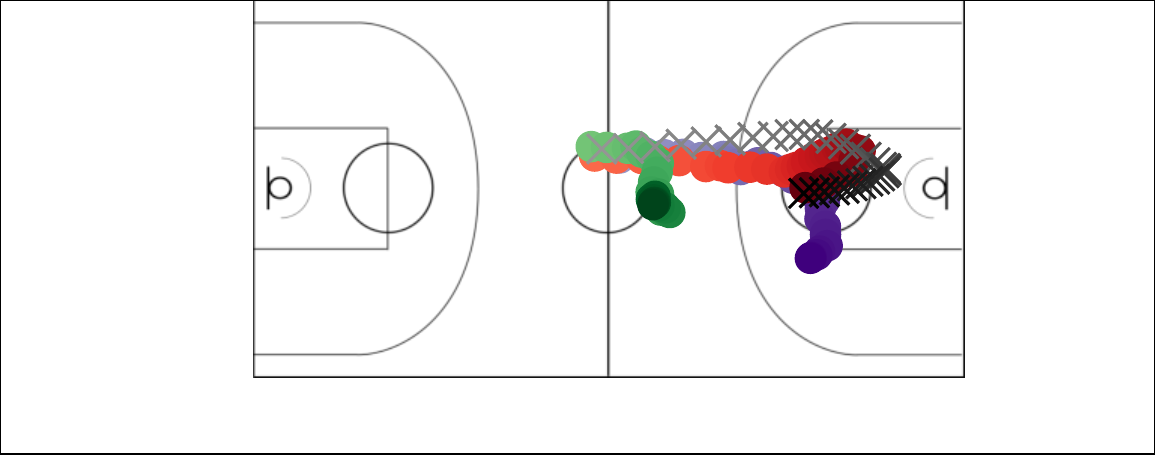} 
&
\includegraphics[width=\BBW, valign=c]{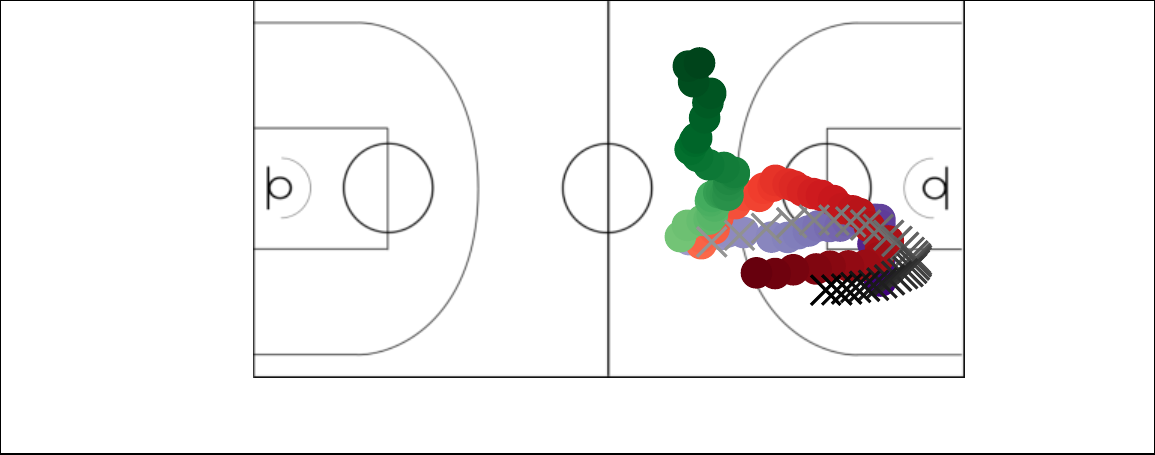} 
&
\includegraphics[width=\BBW, valign=c]{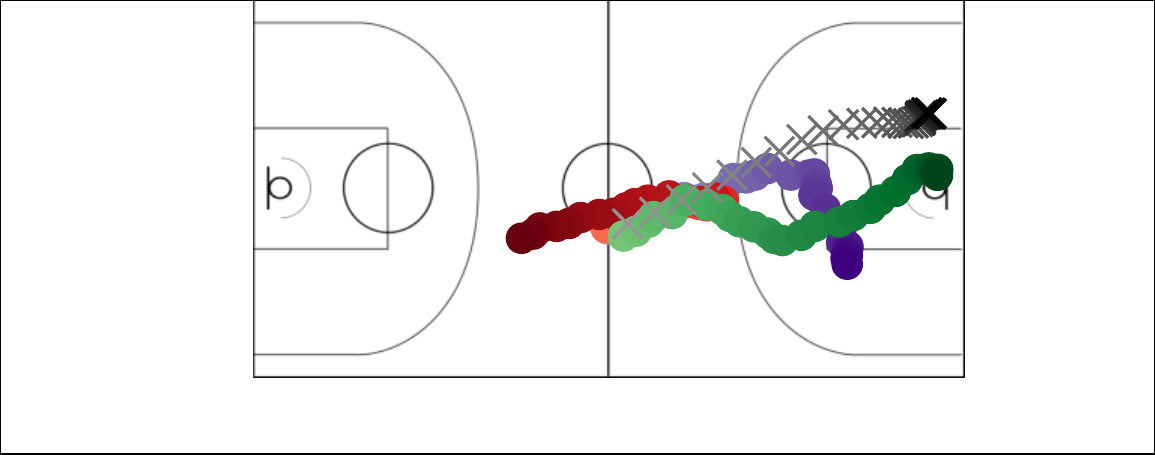} 
\\
\makecell{No system state~~ \\ (\rARHMMs)}
&
\includegraphics[width=\BBW, valign=c]{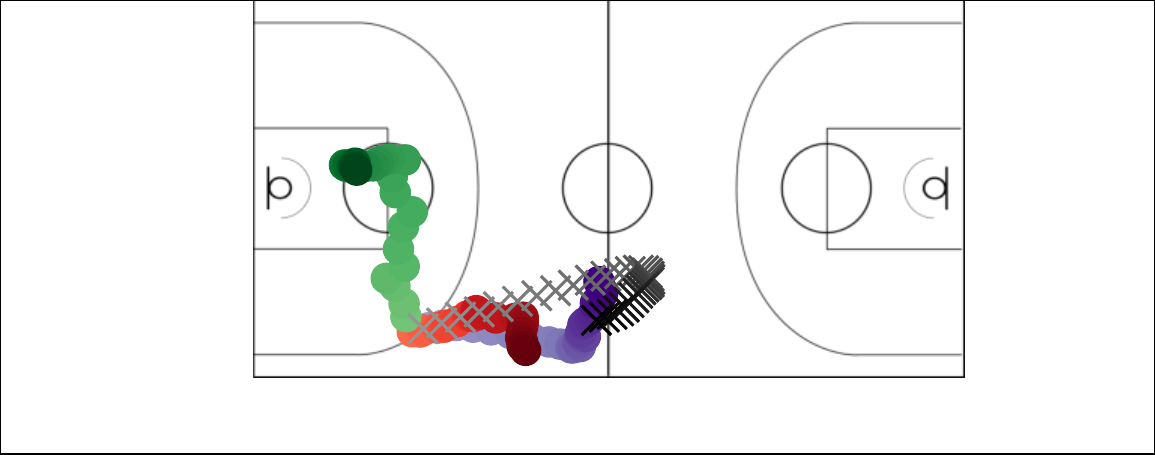} 
&
\includegraphics[width=\BBW, valign=c]{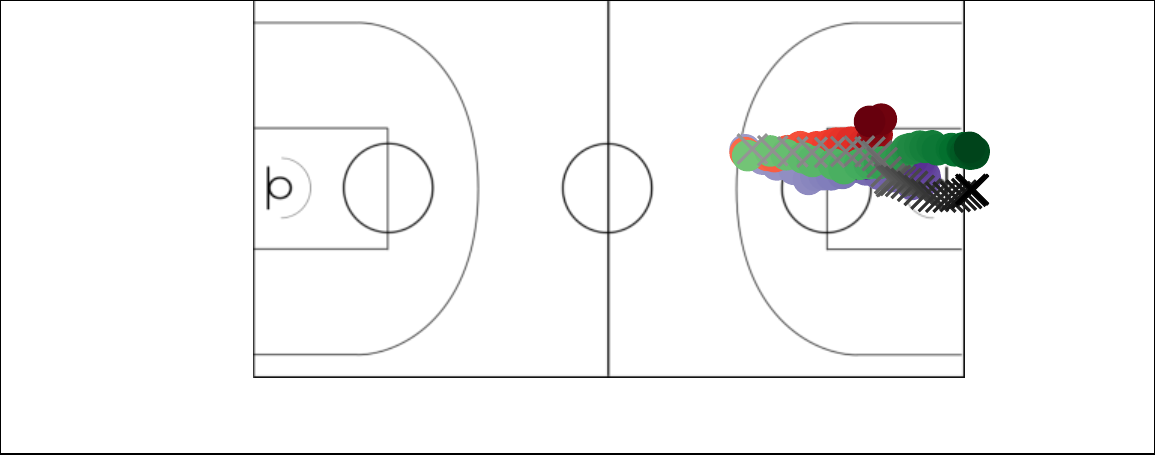} 
&
\includegraphics[width=\BBW, valign=c]{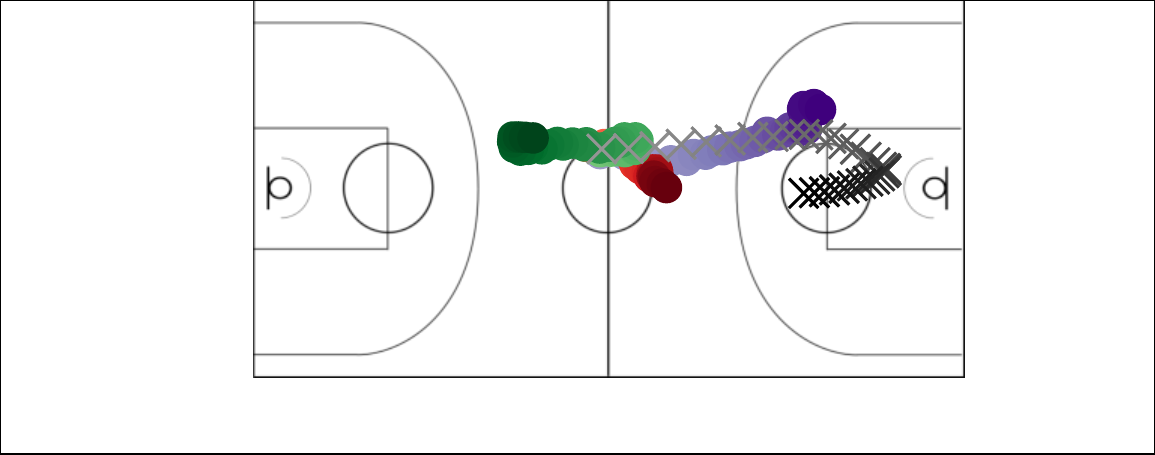} 
&
\includegraphics[width=\BBW, valign=c]{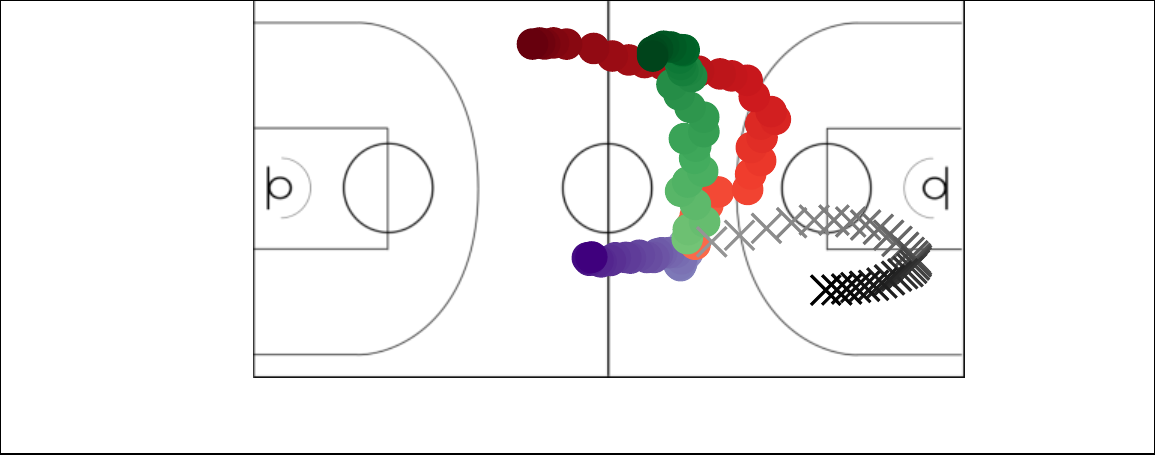} 
&
\includegraphics[width=\BBW, valign=c]{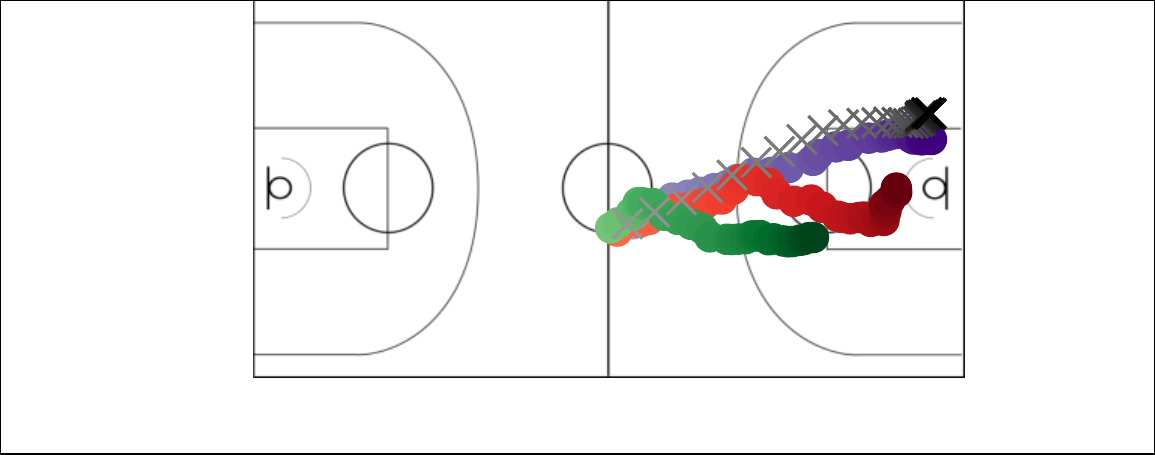} 
\\
\makecell{No recurrent \\ feedback (\HSDM)}
&
\includegraphics[width=\BBW, valign=c]{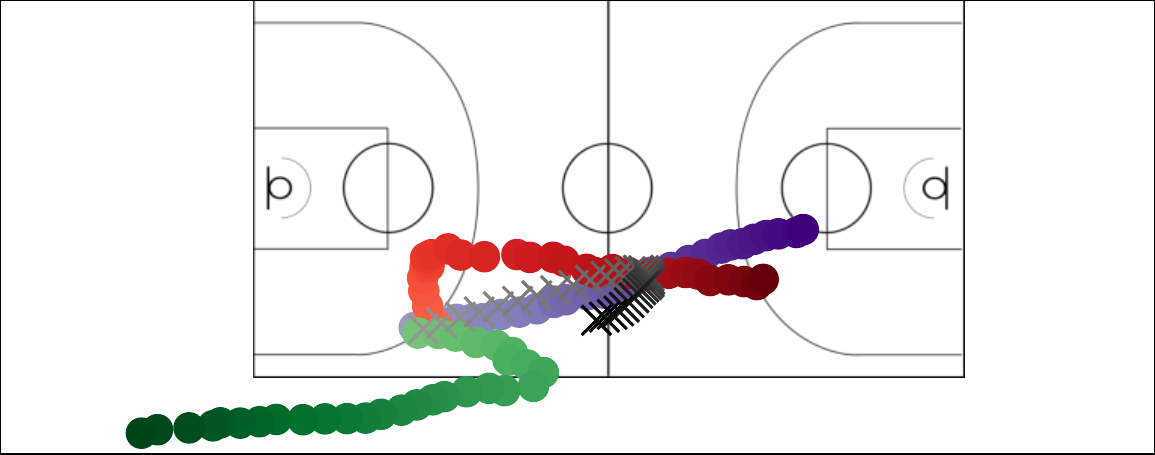} 
&
\includegraphics[width=\BBW, valign=c]{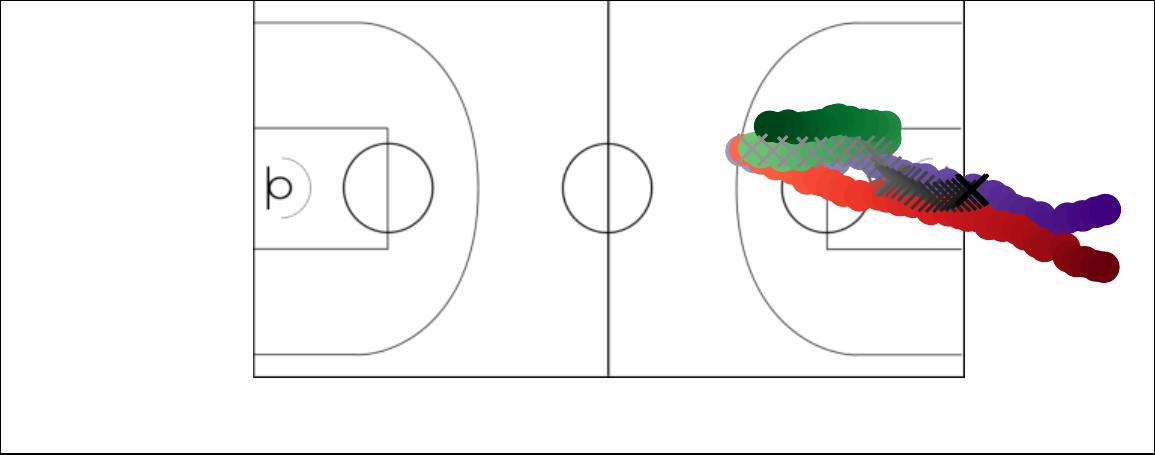} 
&
\includegraphics[width=\BBW, valign=c]{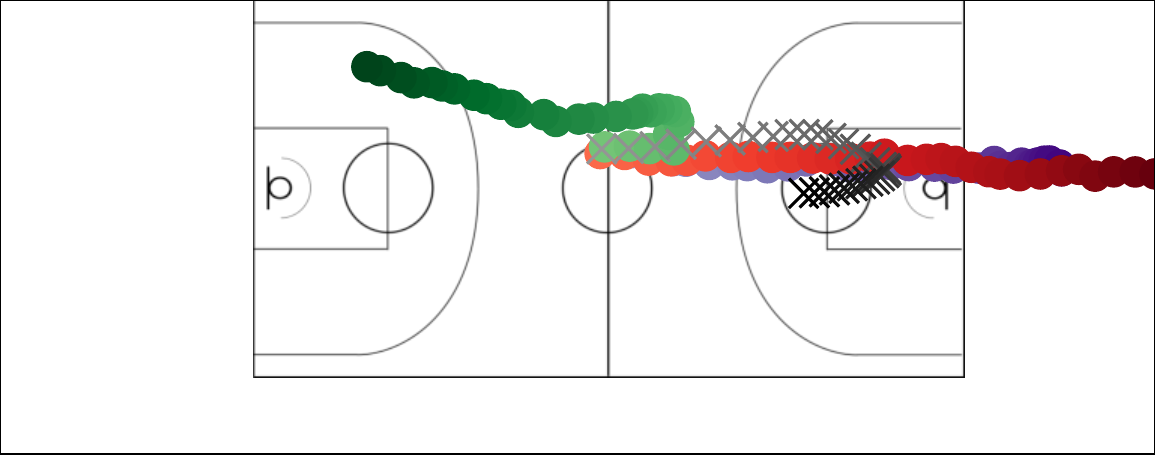} 
&
\includegraphics[width=\BBW, valign=c]{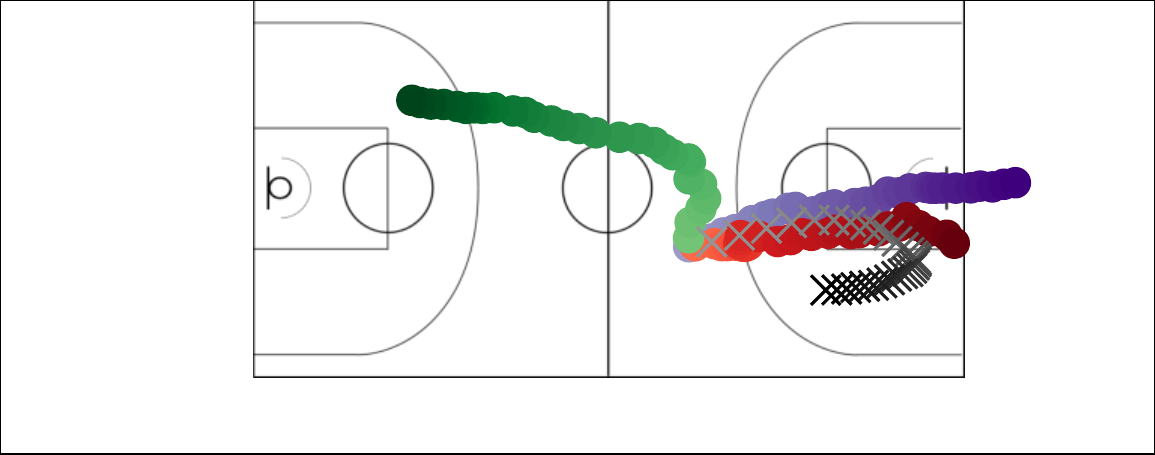} 
&
\includegraphics[width=\BBW, valign=c]{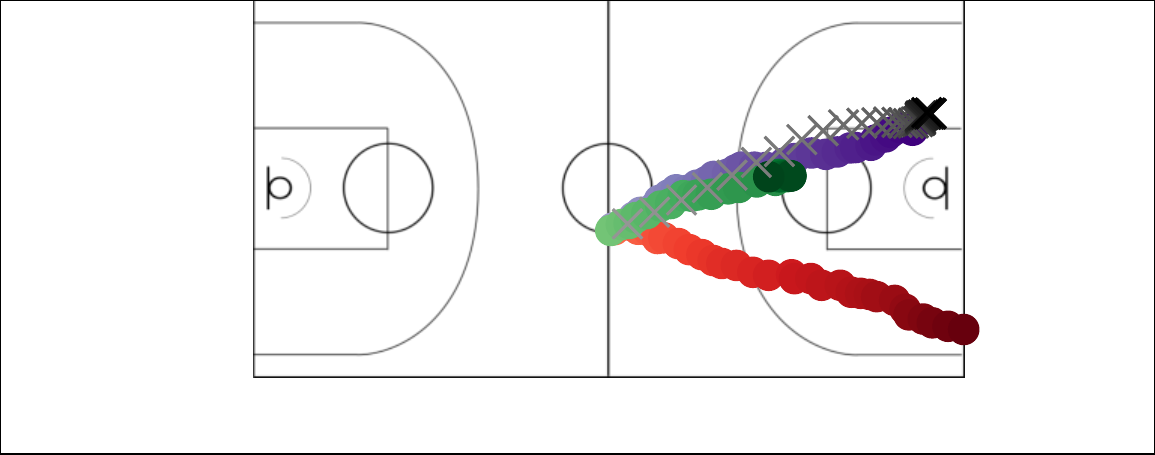} 
\\
\makecell{\texttt{SNLDSs} }
&
\includegraphics[width=\BBW, valign=c]{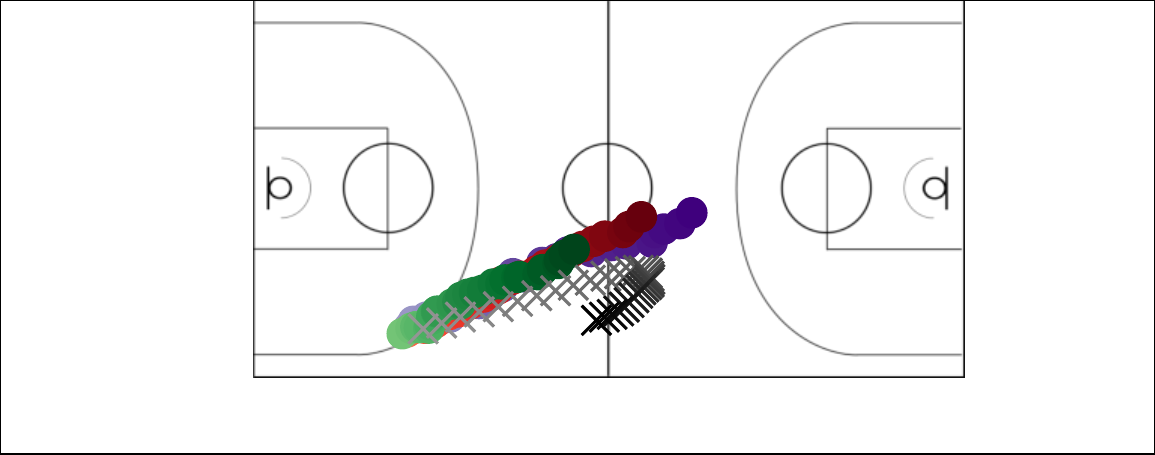} 
&
\includegraphics[width=\BBW, valign=c]{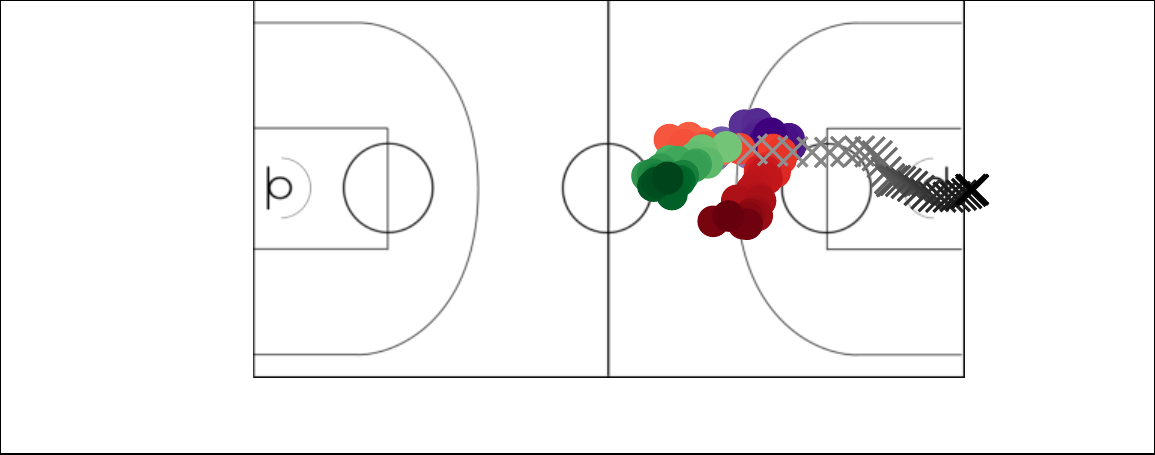} 
&
\includegraphics[width=\BBW, valign=c]{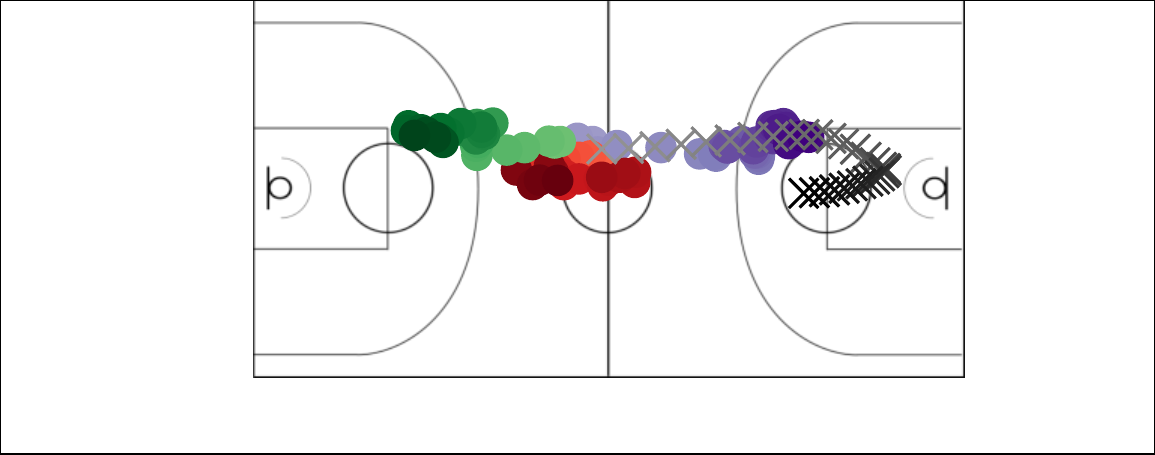} 
&
\includegraphics[width=\BBW, valign=c]{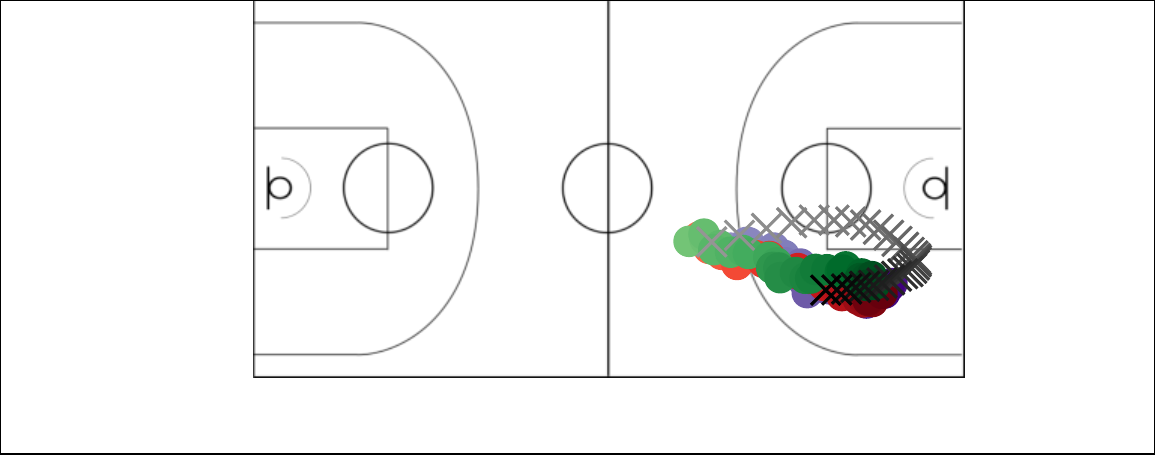} 
&
\includegraphics[width=\BBW, valign=c]{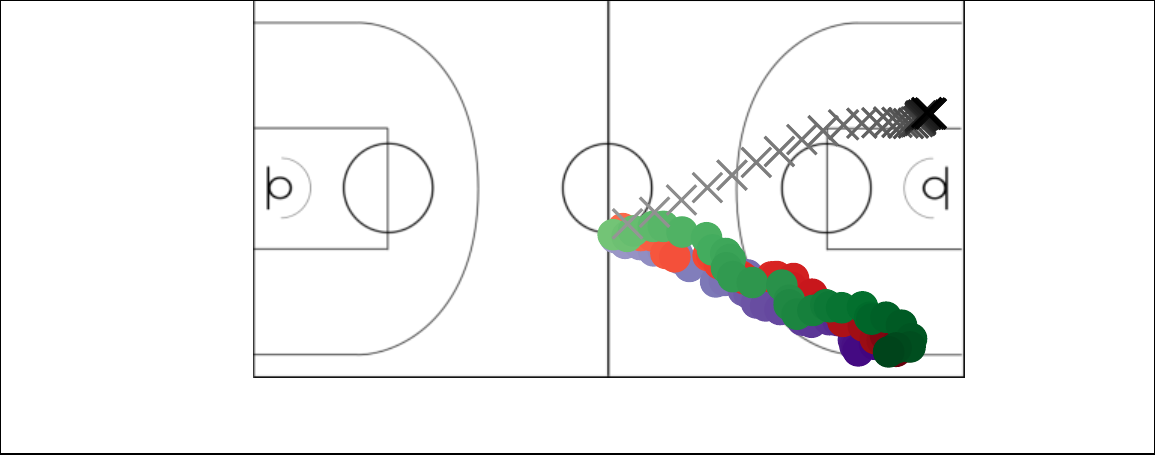}  \\
\makecell{\texttt{Agentformer} }
&
\includegraphics[width=\BBW, valign=c]{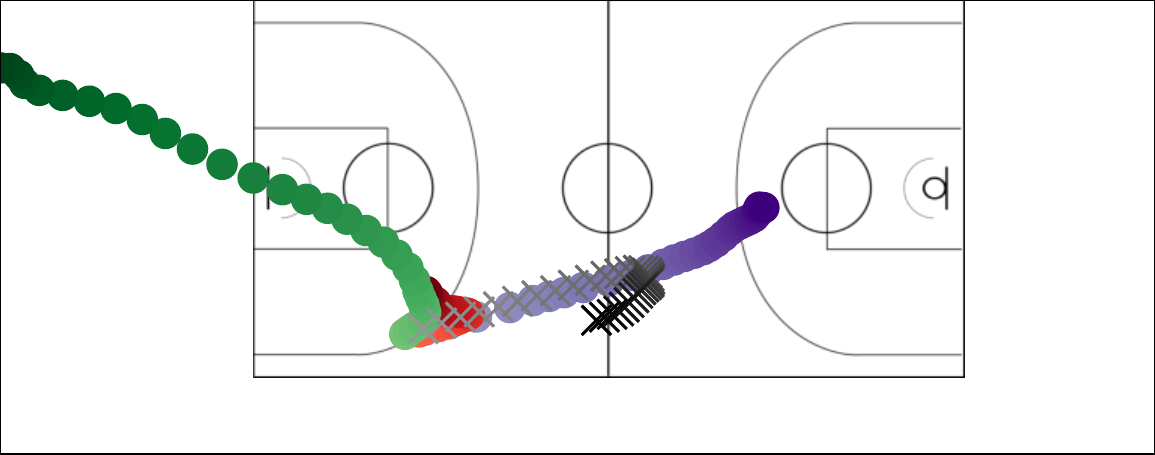} 
&
\includegraphics[width=\BBW, valign=c]{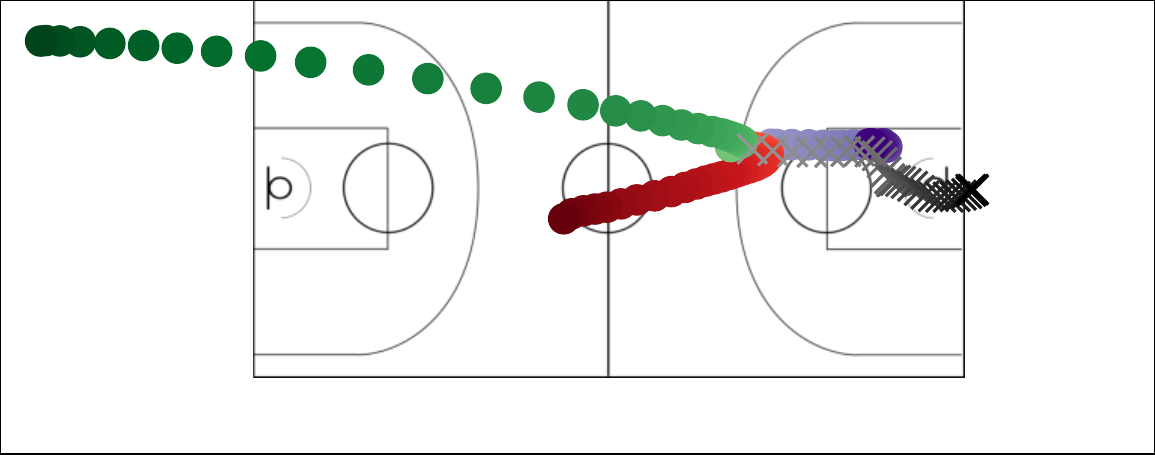} 
&
\includegraphics[width=\BBW, valign=c]{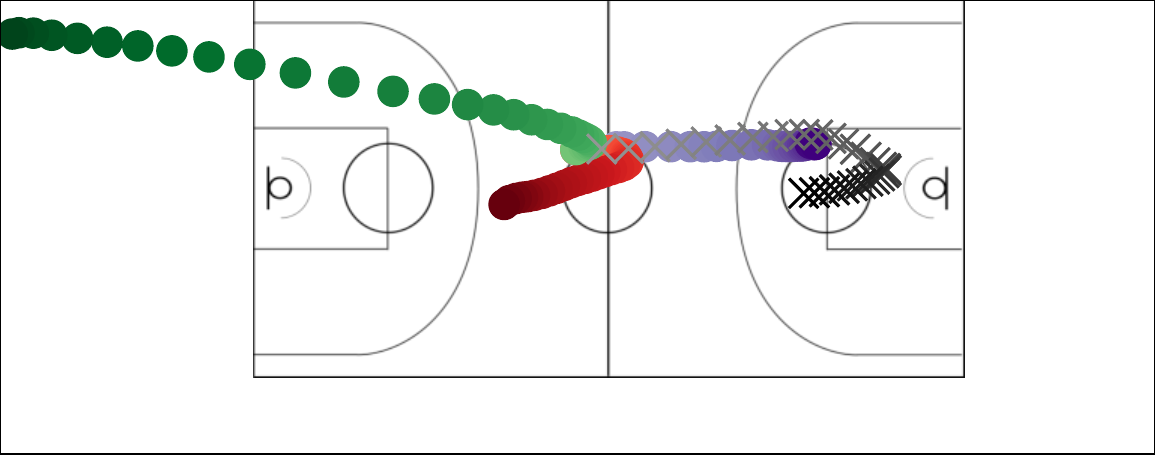} 
&
\includegraphics[width=\BBW, valign=c]{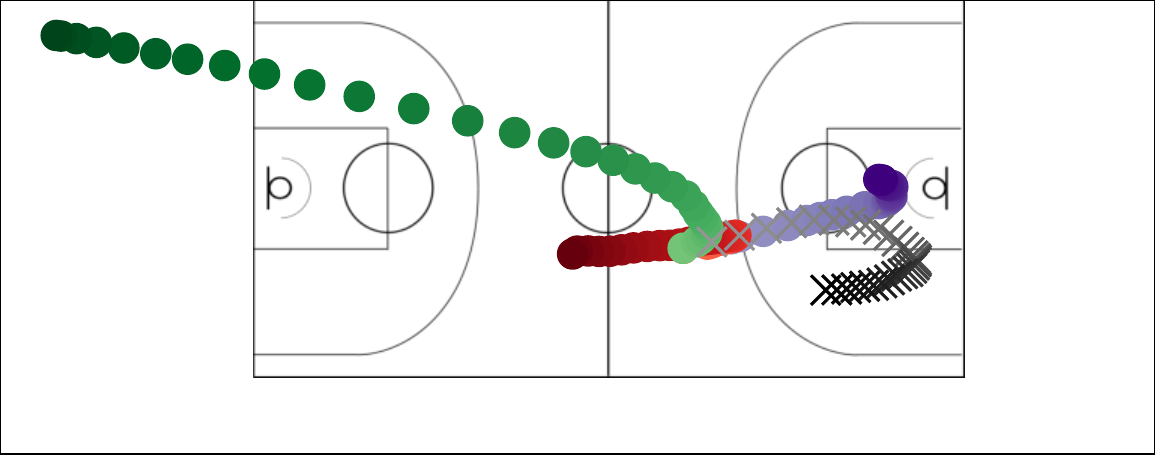} 
&
\includegraphics[width=\BBW, valign=c]{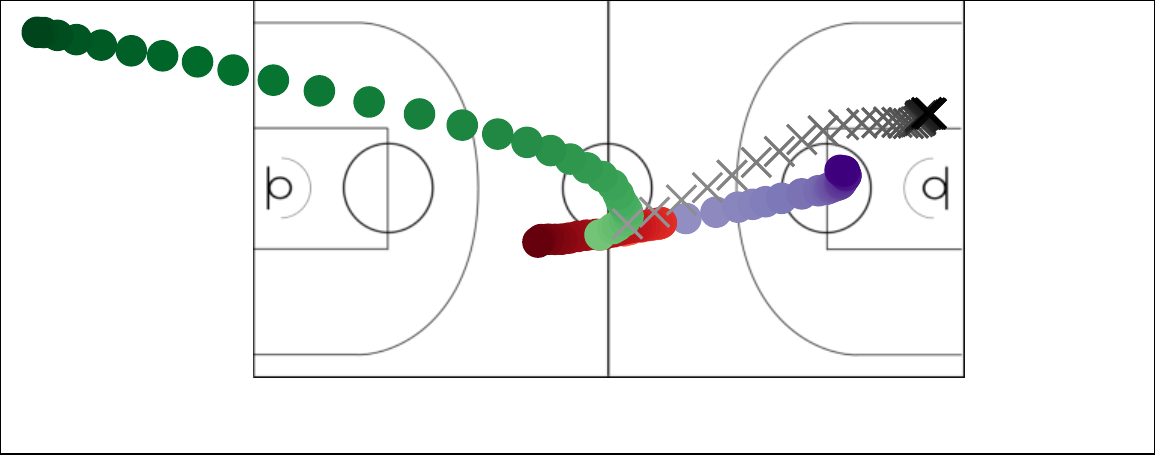} \\
\end{tabular}
}
    \caption{Sample forecasts of NBA player location trajectories.  Grey tracks show true player trajectories from the 6-second forecasting window of the test event on which our model had median forecasting error. In color are three sampled forecasts from each model; purple/red/green indicate 1st/5th/10th best forecasting error (from 20 samples). Time runs from light to dark. 
}
\label{fig:basketball_qualitative}
\end{figure}


\textbf{Results: Quantitative error.}
 Tab.~\ref{tab:basketball_error_vs_n} reports the mean distance in feet from forecasts to ground truth for each method.  Methods whose forecasting error are  significantly better (worse) than \HSRDM~according to the hypothesis tests are marked with a green check (red x). The standard error of the difference in means is given in parentheses. Our first key finding is that \HSRDM~provides better forecasts than ablations, supporting the utility of incorporating multi-level recurrent feedback and top-level system states into switching autoregressive models of collective behavior.  Second, \HSRDM~provides  better forecasts than some neural network baselines (\SNLDSs~ and \Agentformer), as well as the fixed velocity baseline. Third, a neural network method designed to model coordinated entities (\Groupnet) does provide the best overall forecasts in terms of error alone. Yet we consider our results ``competitive'' in that on a basketball court measuring 94x50 feet, our $n=20$ forecasts are on average only one foot further off in terms of error while using a simple interpretable model that is roughly 320x smaller and 120x faster to train. 

\textbf{Results: Qualitative forecasts.}
Fig.~\ref{fig:basketball_qualitative} shows sampled forecasts from our model and baselines. As a first key finding, \HSRDM~forecasts are qualitatively more similar to \GroupNet~than the forecasts of other baselines. Second, system-level switches appear to help coordinate entities; players move in more coherent directions under \HSRDM~than the no system state ablation. Third, our multi-level recurrent feedback supports \textit{location-dependent} state transitions; players are more likely to move to feasible \emph{in-bounds} locations under \HSRDM~than without recurrence. 

\textbf{Results: Statistical comparisons.} To corroborate the above conclusions from visual inspection, we examine two statistics on the entire test set: 
 \textit{Directional Variation}, which measures the coherence of movements by a basketball team via the variance across players of the movement direction on the unit circle between the first and last timesteps in the forecasting window, and \textit{\% In Bounds}, the mean percentage of each forecast that is in bounds.  We report these statistics for the \HSRDM~and its ablations in Tab.~\ref{tab:basketball_stats_vs_ablations}. Methods significantly different from the \HSRDM~baseline according to hypothesis testing are marked with a red ``x.'' These results corroborate the intended purpose of each of our modeling contributions. Removing the system-level states significantly increases the directional variation across players, indicating lack of top-down coordination. Removing recurrence significantly reduces the percentage of forecasts that remain in bounds. Removing system states also significantly reduces the in bounds percentage, although to a lesser extent. We suggest that greater coordination in the player movement also helps keep the players in bounds.




\blue{CONSIDER LATER:  Consider the two ways to construct recurrence: 

\begin{enumerate}
\item Skip-level recurrence, in which the observed heading directions feed back directly into the system-level transitions.  The model would look like

{\footnotesize \begin{align*}
p(\systemState_t \cond \systemState_{t-1} \, , \theta_{t-1}^^\allEntities , \, \hdots) &=  \text{Function of  }\explaintermbrace{squad security score}{g(\theta_{t-1}^^\allEntities)} \\
p(\entityState_t^^\entity \cond \entityState_{t-1}^^\entity\, \systemState_t , \, \hdots) &= \text{Selected behavior depends on squad security score}  \\
p(\theta_t^^\entity \cond \theta_{t-1}^^\entity\, \entityState_t^^\entity , \, \hdots) &= \text{Switching AR model} \end{align*}}%
  
\item Direct joint recurrence

{\footnotesize \begin{align*}
p(\systemState_t \cond \systemState_{t-1}, \, \hdots) &= \\
p(\entityState_t^^\entity \cond \entityState_{t-1}^^\entity , \, \theta_{t-1}^^\allEntities , \, \systemState_t , \, \hdots) &= \text{Function of  } \explaintermbrace{squad security score}{g(\theta_{t-1}^^\allEntities).} \text{Role of $\systemState_{t-1}$ unclear} \\
p(\theta_t^^\entity \cond \theta_{t-1}^^\entity\, \entityState_t^^\entity , \, \hdots) &= \text{Switching AR model} \end{align*}}%

\end{enumerate}

Inference for (1) is LINEAR in the number of soldiers; for (2) is QUADRATIC. 

}

\blue{CONSIDER LATER: We analyze data from \url{https://github.com/linouk23/NBA-Player-Movements}, which is broken up into episodes. I think the basketball application seems to motivate direct JOINT recurrence - recurrent feedback from the locations of \textit{all} entities to the state preferences of \textit{each} entity.     My choice of ``behavior" as a basketball player (e.g. cut to basket, cut along three point line, etc.) should depend not only on (1) my current location, (2) my previous behavior, and (3) on the overall team plan, but also (4) on the location of all the players on the court.   

Why these dependencies?  (1) I won't cut along the three point line if I'm not even at the three point line yet; (2) Behaviors take longer to execute than a single timestep, so we at least need a notion of stickiness; (3) Individual behaviors are governed by team-level goals (e.g. needing 3 points to tie the game, getting the ball to Jordan, etc.); (4) I won't cut along the three point line if a teammate already is doing that or if it's well-guarded.

An additional pro of joint recurrence  is that it benefits paper packaging/salesmanship.   We have decided that our paper should be packaged around both hierarchical switches AND recurrence (due to the existence of prior work with hierarchical switches).  But a critic of our paper could say that we're somewhat arbitrarily combining two pre-existing things.   If we use joint recurrence, then we have a tighter argument: \textit{hierarchical switches} and \textit{joint recurrence} are the two natural ingredients that can be used to construct a model of group dynamics.

Are there any costs of working with joint recurrence?

\begin{itemize}
\item There is also no cost in terms of CONCEPTUAL complexity (forward-backward algorithm is the same either way)
\item There is extremely little cost in terms of ALGORITHMIC complexity (it's a one-liner change to the recurrence function).  
\item However, there is a definite cost in terms of COMPUTATIONAL complexity. The joint recurrence makes the model QUADRATIC in the number of entities, rather than LINEAR.
\end{itemize}

UPDATE: Mike H suggests selling both the original recurrence (linear in $J$) and joint recurrence (quadratic in $J$) as two instances of our framework.  Then on datasets where it helps, we can motivate it, but on datasets where speed is a concern, we could keep it lean.
}


\subsection{MarchingBand: Synthetic task of interpreting marching band routines} \label{sec:marchingband}
Beyond the forecasting evaluations of earlier subsections, we now evaluate the \HSRDM's capabilities to discover useful and interpretable discrete system dynamics in systems with many ($J=64$) entities. To do this, we introduce a synthetic dataset, \emph{MarchingBand}, consisting of individual marching band players (``entities'') moving across a 2D field in a coordinated routine to visually form a sequence of letters. Each observation is a position $\observation^{(j)}_t \in \mathbb{R}^2$ of player $j$ at a time $t$ within the unit square centered at (0.5, 0.5) representing the field.

By design, player movement unfolds over time governed by both \emph{top-down} and \emph{bottom-up} signals. The team's goal is to spell out the word "LAUGH". An overall discrete system state sends the top-down signal of which particular letter, one of "L", "A", "U", "G", or "H", the players form on the field via their coordinated movements. Each entity has an assigned vertical position on the field (when viewed from above), and moves horizontally back and forth to ``fill in'' the shape of the current letter. Example frames are shown in Fig.~\ref{fig:marchingband}. Each state is stable for 200 timesteps before transitioning in order to the next state.

Each entity's position over time on the unit square  follows the current letter's top-down prescribed movement pattern perturbed by small-scale i.i.d. zero-mean Gaussian noise. When reaching a field boundary, typically the player is reflected back in bounds instantaneously. However, there is a small chance an entity will continue out-of-bounds (OOB, $\observation_{t1} \notin (0,1)$). When enough players go OOB, this bottom-up signal triggers the system state immediately to a special ``come together and reset'' state, denoted "C". This reset state does not form the letter "C".
Instead, in state "C" all players move to the center for the next 50 timesteps, then return to repeat the most recent letter before continuing on to remaining letters.


Altogether, we build a dataset of $N = 10$ independently sampled sequences of "LAUGH" each with $J = 64$ marching band players. Each sequence contains a different number of time-steps, ranging from 1000 to 1100, depending on how many reset segments occur. 
To trigger reset state "C", we use a threshold of 11 players OOB as this generates a moderate number of 6 "C" segments distributed across the 10 sequences. 

\textbf{Research goal.}
We seek to understand whether our proposed model or competitors can discover the overall discrete system-level dynamics, as measured by discrete segmentation quality. The true data-generating process, while similar to the \HSRDM~in using top-down and bottom-up signals, is not strictly an \HSRDM~generative model because each state has fixed duration. Thus, all methods are somewhat misspecified here.

\textbf{Baseline selection.} We focus on methods that can produce an estimated discrete segmentation at the system-level. Using this criteria, our primary external competitor is \DSARF~ \citep{farnooshDeepSwitchingAutoRegressive2021}.
Neither \GroupNet~nor \Agentformer~can produce a segmentation or offer interpretable discrete system states.

Additionally, we compare to two ablations of our method: removing the system-level state (leaving an \rARHMM) and removing the recurrent feedback (leaving a multi-level HMM). For both \DSARF~and \rARHMM, we obtain system-level segmentations in two steps.  First, we use the \emph{Independent} strategy ( Sec.~\ref{sec:fig8}) to obtain entity-level segmentations. Next, for each timestep we concatenate the one-hot indicator vectors of each entity to form a longer vector of size $J\times K$. These per-time features are clustered via k-means to obtain a system-level segmentation. This  should favor clustering timesteps that share entity state assignments. 

\textbf{Model configuration.}
For all methods, we fairly provide knowledge of the true number of system states, $ 6= |\{\texttt{L, A, U, G, H, C}\} |$. 
Models with system states (\HSRDM~ and its recurrent ablation) have $L=6$ system states and  $K=4$ entity states. Models with only entity-level states (\DSARF~and \rARHMM) are fit with $K=6$ entity states, imagining one state per letter, followed by k-means to discover 6 system-level states. For \HSRDM, we set system-level recurrence $g$ to count the number of entities out of bounds. 
For \HSRDM~and \rARHMM, we set entity-level recurrence $f$ to the identity function.
The emission model is set to a Gaussian Autoregressive.

\textbf{Hyperparameter tuning.}
For \DSARF, we again select the number of spatial factors and the lags, via grid search (see Appendix ~\ref{appendix_march}). Ultimately, we select 25 spatial factors and the set of lags $\{1, ...,  200\}$, suggesting long-range dependency is useful to compensate for lack of recurrence.
For our \HSRDM~and ablations, no specific hyperparameter search was done. 
Methods were trained with similar epochs/iterations as the \emph{FigureEight} data, with reasonable convergence verified by trace plots.
To avoid local optima, for each method we take the best trial (in terms of segmentation accuracy) of 5 possible seeds controlling initialization. 


\begin{figure}[!t]
    \centering
    \includegraphics[width=1\linewidth]{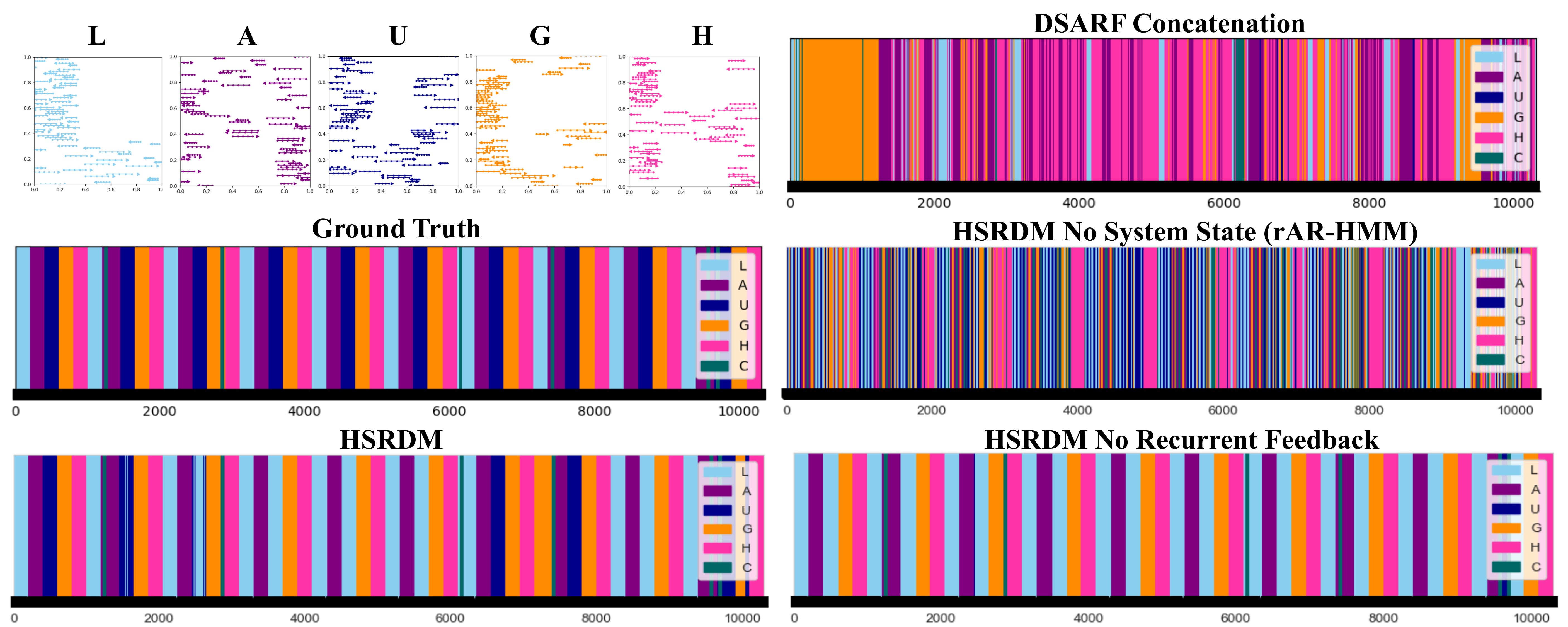}
    \caption{System-level segments for all models on the \textit{MarchingBand} task. \textbf{Top left}: An example snapshot in time of ground truth entity observations for each system-state letter (spells "LAUGH"). \textbf{Bottom left}: The ground truth segmentation across all 10 sequences as well as the \HSRDM~segmentation. \textbf{Right side}: Results for ablations (no system state (rAR-HMM), no recurrent feedback) as well as DSARF.}
    \label{fig:marchingband}
\end{figure}

\begin{wraptable}[9]{r}{9cm}
\vspace{-.2cm}
\caption{Accuracy of system-level segmentation on \emph{MarchingBand}, compared to ground-truth after alignment.} \label{tab:segmentation_error_marchingband}
\begin{tabular}{ccc}
Method & Best Acc. & Median Acc.
\\\midrule
\HSRDM & 88\% & 83\%
\\
No Recur. Feedback (\HSDM) & 80\% & 69\%
\\
No System State (\rARHMM) & 48\% & 42\%
\\
\DSARF & 38\% & 32\%
\end{tabular}
\end{wraptable}
\textbf{Results: segmentation quality.}
Fig.~\ref{fig:marchingband} allows visual comparison of the ground truth system-level segmentation and the estimated segmentations from various methods. 
Estimated states are aligned to truth by minimizing classification accuracy (Hamming distance of one-hot indicators) via the Munkres algorithm \cite{Munkres_orig}.
The visualized segmentations depict each model's best classification accuracy over multiple training trials with different random initializations. \DSARF~was given 10 chances and all others 5 in an attempt to provide \DSARF~with a more-than-fair chance. 
As reported in Tab.~\ref{tab:segmentation_error_marchingband}, our \HSRDM~obtains $88\%$ accuracy overall, with clear recovery of each of the 6 states in visuals. In contrast, other methods struggle, delivering $38-80\%$ accuracy and notably worse segmentations in~Fig.~\ref{fig:marchingband}. Even with respect to median accuracy across trials (also in Tab.~\ref{tab:segmentation_error_marchingband}), the~\HSRDM~outperforms its competitors. Inspecting~Fig.~\ref{fig:marchingband}, the \HSRDM~recovers each true state most of the time. In contrast, the next-best method, the no recurrence ablation, completely misses capturing the ``U'' system-level state across examples even in its best trial. This experiment highlights the \HSRDM~as a natural model to recover hidden system dynamics when many individuals influence collective behavior. We provide supplemental results for accurate estimation of system states for a larger number of entities ($J = 200$) and a larger number of dimensions ($D = \{10, 30\}$) in the Appendix \ref{larger_experiments}.



\color{black}{
\subsection{Soldier Training: Real task of interpreting risk mitigation strategies} \label{sec:soldiers}
\begin{identifiableVersion}
    For our final experiment, we model a squad of active-duty U.S.~Army soldiers during a training exercise designed to improve team coordination.
\end{identifiableVersion}
\begin{anonVersion}
    For our final experiment, we model a training exercise of a squad of active-duty soldiers in a NATO-affiliated army. 
\end{anonVersion}
Our goal is to demonstrate the \HSRDM's capability to reveal useful and interpretable group dynamics in a real application. For simplicity and to highlight our model's capability as an explanatory tool, we focus on interpreting one model's fit rather than comparisons to alternatives.

In this training exercise, the squad's task is to maintain visual security of their entire perimeter while simulated enemy fire comes primarily from the south. Focus on the south creates a potential blindside to the north. If this blindside is left unchecked for a sufficiently long time, this leaves the squad vulnerable to a blindside attack.
 Among several goals, the squad was instructed that a primary goal was mitigating overall risk with visual security as a key sub-task. As a way to mitigate risk, the squad should periodically plan to have at least one soldier briefly turn their head to the north to regain visibility and reduce vulnerability to a blindside attack. Strong performance at this sub-task requires coordination across all soldiers in the squad. Our modeling aim is to utilize the \HSRDM~to interpret the soldiers' risk mitigation strategy with respect to checking their blindside as they complete the overall task. 

The data consists of univariate time series of heading direction angles $\observationScalar_t^^\entity$ recorded at 130 Hz from each soldier's helmet inertial measurement unit (IMU), downsampled to 6.5 Hz. 
We have one 12 minute recording of one squad of 8 soldiers. Raw data from the first minute of contact is illustrated in Fig.~\ref{fig:supra_results}. Due to privacy concerns, the dataset is not shareable.
\begin{identifiableVersion}
This study was approved by the U.S. Army Combat Capabilities Development Command Armaments Center Institutional Review Board and the Army Human Research Protections Office (Protocol \#18-003).
\end{identifiableVersion}
\begin{anonVersion}
This study's use of data was approved by our institutional IRB (details after anonymous review).
\end{anonVersion}



\textbf{Model configuration.} Our \HSRDM~captures the risk mitigation strategies of the group by setting the system-level recurrence function $\recurrenceFunctionForSystem$ to the normalized elapsed time since any one of the $\numEntities$ soldiers looked within the north quadrant of the circle. No entity-level recurrence $f$ was included.
Soldier headings must remain on the unit circle throughout time, so we use a
\textit{Von Mises autoregression} as the emission model $\emissionsDistribution_\emissionsParameter$ for the $\someState$-th state of the $\entity$-th soldier:
\begin{align}
\observationScalar_t^^\entity \cond \observationScalar_{t-1}^^\entity, \set{\state_t^^\entity {=} \someState} \sim \VonMises \bigg( \vmMean_{\entity, \someState}\big(\observationScalar_{t-1}^^\entity \big) \; , \; \vmConcentration_{\entity, \someState} \bigg), \text{where} \; \vmMean_{\entity, \someState}(\observationScalar_{t-1}^^\entity) = \vmArCoef_{\entity, \someState}  \; \observationScalar_{t-1}^^\entity + \vmArDrift_{\entity, \someState}.
\end{align} 
The Von Mises distribution \cite{banerjee2005clustering, fisher1994time}, denoted $\VonMises (\vmMean, \vmConcentration)$, is a exponential family distribution over angles on the unit circle, governed by mean $\vmMean$ and concentration $\vmConcentration > 0$.
Here, $\vmArCoef_{\entity, \someState}$ is an autoregressive coefficient, $\vmArDrift_{\entity, \someState}$ is a drift term, and $\vmConcentration_{\entity, \someState}$ is a concentration for entity $\entity$ in state $\someState$. 

We set the number of entity- and system-level states to $\numStates=4$ and $\numSystemStates=3$, based on a quick exploratory analysis. We use a sticky Dirichlet prior for system-level transitions, as in \Eqref{eqn:sticky_dirichlet_prior_on_system_leve_transitions}, with $\alpha=1.0$ and $\kappa=50.0$, so that the prior puts most probability mass on self-transition probabilities between .90 and .99.

\textbf{Training.} We applied the CAVI inference from Sec.~\ref{sec:inference}, observing suitable convergence after 10 iterations. 

\textbf{Results.} Fig.~\ref{fig:supra_results} visualizes key results. Inspection of the inferred system-level states, entity-level states, and learned transition probabilities suggests that the model learns a special risk mitigation strategy for Soldier 6. Specifically, the model learns a ``turn north'' state (blue) for Soldier 6 that is particularly probable when the entire squad reaches a system-level ``high elapsed time since any blindside check'' state (red).

\newcommand{\supraHeight}{1.4in} 

\begin{figure}[!t]
\centering
\setlength{\tabcolsep}{.05cm}
\begin{tabular}{cc}
\includegraphics[height=\supraHeight,valign=c]{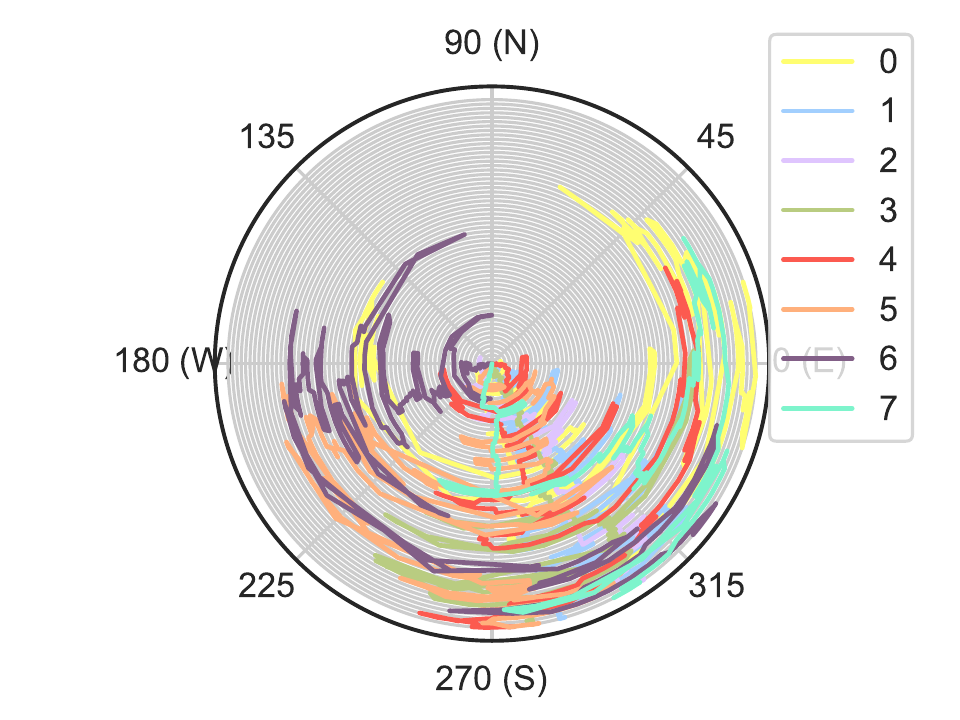} 
&
\includegraphics[height=\supraHeight,valign=c]{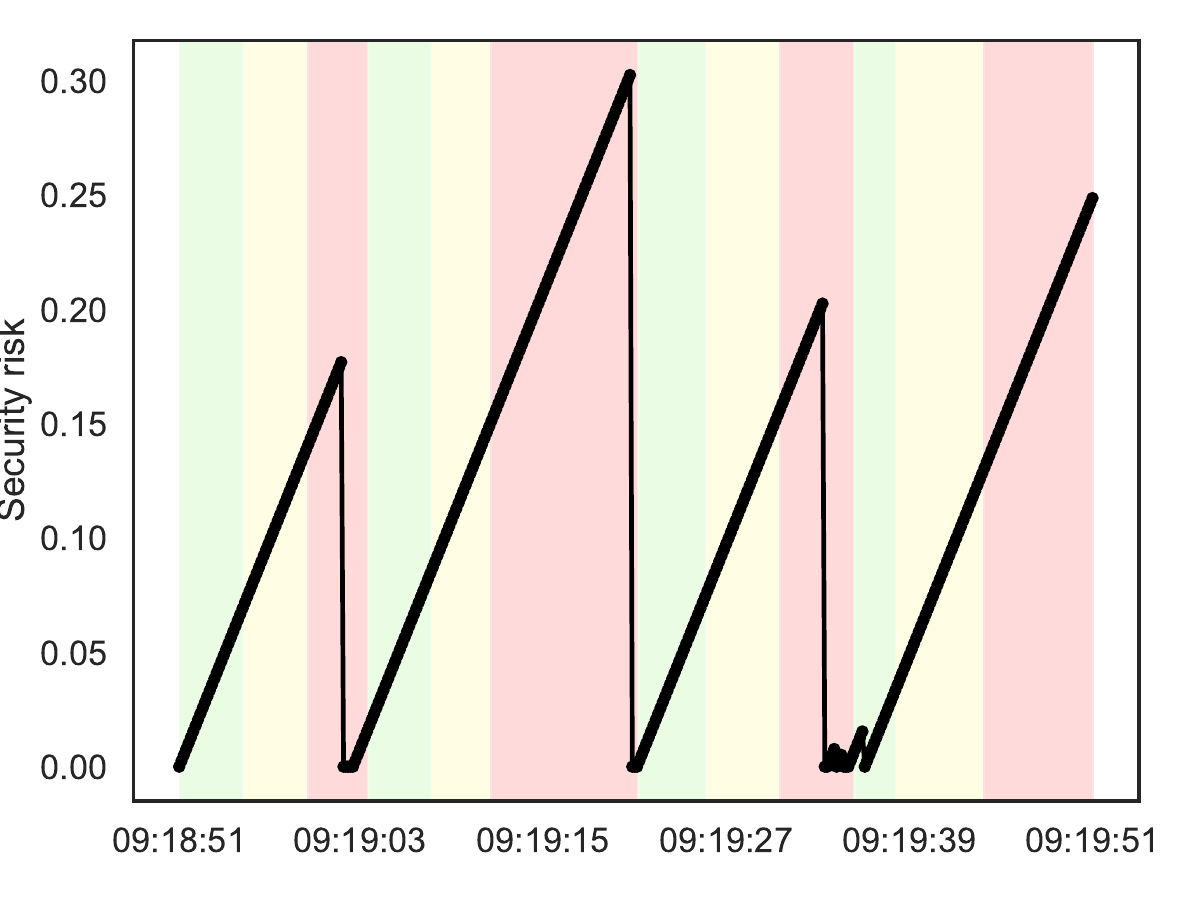}
\\
\includegraphics[height=\supraHeight,valign=c]{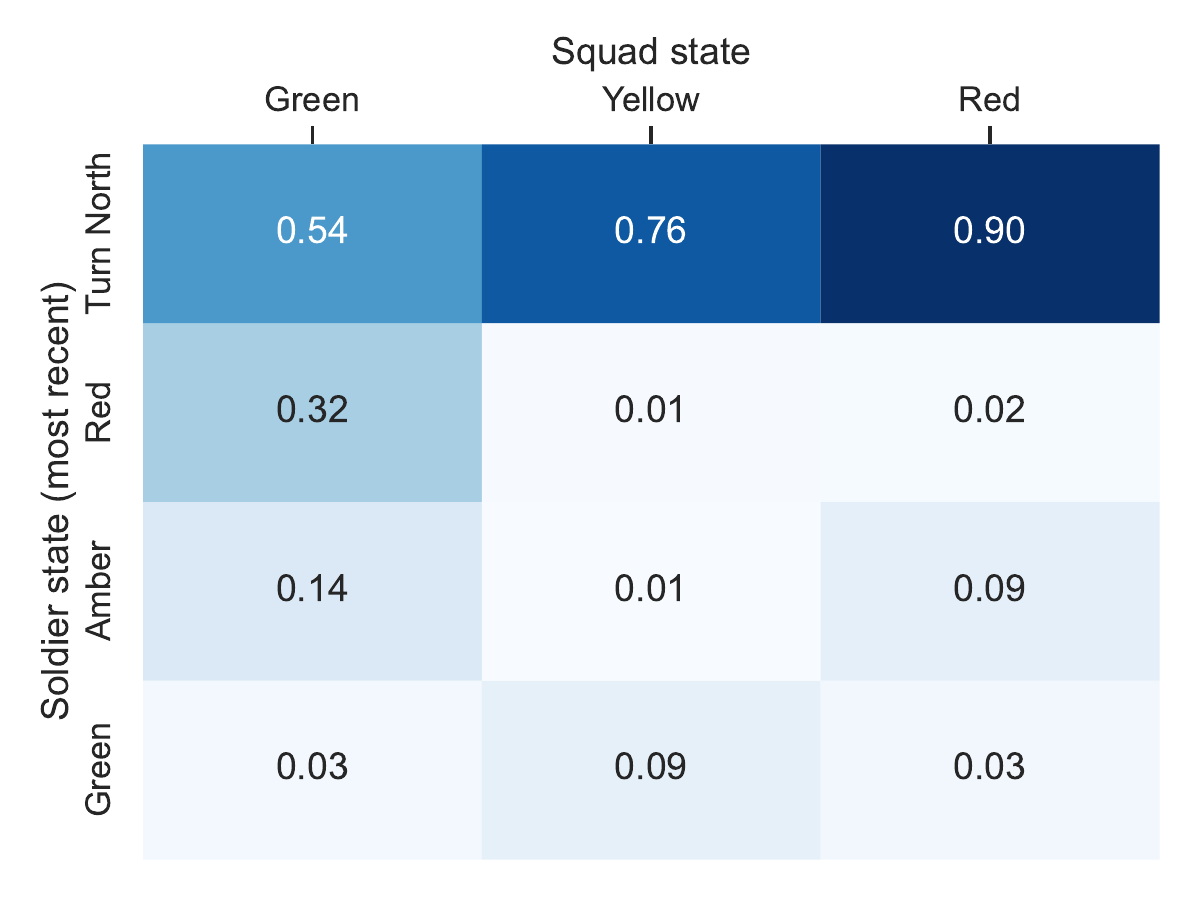}
&
\includegraphics[height=\supraHeight,valign=c]{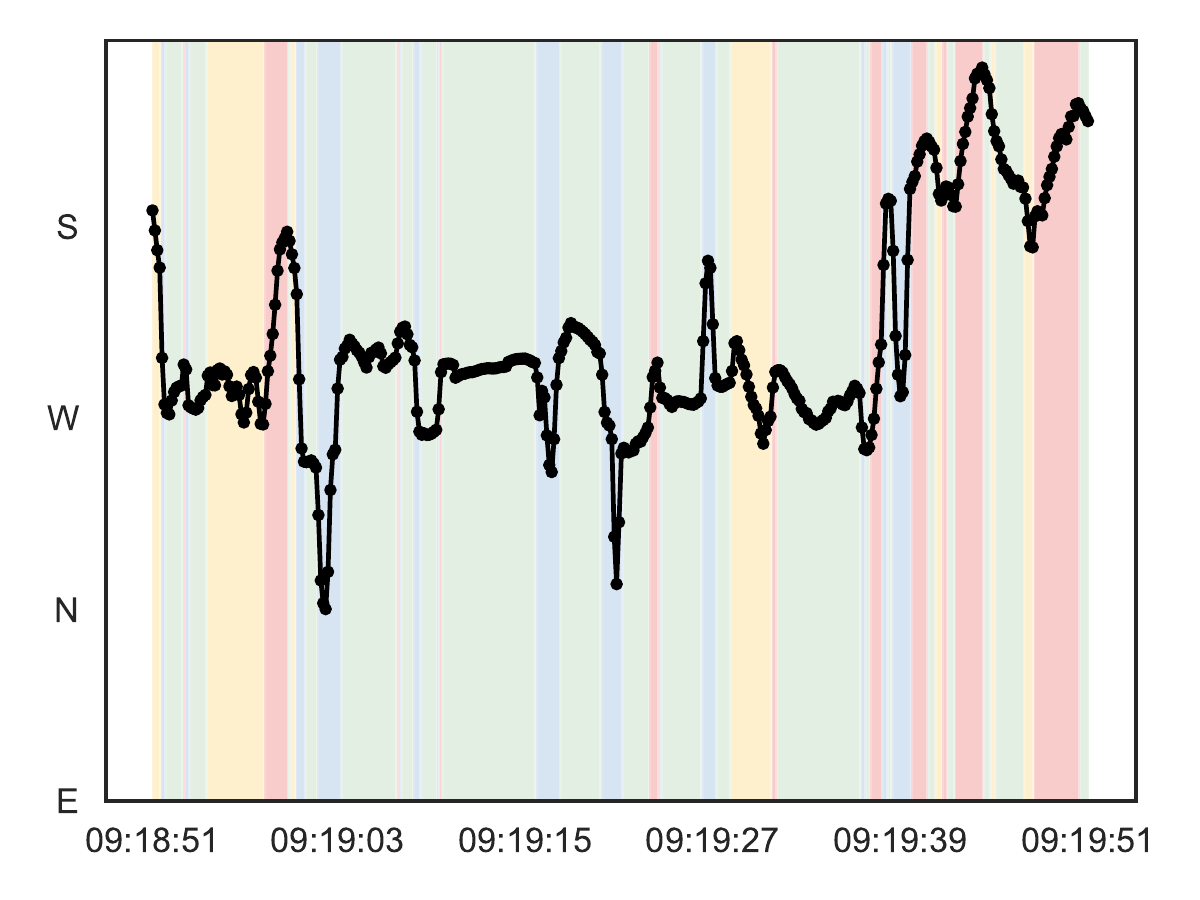}
\end{tabular}
\caption{ \textit{Modeling the heading directions of a squad of soldiers engaged in simulated battle.} 
\textbf{Top left:}  Heading directions (in degrees) of a squad of soldiers over time. Each color represents a different soldier.   Time moves from center of circle to boundary.
\textbf{Top right:} Inferred squad-level states $s_{0:T}$ (colors) superimposed over black curve representing the squad's cumulative security risk in the north direction (elapsed time since any soldier checked their blindside) as a function of time. The learned red squad state seems to indicate high security risk. These squad states can modulate soldier-level heading dynamics.   
\textbf{Bottom right:} Inferred entity-level states $z_{0:T}$ (colors) for Soldier 6, superimposed on observed time series of heading direction from that soldier's helmet IMU.
The light blue state's autoregressive emission dynamics produce a rapid turn to the north.
Twice this state persisted long enough for the soldier to reduce security risk in the north (around 19:03 and 19:20). \textbf{Bottom left:} The learned probability that Soldier 6 turns to the north from various soldier-specific states ($z$, rows) depends upon the squad-level states ($s$, columns).
The soldier is most likely to persist in turning north when the squad has a security vulnerability ($s$ is red).
}
\label{fig:supra_results}
\end{figure}



\blue{TODO: Mention what's zero'd out: covariates at both levels, entity level recurrence}

\blue{ADDITIONAL DETAIL: That is, the squad would ideally have at least one solider whose visual field covers every angle from 360 degrees.   The visual field of each solider is assumed to be 60 degrees ($\pm 30 \deg$ from the measured heading direction).   The heading direction of any soldier can be encouraged to change if the squad security is low.}

\blue{TODO:Mention special cases: IID, VM Random Walk, VM Random Walk with drift, and heading towards destination.}

\blue{FOR LATER: How does my construction fit into the literature?  Is it novel? See e.g. \textit{Modelling Circular Time Series
with Applications.}}

}

\section{Discussion \& Conclusion}
We have introduced a cost-efficient family of models for capturing the dynamics of individual entities evolving in coordinated fashion within a shared environment over the same time period. These models admit efficient structured variational inference in which coordinate ascent can alternate between E-step dynamic programming routines similar to classic forward-backward recursions to infer hidden state posteriors at both system- and entity-levels and M-step updates to transition and emission parameters that also use closed-form updates when possible. 
Across several datasets, we have shown our approach represents a natural way to capture both top-down system-to-entity and bottom-up entity-to-system coordination while keeping costs linear in the number of entities.

\textbf{Limitations.} Our method intentionally prioritizes discrete latent variables for interpretability and efficiency. For some tasks, this choice may be less flexible than more rich continuous representations. We further assume interactions between entities are moderated by a top-level discrete variable. This choice allows runtime to be linear in the number of entities $J$, but may be less expressive than direct pairwise interaction.
Several coordinate ascent steps in any per-entity \rARHMM~with Gaussian emissions scale quadratically in $D$ due to our choice to parameterize via a full covariance matrix. Scaling beyond a few dozen features may require diagonal or low-rank parameterizations. 
Furthermore, the parametric forms of both transitions and emissions in our model allow tractability but clearly limit expressivity compared to deep probabilistic models that integrate non-linear neural nets~\citep{krishnanStructuredInferenceNetworks2017}.
Scaling to many more entities would require extensions of our structured VI to process minibatches of entities~\citep{hoffmanStochasticVariationalInference2013,hughes2015scalable}. Scaling to much longer sequences might require processing randomly sampled windows~\citep{fotiStochasticVariationalInference2014}.

\textbf{When to favor~\HSRDM~over alternatives?} Readers may wonder when to expect our \HSRDM~has advantages over more flexible non-linear models. 
We suggest that our approach may be favorable when there is clear \emph{top-down} coordination among 5-200 entities that could be expressed via simple discrete segmentations by a domain expert, and there is interest in both the forecasting and explanatory purposes of the model. 
When bottom-up coordination alone dominates, interaction graph approaches or neural net methods like GroupNet or AgentFormer may be better. Our basketball case study highlights this: GroupNet slightly outperforms our model in forecasting accuracy. We think this is in part because basketball play is driven by dynamically responding to in-moment conditions (pairwise interactions) rather than rigidly following the prescribed plays called by a coach.
Our parameter-efficient approach also has advantages when number of available time series is relatively limited (say a few hundred recordings or less).
When abundant data exists, flexible function approximation methods for forecasting may be preferred.

\textbf{Future directions.}
For some applied tasks, it may be promising to extend our two-level system-entity hierarchy to even more levels (e.g. to represent nested structures of platoons, squads, and individual soldiers all pursing the same mission). Additionally, we could extend from \rARHMMs~to switching linear dynamical systems by adding an additional latent continuous variable sequence between discretes $z$ and observations $x$ in the graphical model, or avoid first-order Markov dependency in generating state sequences via more flexible distributions parameterized by recurrent neural nets. Lastly, we can add selective and adaptive recurrence functions for modeling entity interactions where current observations influence the type of signal that would be useful feedback for the hidden states.    

\section*{Broader Impacts}

Our work attempts to provide a fundamental framework for modeling interacting entities via explicit mechanisms for top-down and bottom-up influences on group dynamics. 
As with many technological innovations, for specific applications there may be positive and negative downstream consequences. It is important for researchers to take an active role to avoid misuse and harm.

Many possible applications involve human subjects. We strongly recommend all researchers follow appropriate local regulations and best practices for human subjects research. Each application must carefully consider ethical principles such as respect for persons and beneficence (maximizing benefits while minimizing risks). When using this model for human subjects research, please use care when making specific decisions about what covariates to include. Consider de-identification whenever possible to preserve the privacy of individuals.

Some potential applications of our work involve modeling human teams working in military applications. Our present work focuses exclusively on deidentified data from \emph{simulated} training exercises via a research plan that was approved by a U.S. Army institutional review board. 
We strongly recommend similar ethics oversight and care in future use cases.

\section*{Acknowledgments}

\begin{anonVersion}
    Hidden for anonymous review.
\end{anonVersion}
\begin{identifiableVersion}
This research was sponsored in part by the U.S. Army DEVCOM Soldier Center, and was accomplished under Cooperative Agreement Number W911QY-19-2-0003. The views and conclusions contained in this document are those of the authors and should not be interpreted as representing the official policies, either expressed or implied, of the U.S. Army DEVCOM Soldier Center, or the U.S. Government. The U. S. Government is authorized to reproduce and distribute reprints for Government purposes notwithstanding any copyright notation hereon. 

Authors KG, EM, and MCH also acknowledge funding from the U. S. National Science Foundation under Growing Convergence Research grant \#2428640. MCH is further supported by NSF CAREER award \#2338962 and a gift from Apple, Inc.
\end{identifiableVersion}

\bibliography{references,refs_from_zotero}
\bibliographystyle{tmlr}

\appendix

\counterwithin{table}{section}
\setcounter{table}{0}
\counterwithin{figure}{section}
\setcounter{figure}{0}

\addtocontents{toc}{\protect\setcounter{tocdepth}{1}}

\clearpage

\tableofcontents

\section*{Code}
Source code for running our proposed \HSRDM~and reproducing experiments in this paper can be found at 
\begin{identifiableVersion}
    \url{\githubURL}
\end{identifiableVersion}
\begin{anonVersion}
    \url{\anonURL}
\end{anonVersion}

\section{ARHMM Method Details}

As detailed below in Sec.~\ref{sec:rARHMM_model}, a recurrent autoregressive Hidden Markov Model (\rARHMM) generalizes a standard Hidden Markov Model by adding autoregressive and recurrent edges to the probabilistic graphical model.   Although \rARHMM~models have been previously proposed in the literature \cite{linderman2017bayesian}, we do not know of any explicit proposition (or justification) describing how to perform posterior state inference for these models.  The literature provides such a proposition for (non-recurrent) autoregressive Hidden Markov Model (\ARHMMs; e.g., see \cite{hamilton1994time}), but not for \rARHMMs.   Hence, we provide the missing propositions with proofs here; see Props.~\ref{prop:filtering_for_rARHMM}~and \ref{prop:smoothing_for_rARHMM}.  We believe that these explicit propositions can be useful when composing recurrence into more complicated constructions. Indeed, we use them throughout the supplement in order to derive inference for our \HSRDMs; for example, see the VES step in  Sec.~\ref{sec:VES_step} or the VEZ step in Sec.~\ref{sec:VEZ_step}.   In fact, we also utilize the proofs of these propositions when describing how to perform inference with \HSRDMs~when the dataset is partitioned into multiple examples; see Sec.~\ref{sec:multiple_examples}.


\blue{TODO: Add detail, further checking literature for missingness of these propositions.}

\blue{TODO: From a measure theoretic perspective, this is probably all still just a "HMM", where we've conditioned on a filtration.  If possible, pull in this more general view.}

\blue{TODO:This whole section uses one-indexing, even though the rest of the paper uses zero-indexing.  Align.}

\subsection{Model} \label{sec:rARHMM_model}

The complete data likelihood for a $(\numStatesARHMM, \recurrenceOrder, \autoregressiveOrder)$-order recurrent AR-HMM (\rARHMM) is given by Radon-Nikod{\`y}m density
 {\footnotesize \begin{align}
p(\allObservationsARHMM, \allStatesARHMM \cond \theta) &=   \explaintermbrace{initialization}{p(\stateARHMM_1 \cond \theta)  p(\observationARHMM_1 \cond \stateARHMM_1, \theta)} \ds\prod_{t=2}^T  \explaintermbrace{transitions}{p(\stateARHMM_t \cond \stateARHMM_{t-1}, \red{\observationARHMM_{(t-\recurrenceOrder):(t-1)}}, \theta)} \explaintermbrace{emissions}{p(\observationARHMM_t \cond \stateARHMM_t, \blueKeep{\observationARHMM_{(t-\autoregressiveOrder):(t-1)}},  \theta)}    \label{eqn:rARHMM_cdl_compact}
 \end{align}}%
where $\allObservationsARHMM$ are the observations, $\allStatesARHMM \in \set{1,\hdots, \numStatesARHMM}$ are the discrete latent states, and $\theta$ are the parameters.  The \rARHMM~generalizes the standard HMM \cite{rabinerTutorialHiddenMarkov1989}, which contains neither autoregressive emissions (blue) nor recurrent feedback (red) from emissions to states. The $(\numStatesARHMM, \recurrenceOrder, \autoregressiveOrder)$-order \rARHMM~gives a $(\numStatesARHMM, \autoregressiveOrder)$-order autoregressive HMM (\ARHMM) in the special case where
{\footnotesize \begin{align}
 p(\stateARHMM_t \cond \stateARHMM_{t-1}, \red{\cancel{\observationARHMM_{(t-\recurrenceOrder):(t-1)}}}, \theta) = p(\stateARHMM_t \cond \stateARHMM_{t-1}, \theta)
\end{align}}%
See Fig.~\ref{fig:ARHMM_vs_rARHMM} for a probabilistic graphical model representation in the special case of first-order recurrence ($\recurrenceOrder=1$) and autoregression ($\autoregressiveOrder=1$).
 \begin{figure}[H]
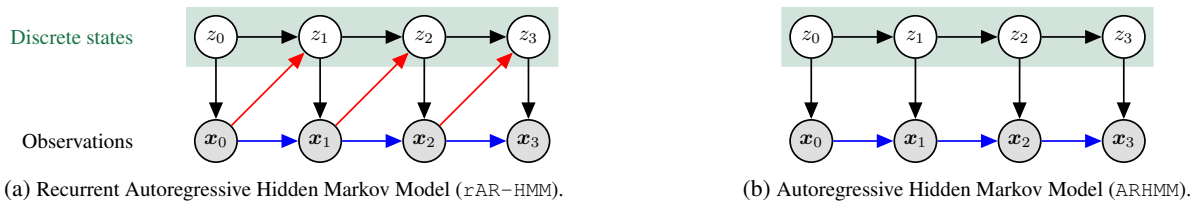

     \centering
     \begin{subfigure}[t]{0.45\textwidth}
         \centering
          \scalebox{0.8}{
		  \tikz{ %
		  
		
		\node[latent, thick](z0j){$\state_0$}; %
		 \node[latent, right=of z0j, thick](z1j){$\state_1$};%
		 \node[latent, right=of z1j, thick](z2j){$\state_2$}; %
		 \node[latent, right=of z2j, thick](z3j){$\state_3$}; %

		\node[obs, below=of z0j, thick](x0j){$\observation_0$}; %
		 \node[obs, below=of z1j, thick](x1j){$\observation_1$};%
		 \node[obs, below=of z2j, thick](x2j){$\observation_2$}; %
		 \node[obs, below=of z3j, thick](x3j){$\observation_3$}; %
		
		\definecolor{darkspringgreen}{rgb}{0.09, 0.45, 0.27}
		\definecolor{hanblue}{rgb}{0.27, 0.42, 0.81}
		
		
		\begin{scope}[on background layer]
		\node[draw=darkspringgreen!20,fill=darkspringgreen!20, thick,fit=(z0j) (z1j) (z2j) (z3j)] {};
		\end{scope}

		 \node[const, left=of z0j]{\color{darkspringgreen}{Discrete states}};
		 \node[const, left=of x0j]{{Observations}};  
		
		\edge[thick,blue]{x0j}{x1j};
		\edge[thick,blue]{x1j}{x2j};
		\edge[thick,blue]{x2j}{x3j};
		\edge[thick]{z0j}{z1j};
		\edge[thick]{z1j}{z2j};
		\edge[thick]{z2j}{z3j};
		\edge[thick]{z0j}{x0j};
		\edge[thick]{z1j}{x1j};
		\edge[thick]{z2j}{x2j};
		\edge[thick]{z3j}{x3j};
		\edge[thick,red]{x0j}{z1j};
		\edge[thick,red]{x1j}{z2j};
		\edge[thick,red]{x2j}{z3j};
		  }
		 }
         \caption{Recurrent Autoregressive Hidden Markov Model (\rARHMM).}
         \label{fig:rARHMM}
     \end{subfigure}
     \hfill 
     \begin{subfigure}[t]{0.45\textwidth}
         \centering
         \scalebox{0.8}{
		  \tikz{ %
		  
		
		\node[latent, thick](z0j){$\state_0$}; %
		 \node[latent, right=of z0j, thick](z1j){$\state_1$};%
		 \node[latent, right=of z1j, thick](z2j){$\state_2$}; %
		 \node[latent, right=of z2j, thick](z3j){$\state_3$}; %

		\node[obs, below=of z0j, thick](x0j){$\observation_0$}; %
		 \node[obs, below=of z1j, thick](x1j){$\observation_1$};%
		 \node[obs, below=of z2j, thick](x2j){$\observation_2$}; %
		 \node[obs, below=of z3j, thick](x3j){$\observation_3$}; %
		
		\definecolor{darkspringgreen}{rgb}{0.09, 0.45, 0.27}
		\definecolor{hanblue}{rgb}{0.27, 0.42, 0.81}
		
		
		\begin{scope}[on background layer]
		\node[draw=darkspringgreen!20,fill=darkspringgreen!20, thick,fit=(z0j) (z1j) (z2j) (z3j)] {};
		\end{scope}

		\edge[thick,blue]{x0j}{x1j};
		\edge[thick,blue]{x1j}{x2j};
		\edge[thick,blue]{x2j}{x3j};
		\edge[thick]{z0j}{z1j};
		\edge[thick]{z1j}{z2j};
		\edge[thick]{z2j}{z3j};
		\edge[thick]{z0j}{x0j};
		\edge[thick]{z1j}{x1j};
		\edge[thick]{z2j}{x2j};
		\edge[thick]{z3j}{x3j};
		  }
		 }
		   \caption{Autoregressive Hidden Markov Model (\ARHMM). }
         \label{fig:ARHMM}
     \end{subfigure}
        \caption{Probabilistic graphical model representation of a Recurrent Autoregressive HMM (\rARHMM), and its special case, an Autoregressive HMM (\ARHMM).    For simplicity the illustration assumes first-order autoregression and recurrence, but higher-order dependencies can also be accomodated (see \Eqref{eqn:rARHMM_cdl_compact} and Props.~\ref{prop:filtering_for_rARHMM} and ~\ref{prop:smoothing_for_rARHMM}). Autoregressive edges are shown in blue and recurrent edges are shown in red. }
        \label{fig:ARHMM_vs_rARHMM}
\end{figure}

\begin{remark}\remarktitle{On generalizing a \HMM~with autoregressive emissions and recurrent state transitions.} Let us highlight how a \rARHMM~model generalizes a conventional \HMM:
\begin{itemize}
	\item \textit{Recurrence}: A lookback window of $n$ previous observations $\observationARHMM_{t-\recurrenceOrder:t-1}$ can influence the \underline{transitions} structure for the current state $\stateARHMM_t$.
	{\footnotesize \begin{align*}
	\parents(\stateARHMM_t) &= \set{\stateARHMM_{t-1}} \cup \explaintermbrace{if recurrent}{\set{\observationARHMM_{(t-\recurrenceOrder):(t-1)}}} 
	\end{align*}}%
	\item \textit{Autoregression}: A lookback window of $m$ previous observations $\observationARHMM_{t-\autoregressiveOrder:t-1}$ can influence the \underline{emissions} structure for the current observation $\observationARHMM_t$. 
	{\footnotesize \begin{align*}
	\parents(\observationARHMM_t) &= \set{\stateARHMM_t} \cup \explaintermbrace{if autoregressive}{\set{\observationARHMM_{(t-\autoregressiveOrder):(t-1)}}}
	\end{align*}}%
\end{itemize}	

Note in particular that each node (observation $\observationARHMM_t$ or state $\stateARHMM_t$) can have \textit{many} parents among previous observation variables $\observationARHMM_{1:t}$, but only \textit{one} parent among state variables $\stateARHMM_{1:t}$ (namely, the closest in time from the present or past).\footnote{What if we wanted to relax the specification so that the emissions could depend on a finite number $M$ of previous states $p(\observationARHMM_t \cond \stateARHMM_t, \blueKeep{\stateARHMM_{t-1}, \hdots, \stateARHMM_{t-M}}, \observationARHMM_{1:t-1},  \emissionsParametersARHMM)$? This situation can be handled by simply redefining the states in terms of tuples $\stateARHMM_t^* = (\stateARHMM_t, \stateARHMM_{t-1}, \hdots \stateARHMM_{t-M})$, such that $\stateARHMM_t^*$ takes on $\numStatesARHMM^M$ possible values, one for each sequence in the look-back window \cite[pp.8]{hamilton2010regime}.} This assumption will be important when deriving the smoother in Sec.~\ref{sec:state_estimation_for_rARHMM}.
\end{remark}
 
\subsection{State Estimation} \label{sec:state_estimation_for_rARHMM}

Here we discuss state estimation for the \rARHMM.  We begin with some notation.

\begin{notation}
Given a sequence of observations up to some time $t$, we can define the conditional probability of the state $\stateARHMM_s$ at a target time $s \in \set{1, 2, \ldots T}$ via the probability vector $\stateConditionalProbabilityARHMM_{s \cond t} \in \Delta_{K-1} \subset \R^{K}$. The $k$-th element of this vector is given by $p_\allParametersARHMM(\stateARHMM_s = k \cond \observationARHMM_{1:t})$.  That is,
{\footnotesize \begin{align*}
\stateConditionalProbabilityARHMM_{s \cond t} \defeq p_\allParametersARHMM(\stateARHMM_s \cond \observationARHMM_{1:t}) \; = \; \bigg[ p_\allParametersARHMM(\stateARHMM_s = 1 \cond \observationARHMM_{1:t}), \; \hdots, \; p_\allParametersARHMM(\stateARHMM_s = K \cond \observationARHMM_{1:t}) \bigg] ^T \; 
\end{align*}}%
Using this notation, we can define three common inferential tasks:
\begin{enumerate}
	\item \textit{Filtering.}  Infer the current state given observations $\stateConditionalProbabilityARHMM_{t \cond t} = p_\allParametersARHMM(\stateARHMM_t
\cond \observationARHMM_{1:t}).$
	\item \textit{Smoothing.}  Infer a past state given observations $\stateConditionalProbabilityARHMM_{s \cond t} = p_\allParametersARHMM(\stateARHMM_s
\cond \observationARHMM_{1:t})$, where $s < t$.
	\item \textit{Prediction.} Predict a future state given observations, $\stateConditionalProbabilityARHMM_{u \cond t} = p_\allParametersARHMM(\stateARHMM_u
\cond \observationARHMM_{1:t})$, where $u > t$.  
\end{enumerate}
\label{notation:state_conditional_probabilities}
\end{notation}

Now we can give Props.~\ref{prop:filtering_for_rARHMM} and ~\ref{prop:smoothing_for_rARHMM}, which parallel 
the presentation of the Kalman filter and smoother in the context of state space models \cite{shumway2000time,hamilton1994time}.  In particular, we will present the forward algorithm in terms of a \textit{measurement update} (which uses the observation $\observationARHMM_t$ to transform $\stateConditionalProbabilityARHMM_{t \cond t-1}$ into $\stateConditionalProbabilityARHMM_{t \cond t}$) and a \textit{time update} (which transforms $\stateConditionalProbabilityARHMM_{t \cond t}$ into $\stateConditionalProbabilityARHMM_{t+1 \cond t}$, without requiring an observation). These propositions show that \textit{filtering} and \textit{smoothing} can be done using the same recursions as used in a classical HMM \cite{hamilton1994time},  except that the variable interpretations differ for both the emissions step and transition step. In the statements and proofs below, we continue to use the same color scheme as was used in \Eqref{eqn:rARHMM_cdl_compact} and Fig.~\ref{fig:ARHMM_vs_rARHMM}, whereby blue designates autoregressive edges and red designates recurrence edges in the graphical model. These colors highlight differences from classic HMMs, which lack both types of edges.

\begin{proposition}\textnormal{\textbf{(Filtering a Recurrent Autoregressive HMM.)}} Filtered probabilities $\stateConditionalProbabilityARHMM_{t \cond t} \defeq p_\allParametersARHMM(\stateARHMM_t
\cond \observationARHMM_{1:t})$ for a Recurrent Autoregressive Hidden Markov Model can be obtained by recursively updating some initialization $\stateConditionalProbabilityARHMM_{1 \cond 0}$ by
\begin{itemize}
\item \boxed{\textnormal{Measurement update.}} 
\[ \stateConditionalProbabilityARHMM_{t \cond t } = \df{(\stateConditionalProbabilityARHMM_{t \cond t-1} \odot \emissionsDensitiesARHMM_t) }{\oneVector^T  (\stateConditionalProbabilityARHMM_{t \cond t-1} \odot \emissionsDensitiesARHMM_t) } \]
\item \boxed{\textnormal{Time update.}}
\[ \stateConditionalProbabilityARHMM_{t+1 \cond t } = \transitionMatrixARHMM_t \;  \stateConditionalProbabilityARHMM_{t \cond t} \]
\end{itemize}
where here  $\emissionsDensitiesARHMM_t=(\emissionsDensityARHMM_{t1},\hdots \emissionsDensityARHMM_{tk})=(p_\allParametersARHMM(\observationARHMM_t \cond \stateARHMM_t=k, \blueKeep{\observationARHMM_{1:t-1}}))_{k=1}^K$ is the $(K \times 1)$ vector whose $k$-th element is the emissions density, $\transitionMatrixARHMM_t$ represents the $(K \times K)$ transition matrix whose $(k,k')$-th element is $p_\theta(\stateARHMM_{t+1}=k' \cond \stateARHMM_t=k, \red{\observationARHMM_{1:t}})$, $\oneVector$ represents a $(K \times 1)$ vector of 1s, and the symbol $\odot$ denotes element-by-element multiplication (Hadamard product). 
\label{prop:filtering_for_rARHMM}
\end{proposition}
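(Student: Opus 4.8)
The plan is to establish both update rules by induction on $t$, taking the supplied initialization $\stateConditionalProbabilityARHMM_{1\cond 0}$ as the base case, and at each step using only two ingredients: elementary probability identities (Bayes' rule and the law of total probability), and the conditional-independence relations that can be read off the \rARHMM~graph in Fig.~\ref{fig:ARHMM_vs_rARHMM} via d-separation / the local Markov property.

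\textbf{Measurement update.} Fixing a state index $k$, I would apply Bayes' rule to $p_\allParametersARHMM(\stateARHMM_t = k \cond \observationARHMM_{1:t})$ after splitting the observation history as $\observationARHMM_{1:t} = (\observationARHMM_{1:t-1}, \observationARHMM_t)$:
\[
p_\allParametersARHMM(\stateARHMM_t = k \cond \observationARHMM_{1:t}) = \frac{p_\allParametersARHMM(\observationARHMM_t \cond \stateARHMM_t = k, \observationARHMM_{1:t-1}) \, p_\allParametersARHMM(\stateARHMM_t = k \cond \observationARHMM_{1:t-1})}{p_\allParametersARHMM(\observationARHMM_t \cond \observationARHMM_{1:t-1})}.
\]
The factor $p_\allParametersARHMM(\stateARHMM_t = k \cond \observationARHMM_{1:t-1})$ is exactly the $k$-th entry of the predicted vector $\stateConditionalProbabilityARHMM_{t\cond t-1}$ handed over by the previous time update. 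For $p_\allParametersARHMM(\observationARHMM_t \cond \stateARHMM_t = k, \observationARHMM_{1:t-1})$, I would note that the only parents of $\observationARHMM_t$ in the graph are $\stateARHMM_t$ and the autoregressive window $\observationARHMM_{(t-\autoregressiveOrder):(t-1)}$, while every earlier observation is a non-descendant of $\observationARHMM_t$; the local Markov property then collapses this to $p_\allParametersARHMM(\observationARHMM_t \cond \stateARHMM_t = k, \observationARHMM_{(t-\autoregressiveOrder):(t-1)}) = \emissionsDensityARHMM_{tk}$. The denominator is the normalizer $\oneVector^T(\stateConditionalProbabilityARHMM_{t\cond t-1}\odot\emissionsDensitiesARHMM_t)$, obtained by summing the numerator over $k$. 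Stacking over $k$ yields the stated elementwise-product-and-renormalize form.

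\textbf{Time update.} Here I would use the law of total probability over $\stateARHMM_t$:
\[
p_\allParametersARHMM(\stateARHMM_{t+1} = k' \cond \observationARHMM_{1:t}) = \sum_{k=1}^{\numStatesARHMM} p_\allParametersARHMM(\stateARHMM_{t+1} = k' \cond \stateARHMM_t = k, \observationARHMM_{1:t}) \, p_\allParametersARHMM(\stateARHMM_t = k \cond \observationARHMM_{1:t}).
\]
The second factor is the $k$-th entry of the filtered vector $\stateConditionalProbabilityARHMM_{t\cond t}$ just produced by the measurement update. For the first factor, the parents of $\stateARHMM_{t+1}$ are $\stateARHMM_t$ and the recurrence window $\observationARHMM_{(t+1-\recurrenceOrder):t}$, and the remaining conditioned observations $\observationARHMM_{1:(t-\recurrenceOrder)}$ are non-descendants of $\stateARHMM_{t+1}$, so d-separation again reduces it to $(\transitionMatrixARHMM_t)_{k,k'}$. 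The sum over $k$ is then precisely the matrix-vector product $\transitionMatrixARHMM_t\,\stateConditionalProbabilityARHMM_{t\cond t}$ under the indexing convention fixed in the statement, giving the claimed recursion; the induction closes by feeding $\stateConditionalProbabilityARHMM_{t+1\cond t}$ back into the next measurement update.

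\textbf{Main obstacle.} The only genuine content is the pair of d-separation reductions that let us drop from the entire history $\observationARHMM_{1:t-1}$ (resp.\ $\observationARHMM_{1:t}$) to the finite autoregressive (resp.\ recurrence) lookback window. These hold because all edges point forward in time and, as stressed in the Remark, each node has exactly one state-parent, so no earlier observation can be a descendant of $\observationARHMM_t$ or of $\stateARHMM_{t+1}$. The only fiddly bookkeeping I anticipate is at small $t$, where $t-\autoregressiveOrder$ or $t+1-\recurrenceOrder$ falls below $1$: there the windows should be read as whatever prefix is available, which leaves every argument above intact, and the recursion simply starts from the given $\stateConditionalProbabilityARHMM_{1\cond 0}$.
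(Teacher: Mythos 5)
Your proposal is correct and follows essentially the same route as the paper's proof: the measurement update is the same Bayes'/chain-rule factorization into $\stateConditionalProbabilityARHMM_{t \cond t-1} \odot \emissionsDensitiesARHMM_t$ followed by normalization, and the time update is the same law-of-total-probability-plus-chain-rule step recognized as a matrix–vector product. The only cosmetic difference is that you spell out the d-separation reduction from the full history to the lookback windows, whereas the paper sidesteps this by defining $\emissionsDensitiesARHMM_t$ and $\transitionMatrixARHMM_t$ as conditionals on the full history (which coincide with the model's window-limited densities by construction).
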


\begin{proof}
\phantom{a}
\begin{itemize}
\item \underline{Measurement update.} 
{\footnotesize \begin{align*}
\stateConditionalProbabilityARHMM_{t \cond t}	&= p_\theta(\stateARHMM_t \cond \observationARHMM_{1:t})&& \footnotesizetext{Notation} \\
& \propto  p_\theta(\stateARHMM_t, \observationARHMM_t \cond \observationARHMM_{1:t-1}) && \footnotesizetext{Conditional density} \\
&= \explaintermbrace{$\defeq \stateConditionalProbabilityARHMM_{t \cond t-1}$}{p_\theta(\stateARHMM_t \cond \observationARHMM_{1:t-1})}  \; \odot \; \explaintermbrace{$\defeq \emissionsDensitiesARHMM_t$}{p_\theta(\observationARHMM_t \cond \stateARHMM_t, \blueKeep{\observationARHMM_{1:t-1}})} && \footnotesizetext{Chain rule of probability} \\
\implies \stateConditionalProbabilityARHMM_{t \cond t} &= \df{(\stateConditionalProbabilityARHMM_{t \cond t-1}  \; \odot  \; \emissionsDensitiesARHMM_t)}{\oneVector^T (\stateConditionalProbabilityARHMM_{t \cond t-1} \;  \odot  \; \emissionsDensitiesARHMM_t)} && \footnotesizetext{Normalize}
\end{align*}}%

\item \underline{Time update.}
{\footnotesize \begin{align*}
\stateConditionalProbabilityARHMM_{t+1 \cond t}	&= p_\theta(\stateARHMM_{t+1} \cond \observationARHMM_{1:t}) && \footnotesizetext{Def.} \\
&= \sum_{k=1}^K p_\theta(\stateARHMM_{t+1}, \stateARHMM_t=k \cond \observationARHMM_{1:t}) && \footnotesizetext{Law of Total Prob.} \\
&= \sum_{k=1}^K p_\theta(\stateARHMM_{t+1} \cond \stateARHMM_t=k, \red{\observationARHMM_{1:t}}) \; p_\theta(\stateARHMM_t=k \cond \observationARHMM_{1:t})  && \footnotesizetext{Chain rule of probability} \\
&= \sum_{k=1}^K  \explaintermbrace{$k$th row of $\transitionMatrixARHMM_t$}{\bigg[\transitionMatrixARHMM_t \bigg]_{k,:}} \quad \explaintermbrace{$k$th element of $\stateConditionalProbabilityARHMM_{t \cond t}$}{[\stateConditionalProbabilityARHMM_{t \cond t}]_k} && \footnotesizetext{Notation}\\
&= \transitionMatrixARHMM_t \; \stateConditionalProbabilityARHMM_{t \cond t} && \footnotesizetext{Def. matrix multiplication}
\end{align*}}%
\end{itemize}

\end{proof}


\begin{remark}\remarktitle{Initializing the filtering algorithm in Prop.~\ref{prop:filtering_for_rARHMM}.}
Inspired by \citet[pp.693]{hamilton1994time}, we provide some  suggestions for initializing the filtering algorithm of Prop~\ref{prop:filtering_for_rARHMM}.  In particular, we can set $\stateConditionalProbabilityARHMM_{1 \cond 0}$ to 
\begin{itemize}
	\item Any reasonable probability vector, such as the uniform distribution $K^{-1} \oneVector$. 
	\item The maximum likelihood estimate.
	\item The steady state transition probabilities, if they exist.
\end{itemize}
\label{rk:initializing_the_filtering_algorithm_for_HMMs}
\end{remark}

%
%
%

\begin{proposition}\textnormal{\textbf{(Smoothing a Recurrent Autoregressive Hidden Markov Model.)}}  Smoothed probabilities $\stateConditionalProbabilityARHMM_{t \cond T} \defeq p_\allParametersARHMM(\stateARHMM_t
\cond \observationARHMM_{1:T})$ for a Hidden Markov Model can be obtained by the recursion
{\footnotesize \begin{align*}
\stateConditionalProbabilityARHMM_{t \cond T} = \stateConditionalProbabilityARHMM_{t \cond t} \odot \bigg\{ \transitionMatrixARHMM_t^T \cdot \bigg[\stateConditionalProbabilityARHMM_{t+1 \cond T}  \; (\div) \; \stateConditionalProbabilityARHMM_{t+1 \cond t} \bigg] \bigg\}	
\end{align*}}%
where the formula is initialized by $\stateConditionalProbabilityARHMM_{T \cond T}$ (obtained from the filtering algorithm of Prop.~\ref{prop:filtering_for_rARHMM}) and is then  iterated backwards for $t=T-1 \, , T-2 \, , \hdots \,, 1$, in a step analogous to the backward pass of the classic forward-backward recursions for plain HMMs~\citep{rabinerTutorialHiddenMarkov1989}.
Here, $\transitionMatrixARHMM_t$ represents the $(K \times K)$ transition matrix whose $(k,k')$-th element is $p_\theta(\stateARHMM_{t+1}=k' \cond \stateARHMM_t=k, \red{\observationARHMM_{1:t}})$, the symbol $\odot $ denotes element-wise multiplication, and the symbol $(\div)$ denotes element-wise division. 
\label{prop:smoothing_for_rARHMM}
\end{proposition}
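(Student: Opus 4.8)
The plan is to re-derive this recursion as the recurrent--autoregressive counterpart of the backward pass of the classical forward--backward recursions, tracking how the extra edges reinterpret each factor. I would begin from the definition $\stateConditionalProbabilityARHMM_{t \cond T} = p_\theta(\stateARHMM_t \cond \observationARHMM_{1:T})$ and bring in the next state via the law of total probability followed by the chain rule,
\[
p_\theta(\stateARHMM_t \cond \observationARHMM_{1:T}) \;=\; \sum_{k'=1}^{\numStatesARHMM} p_\theta(\stateARHMM_{t+1} = k' \cond \observationARHMM_{1:T}) \; p_\theta(\stateARHMM_t \cond \stateARHMM_{t+1} = k', \observationARHMM_{1:T}),
\]
which holds for any joint law. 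Everything then rests on two sub-claims: (i) the conditional independence $p_\theta(\stateARHMM_t \cond \stateARHMM_{t+1}, \observationARHMM_{1:T}) = p_\theta(\stateARHMM_t \cond \stateARHMM_{t+1}, \observationARHMM_{1:t})$, which drops the future observations; and (ii) a Bayes-rule expansion of the right-hand side into the filtered and one-step-predicted vectors already produced by Prop.~\ref{prop:filtering_for_rARHMM}.

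Sub-claim (i) --- that future observations carry nothing extra about $\stateARHMM_t$ once $\stateARHMM_{t+1}$ and the past observations are fixed --- is the step I expect to be the main obstacle, since in a plain HMM it is immediate from the state Markov chain, but here the recurrence edges $\observationARHMM \to \stateARHMM$ and autoregression edges $\observationARHMM \to \observationARHMM$ make the observations densely interconnected. I would prove it directly from the complete-data likelihood~\eqref{eqn:rARHMM_cdl_compact} by partitioning its factors into (a) a block $A(\stateARHMM_{1:t}, \observationARHMM_{1:t})$ holding the initialization and every transition/emission factor of time index at most $t$, which sums over $\stateARHMM_{1:(t-1)}$ to $p_\theta(\stateARHMM_t, \observationARHMM_{1:t})$; (b) the single transition factor $p_\theta(\stateARHMM_{t+1} \cond \stateARHMM_t, \observationARHMM_{(t+1-\recurrenceOrder):t})$; and (c) all remaining factors (the emission of $\observationARHMM_{t+1}$ and every transition/emission factor of index at least $t+2$), whose latent arguments all lie in $\stateARHMM_{(t+1):T}$, so that summing them over $\stateARHMM_{(t+2):T}$ gives a function $D(\stateARHMM_{t+1}, \observationARHMM_{1:T})$ free of $\stateARHMM_t$. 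Hence $p_\theta(\stateARHMM_t, \stateARHMM_{t+1}, \observationARHMM_{1:T})$ factors as $p_\theta(\stateARHMM_t, \observationARHMM_{1:t}) \, p_\theta(\stateARHMM_{t+1} \cond \stateARHMM_t, \observationARHMM_{(t+1-\recurrenceOrder):t}) \, D(\stateARHMM_{t+1}, \observationARHMM_{1:T})$, and dividing by the same quantity summed over $\stateARHMM_t$ cancels $D$, leaving a conditional that depends on $\observationARHMM_{1:T}$ only through $\observationARHMM_{1:t}$; the same grouping also yields $p_\theta(\stateARHMM_{t+1} \cond \stateARHMM_t, \observationARHMM_{(t+1-\recurrenceOrder):t}) = p_\theta(\stateARHMM_{t+1} \cond \stateARHMM_t, \observationARHMM_{1:t})$, i.e.\ the relevant entry of $\transitionMatrixARHMM_t$. (Equivalently, both facts can be read off by Bayes-ball in Fig.~\ref{fig:ARHMM_vs_rARHMM}: every trail from $\stateARHMM_t$ toward a future observation is blocked either at $\stateARHMM_{t+1}$ or at a conditioned past observation, which works precisely because each node has at most one state-parent, as noted in the remark of Sec.~\ref{sec:rARHMM_model}. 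Early timesteps with $\recurrenceOrder > t$ or $\autoregressiveOrder > t$ are absorbed into the initialization, as in Rk.~\ref{rk:initializing_the_filtering_algorithm_for_HMMs}.)

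Granting (i), the rest is bookkeeping. By Bayes' rule,
\[
p_\theta(\stateARHMM_t = k \cond \stateARHMM_{t+1} = k', \observationARHMM_{1:t}) \;=\; \frac{p_\theta(\stateARHMM_{t+1} = k' \cond \stateARHMM_t = k, \observationARHMM_{1:t}) \; [\stateConditionalProbabilityARHMM_{t \cond t}]_k}{[\stateConditionalProbabilityARHMM_{t+1 \cond t}]_{k'}},
\]
where the numerator transition probability is an entry of $\transitionMatrixARHMM_t$, $[\stateConditionalProbabilityARHMM_{t \cond t}]_k$ is the filtered probability, and $[\stateConditionalProbabilityARHMM_{t+1 \cond t}]_{k'}$ the one-step prediction, all supplied by Prop.~\ref{prop:filtering_for_rARHMM}. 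Substituting this into the total-probability sum, pulling the $\stateARHMM_t$-dependent filtered factor out of the sum over $k'$, and writing the residual sum over $k'$ in matrix--vector form --- with the transpose on $\transitionMatrixARHMM_t$ accounting for the $(k,k')$ index convention of Prop.~\ref{prop:filtering_for_rARHMM} --- reproduces
\[
\stateConditionalProbabilityARHMM_{t \cond T} \;=\; \stateConditionalProbabilityARHMM_{t \cond t} \odot \{ \transitionMatrixARHMM_t^T [ \stateConditionalProbabilityARHMM_{t+1 \cond T} \; (\div) \; \stateConditionalProbabilityARHMM_{t+1 \cond t} ] \},
\]
with $(\div)$ elementwise division and $\odot$ the Hadamard product. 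Finally, since this identity is valid for every $t < T$ and the base case $\stateConditionalProbabilityARHMM_{T \cond T}$ comes from the filter of Prop.~\ref{prop:filtering_for_rARHMM}, iterating it backward over $t = T-1, T-2, \ldots, 1$ delivers all the smoothed vectors, exactly as in the classical backward pass.
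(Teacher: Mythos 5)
Your proposal is correct, and its overall architecture matches the paper's proof exactly: both reduce the recursion to the key conditional independence $p_\theta(\stateARHMM_t \cond \stateARHMM_{t+1}, \observationARHMM_{1:T}) = p_\theta(\stateARHMM_t \cond \stateARHMM_{t+1}, \observationARHMM_{1:t})$, then apply Bayes' rule to rewrite that conditional in terms of the filtered, predicted, and transition quantities, and finally sum over $\stateARHMM_{t+1}$ and repackage in matrix form. Where you genuinely diverge is in how you establish the conditional independence itself. The paper proves it by induction on the number of appended future observations, peeling off $\observationARHMM_{t+1}$, then $\observationARHMM_{t+2}$, and at each step invoking a d-separation-style cancellation (its ``FPOBN'' argument). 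You instead argue once and for all from the complete-data likelihood of \eqref{eqn:rARHMM_cdl_compact}, partitioning its factors into a past block that marginalizes to $p_\theta(\stateARHMM_t, \observationARHMM_{1:t})$, the single bridging transition factor, and a future block whose latent arguments all lie in $\stateARHMM_{(t+1):T}$ and therefore cancels in the conditional. Your route is arguably cleaner: it avoids the induction, makes transparent exactly which structural feature is being used (each node has only one state-parent, so the future block never touches $\stateARHMM_t$), and delivers as a free by-product the identification $p_\theta(\stateARHMM_{t+1} \cond \stateARHMM_t, \observationARHMM_{(t+1-\recurrenceOrder):t}) = p_\theta(\stateARHMM_{t+1} \cond \stateARHMM_t, \observationARHMM_{1:t})$, i.e.\ that the model's transition factor really is the entry of $\transitionMatrixARHMM_t$ appearing in the recursion --- a point the paper uses implicitly without comment. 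The paper's induction, on the other hand, is more local and graphical, and generalizes more readily to situations where one wants to reason edge-by-edge rather than rewrite the whole joint. Either argument suffices; there is no gap in yours.
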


\begin{proof}\footnotemark \footnotetext{Our proof is inspired by the proof given by \citet[pp.700-702]{hamilton1994time} for the \ARHMM~(i.e, the special case of \rARHMM~in which there are no recurrent edges).} 

We proceed in steps:

\begin{itemize}
\item \boxed{\text{Step 1}}  We show \boxed{p_\theta(\stateARHMM_t \cond \stateARHMM_{t+1}, \observationARHMM_{1:T}) = p_\theta(\stateARHMM_t \cond \stateARHMM_{t+1}, \observationARHMM_{1:t})}.  That is, the current state $\stateARHMM_t$ depends on future observations $\observationARHMM_{t+1:T}$ only through the next state $\stateARHMM_{t+1}$.

\begin{itemize}
	\item \boxed{\text{Step 1a}}  We show \boxed{p_\theta(\stateARHMM_t \cond \stateARHMM_{t+1}, \observationARHMM_{1:t+1}) = p_\theta(\stateARHMM_t \cond \stateARHMM_{t+1}, \observationARHMM_{1:t})}. 
{\footnotesize \begin{align*}
p_\theta(\stateARHMM_t \cond \stateARHMM_{t+1}, \observationARHMM_{1:t+1}) &= p_\theta(\stateARHMM_t \cond \stateARHMM_{t+1}, \observationARHMM_{t+1}, \observationARHMM_{1:t}) && \footnotesizetext{split off term from sequence} \\
&= \df{p_\theta(\stateARHMM_t, \observationARHMM_{t+1} \cond \stateARHMM_{t+1}, \observationARHMM_{1:t})}{p_\theta(\observationARHMM_{t+1} \cond \stateARHMM_{t+1}, \observationARHMM_{1:t})} && \footnotesizetext{conditional density} \\
&= \df{\cancel{p_\theta(\observationARHMM_{t+1} \cond \stateARHMM_t, \stateARHMM_{t+1}, \observationARHMM_{1:t})} p (\stateARHMM_t \cond \stateARHMM_{t+1}, \observationARHMM_{1:t})}{\cancel{p_\theta(\observationARHMM_{t+1} \cond \stateARHMM_{t+1}, \observationARHMM_{1:t})}} && \footnotesizetext{chain rule} \\
&= p (\stateARHMM_t \cond \stateARHMM_{t+1}, \observationARHMM_{1:t}) && \footnotesizetext{FPOBN}
\end{align*}}%
In the last line, the two canceled terms are equal by FPOBN (the Fundamental Property of Bayes Networks).\footnote{The Fundamental Property of Bayes Networks is: \textit{A node is independent of its non-descendants given its parents.}  In particular, since $\stateARHMM_t$ is a non-descendent of $\observationARHMM_{t+1}$, it is independent of $\observationARHMM_{t+1}$ given its parents $\stateARHMM_{t+1}$ and $\observationARHMM_{1:t}$. }
	\item \boxed{\text{Step 1b}}  We show \boxed{p_\theta(\stateARHMM_t \cond \stateARHMM_{t+1}, \observationARHMM_{1:t+2}) = p_\theta(\stateARHMM_t \cond \stateARHMM_{t+1}, \observationARHMM_{1:t+1})}.
By the same argument as in step 1a (splitting up the sequence, conditional density, chain rule), but replacing
{\footnotesize \begin{align*}
\observationARHMM_{1:t+1} \leftarrow \observationARHMM_{1:t+2} \quad  ,  \quad 
\observationARHMM_{t+1} \leftarrow \observationARHMM_{t+2} 	
\end{align*}}%
the proposition holds if
\[ p(\observationARHMM_{t+2} \cond \stateARHMM_t, \stateARHMM_{t+1}, \observationARHMM_{1:t+1}) = p(\observationARHMM_{t+2} \cond \stateARHMM_{t+1}, \observationARHMM_{1:t+1})\]
that is if we get the same cancelation.  And we see
{\footnotesize \begin{align*}
p(\observationARHMM_{t+2} &\cond \stateARHMM_t, \stateARHMM_{t+1}, \observationARHMM_{1:t+1}) = \sum_{k=1}^K p(\observationARHMM_{t+2}, \stateARHMM_{t+2} = k \cond \stateARHMM_t, \stateARHMM_{t+1}, \observationARHMM_{1:t+1}) && \footnotesizetext{LTP} \\
&= \sum_{k=1}^K p(\observationARHMM_{t+2} \cond \stateARHMM_{t+2} = k, \cancel{\stateARHMM_t}, \stateARHMM_{t+1}, \observationARHMM_{1:t+1}) \; p(\stateARHMM_{t+2} = k \cond  \cancel{\stateARHMM_t}, \stateARHMM_{t+1}, \observationARHMM_{1:t+1})&& \footnotesizetext{chain rule, FPOBN} \\
&= \sum_{k=1}^K p(\observationARHMM_{t+2},  \stateARHMM_{t+2} = k \cond \stateARHMM_{t+1}, \observationARHMM_{1:t+1}) && \footnotesizetext{undo chain rule} \\ 
&=  p(\observationARHMM_{t+2} \cond \stateARHMM_{t+1}, \observationARHMM_{1:t+1}) && \footnotesizetext{undo LTP}   
\end{align*}}%
	\item \boxed{\text{Conclusion}} The claim follows from Steps 1a and 1b by an induction argument. 
\end{itemize}	
\item \boxed{\text{Step 2}.} We show that \boxed{\explaintermbrace{smoothed pairwise}{p(\stateARHMM_t, \stateARHMM_{t+1} \cond \observationARHMM_{1:T})} = \explaintermbrace{smoothed}{p(\stateARHMM_{t+1} \cond \observationARHMM_{1:T})} \; \explaintermbrace{transition}{p(\stateARHMM_{t+1} \cond \stateARHMM_{t}, \observationARHMM_{1:t})} \; \df{\explaintermbrace{filtered}{p(\stateARHMM_t \cond \observationARHMM_{1:t})}}{\explaintermbrace{predicted}{p(\stateARHMM_{t+1} \cond \observationARHMM_{1:t})}} }.
We have
{\footnotesize \begin{align*}
p(\stateARHMM_t, \stateARHMM_{t+1} \cond \observationARHMM_{1:T}) &= p(\stateARHMM_{t+1} \cond \observationARHMM_{1:T}) \; p(\stateARHMM_t \cond \stateARHMM_{t+1}, \observationARHMM_{1:T}) && \footnotesizetext{chain rule} \\
&= p(\stateARHMM_{t+1} \cond \observationARHMM_{1:T}) \; p(\stateARHMM_t \cond \stateARHMM_{t+1}, \observationARHMM_{1:t}) && \footnotesizetext{Step 1} \\
&= p(\stateARHMM_{t+1} \cond \observationARHMM_{1:T}) \; \df{p(\stateARHMM_t \cond \observationARHMM_{1:t}) \; p(\stateARHMM_{t+1} \cond \stateARHMM_t, \observationARHMM_{1:t})}{p(\stateARHMM_{t+1} \cond \observationARHMM_{1:t})} && \footnotesizetext{Bayes rule (on 2nd term)\footnotemark} 
\end{align*}}%
\footnotetext{To justify the application of Bayes rule, imagine that $\stateARHMM_t$ plays the role of the parameter and $\stateARHMM_{t+1}$ plays the role of the observed data.  The term $\observationARHMM_{1:t}$ is just a conditioning set throughout.}
\item \boxed{\text{Step 3.}} We prove the proposition.
{\footnotesize \begin{align*}
p(\stateARHMM_t \cond \observationARHMM_{1:T}) &= \sum_{k=1}^K p(\stateARHMM_t, \stateARHMM_{t+1}=k \cond \observationARHMM_{1:T}) && \footnotesizetext{Law of Total Prob.} \\
\stateConditionalProbabilityARHMM_{t \cond T}  &= \sum_{k=1}^K \bigg[ \stateConditionalProbabilityARHMM_{t+1 \cond T} \bigg]_k \bigg[ \transitionMatrixARHMM_t \bigg]_{:, \stateARHMM_{t+1}=k} \; \df{\stateConditionalProbabilityARHMM_{t \cond t}}{\bigg[ \stateConditionalProbabilityARHMM_{t+1 \cond t}\bigg]_k}  && \footnotesizetext{Step 2, Notation} \\
 &= \stateConditionalProbabilityARHMM_{t \cond t} \; \sum_{k=1}^K \bigg[ \transitionMatrixARHMM_t \bigg]_{:, \stateARHMM_{t+1}=k} \; \df{\bigg[ \stateConditionalProbabilityARHMM_{t+1 \cond T} \bigg]_k }{\bigg[ \stateConditionalProbabilityARHMM_{t+1 \cond t}\bigg]_k}  && \footnotesizetext{Pull out constant} \\
&= \stateConditionalProbabilityARHMM_{t \cond t} \odot \bigg\{ \transitionMatrixARHMM_t^T \cdot \bigg[\stateConditionalProbabilityARHMM_{t+1 \cond T}  \; (\div) \; \stateConditionalProbabilityARHMM_{t+1 \cond t} \bigg] \bigg\} && \footnotesizetext{Def. matrix multiplication}	
\end{align*}}%
\end{itemize}

\end{proof}

\begin{remark}
As we saw in Step 1, the derivation of the smoother in  Prop.~\ref{prop:smoothing_for_rARHMM} relies on the fact that while each node (observation or state) can have \textit{many} observation parents, it can have only \textit{one} state parent (namely, the closest in time from the present or past).
\end{remark}	

\begin{remark}
The filtering (Prop~\ref{prop:filtering_for_rARHMM}) and smoothing (Prop~\ref{prop:smoothing_for_rARHMM}) formulae reveal that state estimation for \rARHMM~can be handled for :
\begin{itemize}
\item \textit{any order} of recurrence and/or autoregression\footnote{In fact, the proof reveals that the order can increase with timestep $t$, opening the door to constructions involving exponential weighted moving averages.}
\item \textit{any functional form} of emissions and transitions 
\end{itemize}

Furthermore, although it was not explicitly represented here, the same formulae hold when there are
\begin{itemize}
\item Modulation of transitions and emissions by \textit{exogenous covariates}.\footnote{A sequence of vectors $\set{\+u_t}$ is considered to be a sequence of exogenous covariates if each $\+u_t$ contains no information about $\stateARHMM_t$ that is not contained in $\observationARHMM_{1:t-1}$ \cite[pp.692]{hamilton1994time}.} 	
\end{itemize}

\end{remark}

\blue{TODO: Make explicit how we go from filtering and smoothing to the pairwise and unary marginals, which is what we need for the VES and VEZ steps.}

\blue{TODO: Add remark stating the inferential complexity is the complexity of evaluating the emissions and transitions. This fact is used when giving the complexity of our VES and VEZ steps.}

  \label{sec:appendix_ARHMM}
 
\section{HRSDM Method Details}
We now review modeling and inference details for our proposed \HSRDM, in the following sections



\subsection{Priors on model parameters}
\label{sec:prior}

The symbol $\theta$ denotes all model parameters for our \HSRDM. Using the structure of our model in \Eqref{eqn:complete_data_likelihood_factorization_for_rHSDM}, we can expand $\theta$ into constituent components:
$\parameters=(\parameters_\pieceSystemState, \parameters_\pieceEntityState, \parameters_\pieceEntityEmissions, \parameters_\pieceInit)$, 
 where $\parameters_\pieceSystemState$ are the parameters that govern the system-level discrete state transitions, $\parameters_\pieceEntityState$  govern the entity-level discrete state transitions, $\parameters_\pieceEntityEmissions$ govern the entity-level emissions, and $\parameters_\pieceInit$ govern the initial distribution for states and regimes.   

We define a prior over $\theta$ whose factorization structure reflects this decomposition:
{\footnotesize \begin{align}
p(\parameters)=p(\parameters_\pieceSystemState) \, p(\parameters_\pieceEntityState) \, p(\parameters_\pieceEntityEmissions) \, p(\parameters_\pieceInit) 
\label{eqn:factorized_prior_for_HSRDMs}
\end{align}}%
As we see in Sec.~\ref{sec:M_step}, this choice of prior simplifies the M-step.

\textbf{Prior on system-level state transition parameters $\parameters_\pieceSystemState$.}
For the system-level transition probability matrix $\transitionMatrixForSystem$, a $\numSystemStates \times \numSystemStates$ matrix whose entries are all non-negative and rows sum to one, we assume a sticky Dirichlet prior~\citep{foxStickyHDPHMMApplication2011} to encourage self-transitions so that in typical samples, one system state would persist for long segments. Concretely, for each row we set
{\footnotesize \begin{align}
\transitionMatrixForSystem_{j1}, \ldots  \transitionMatrixForSystem_{jL} \sim \text{Dir}( \alpha, \ldots \alpha, \alpha + \kappa, \alpha, \ldots \alpha)
\label{eqn:sticky_dirichlet_prior_on_system_leve_transitions}
\end{align}}%
where all $L$ entries have a symmetric base value  $\alpha = $ 1.0, and the added value $\kappa$ that impacts the self-transition entry (the $(j,j)$-th entry of the matrix) is set to 10.0.  We then set the log transition probability $\logTransitionMatrixForSystem$ to the element-wise log of $\transitionMatrixForSystem$.

\textbf{Prior on entity-level state transition parameters $\parameters_\pieceEntityState$.}  In our experiments, we used a non-informative prior, $p(\parameters_\pieceEntityState) \propto 1$.  The use of a sticky Dirichlet prior, as was used with the system-level transition parameters, could be expected to produce smoother entity-level state segmentations. Currently, the entity-level segmentations are choppier than those at the system-level (e.g., compare the bottom-left and top-right subplots of Fig.~\ref{fig:supra_results}).

\textbf{Prior on emissions $\parameters_\pieceEntityEmissions$.} In our experiments, we used a non-informative prior, $p(\parameters_\pieceEntityEmissions) \propto 1$.  

\textbf{Prior on initial states and observations $\parameters_\pieceInit$}. For initial states at both system and entity level, we use a symmetric Dirichlet with large concentration so that all states have reasonable probability a-priori. This avoids the pathology of ML estimation that locks into only one state as a possible initial state early in inference due to poor initialization.

\subsection{Updating the posterior over system-level states} \label{sec:VES_step}

In this section, we discuss the update to the posterior over system-level states; that is, the variational E-S step of \Eqref{eqn:CAVI_updates}. We find
{\footnotesize \begin{align*}
q(&\allSystemStates)  \wt{\propto} \exp \bigg\{ \explaintermbrace{init dist}{\log \pi_\systemState(\systemState_0)} + \explaintermbrace{transitions}{\sum_{t=1}^T  \log p(\systemState_t \cond \systemState_{t-1}, \observation_{t-1}^^\allEntities, \parameters)} + \explaintermbrace{emissions}{\sum_{\entity=1}^\numEntities \sum_{t=1}^T \E_{q(\allStates^^\allEntities) } \log p(\state_t^^\entity \cond \state_{t-1}^^\entity, \observation_{t-1}^^\entity, \systemState_t, \parameters)} \bigg\}  \\
&= \explaintermbrace{init state}{\pi_\systemState(\systemState_0)} \;   \explaintermbrace{transitions}{\prod_{t=1}^T  p(\systemState_t \cond \systemState_{t-1}, \observation_{t-1}^^\allEntities, \parameters)} \;\explaintermbrace{initial emissions}{\prod_{\entity=1}^\numEntities \exp \bigg\{ \sum_{\someState=1}^\numStates \log \pi_{\entityState_\entity}(\state_0^^\entity=\someState) \; q(\state_0^^\entity = \someState) \bigg\} } \\
&\quad \explaintermbrace{remaining emissions}{ \prod_{\entity=1}^\numEntities  \prod_{t=1}^T  \exp \bigg\{  \sum_{\someState, \someState'=1}^{\numStates} \log p(\state_t^^\entity= \someState'\cond \state_{t-1}^^\entity = \someState, \observation_{t-1}^^\entity, \systemState_t, \parameters) \; q(\state_{t}^^\entity = \someState', \state_{t-1}^^\entity = \someState)  \bigg\}} \labelit \label{eqn:VES_step_explicitly}
\end{align*}}%
This can be considered as the posterior of an input-output Hidden Markov Model with $\numEntities$ independent autoregressive categorical emissions.  The evaluation of the transition function is $\mathcal{O}\big( T (\numSystemStates^2 + \numSystemStates \dimObservations \numEntities \dimCovariatesSystem) \big)$, where $\dimCovariatesSystem$ is the dimension of the system-level covariates, and where we have assumed that the evaluation of the system-level recurrence function $\recurrenceFunctionForSystem$ takes $\dimObservations \numEntities \dimCovariatesSystem$ operations, as it would if $\recurrenceFunctionForSystem_\recurrenceParameterForSystem \big(\observation_{t-1}^^\allEntities, \covariatesForSystem_{t-1}\big) = (\observation_{t-1}^^1, \hdots, \observation_{t-1}^^\numEntities, \covariatesForSystem_{t-1})^T$.  The evaluation of the emissions function is  $\mathcal{O}\big( T \numEntities \numSystemStates 
(\numStates^2 + \numStates \dimObservations  \dimCovariatesEntities) \big)$,  where $\dimCovariatesEntities$ is the dimension of the entity-level covariates, and where we have assumed that the evaluation of the entity-level recurrence function $\recurrenceFunctionForEntity$ takes $\dimObservations \dimCovariatesEntities$ operations, as it would if $\recurrenceFunctionForEntity_\recurrenceParameterForEntity \big(\observation_{t-1}^^\entity, \covariatesForEntity_{t-1}^^\entity \big) = (\observation_{t-1}^^\entity, \covariatesForEntity_{t-1}^^\entity)^T$.  Thus, by Props.~\ref{prop:filtering_for_rARHMM}~and~\ref{prop:smoothing_for_rARHMM}, filtering and smoothing can be computed with $\mathcal{O}\big( T \numEntities \big[   \numSystemStates^2 + \numSystemStates (\numStates^2 + \dimObservations \dimCovariatesSystem +  \numStates \dimObservations \dimCovariatesEntities )\big] \big)$ runtime complexity, under mild assumptions on the recurrence functions.  As a result, so can the computation of the unary and adjacent pairwise marginals necessary for the VES and M steps.

\blue{TODO: MCH suggested (and I agree) that the evaluation of the functions should be spelled out - e.g. matrix computation, posterior evaluation, etc.}

\blue{TODO: Clarify this how the unary and adjacent marginals can be computed via the provided propositions giving filtering and smoothing.} 

\blue{TODO: Refer to a statement of how the computational complexity of forward-backward equals the complexity of evaluating the transition and emissions.}

\blue{TODO: Shouldn't the initial emissions in the last line exist in the line above it?}

\blue{TODO: Why is there a tilde above the propto here  (and similarly for VEZ step)?  Is it because the factors for transitions, emissions may no longer be normalized?}

\subsection{Updating the posterior over entity-level states} \label{sec:VEZ_step}

In this section, we discuss the update to the posterior over entity-level states; that is, the variational E-Z step of \Eqref{eqn:CAVI_updates}.  We obtain
{\footnotesize \begin{align*}
q(\allStates^^\allEntities) & \; \wt{\propto} \prod_{\entity=1}^\numEntities \bigg[  \explaintermbrace{initial state ($\in \R^{\numStates}$)}{ \pi_{\state_\entity}(\state_0^^\entity)} \; \explaintermbrace{transitions ($\in \R^{(\wt{T}-1) \times \numStates \times \numStates}$)}{\prod_{t=1}^T \exp \bigg\{ \sum_{\someSystemState=1}^\numSystemStates \log p (\state_t^^\entity \cond \state_{t-1}^^\entity, \observation_{t-1}^^\entity, \systemState_t =\someSystemState, \parameters) \, q(\systemState_t = \someSystemState)\bigg\}} \\
& \quad \explaintermbrace{initial emission \; ($\in \R^{\numStates}$)}{ p(\observation_0^^\entity \cond \state_0^^\entity, \parameters)} \;   \explaintermbrace{remaining emissions ($\in \R^{(\wt{T}-1) \times \numStates}$)}{  \prod_{t=1}^{T}  p(\observation_t^^\entity \cond \observation_{t-1}^^\entity, \state_t^^\entity, \parameters) }    \bigg] \labelit \label{eqn: VEZ_step_explicitly}
\end{align*}}%
where we have defined $\wt{T} \defeq T+1$ to denote all timesteps after accounting for the zero-indexing. This variational factor can be considered as posterior of  $\numEntities$ conditionally independent Hidden Markov Models with autoregressive categorical emissions which recurrently feedback into the transitions.  As per Sec.~\ref{sec:VES_step}, the transition function can be evaluated with  $\mathcal{O}\big( T \numEntities \numSystemStates 
(\numStates^2 + \numStates \dimObservations  \dimCovariatesEntities) \big)$ runtime complexity,  where $\dimCovariatesEntities$ is the dimension of the entity-level covariates, and where we have assumed that the evaluation of the entity-level recurrence function $\recurrenceFunctionForEntity$ takes $\dimObservations \dimCovariatesEntities$ operations, as it would if $\recurrenceFunctionForEntity_\recurrenceParameterForEntity \big(\observation_{t-1}^^\entity, \covariatesForEntity_{t-1}^^\entity \big) = (\observation_{t-1}^^\entity, \covariatesForEntity_{t-1}^^\entity)^T$. The evaluation of the emissions function is $\mathcal{O}\big( T \numEntities  \numStates \dimObservations^2  \big)$, assuming that the emissions distribution has a density that can be evaluated with $\mathcal{O}\big(\dimObservations^2  \big)$  operations at each timestep.   Thus, by Props.~\ref{prop:filtering_for_rARHMM}~and~\ref{prop:smoothing_for_rARHMM}, filtering and smoothing can be computed with $\mathcal{O}\big(   T \numEntities \big[ \numStates^2 + \numStates \dimObservations^2 +  \numStates  \dimObservations \dimCovariatesEntities \big] \big)$  runtime complexity, under mild assumptions on the entity-level recurrence function and the emissions distribution.  As a result, so can the computation of the unary and adjacent pairwise marginals necessary for the VEZ and M steps.

\blue{TODO: MCH suggested (and I agree) that the evaluation of the functions should be spelled out - e.g. matrix computation, posterior evaluation, etc.}

\blue{TODO: Clarify this how the unary and adjacent marginals can be computed via the provided propositions giving filtering and smoothing.} 

\blue{TODO: Refer to a statement of how the computational complexity of forward-backward equals the complexity of evaluating the transition and emissions.}

\subsection{Updating the parameters}
\label{sec:M_step}

The M-step updates the transition parameters and emission parameters $\theta$ of our \HSRDM~given recent estimates of state-level posterior $q( s_{0:T} )$ and entity-level posteriors $q( \allStates^^{1:\numEntities} )$.

This update requires solving the following optimization problem 
{\footnotesize \begin{align*}
\parameters &= \argmax_{\parameters}  \mathcal{L}(\parameters) \\
\text{where} \, \mathcal{L}(\parameters) &\defeq
 \explaintermbrace{expected log complete data likelihood}{\E_{q(\allSystemStates^^\allEntities) q(\allEntityStates^^\allEntities)} \bigg[ \log  p(\allObservations^^\allEntities, \allSystemStates^^\allEntities, \allEntityStates^^\allEntities \cond \parameters) \bigg]}  +  \explaintermbrace{log prior}{\log p(\parameters)} \labelit \label{eqn:M_step_objective}
\end{align*}}%
Based on the structure of the model in \Eqref{eqn:complete_data_likelihood_factorization_for_rHSDM}, we can decompose this into separate optimization problems over the different model pieces $\parameters=(\parameters_\pieceSystemState, \parameters_\pieceEntityState, \parameters_\pieceEntityEmissions, \parameters_\pieceInit)$ by assuming an appropriately factorized prior, as was done in \Eqref{eqn:factorized_prior_for_HSRDMs}.  

To be concrete, for a \HSRDM~with transitions given by \Eqref{eqn:instantiated_transitions} and Gaussian vector autoregressive (Gaussian VAR) emissions 
{\footnotesize \begin{align}
\observation_t^^\entity  \cond \observation_{t-1}^^\entity , \state_t^^\entity  & \sim N \bigg (\observationMatrix_\entity^^{\state_t^^\entity } \observation_{t-1}^^\entity + \observationBias_\entity^^{\state_t^^\entity} \, , \;\observationNoiseCovariance_\entity^^{\state_t^^\entity} \bigg) \; , \label{eqn:Gaussian_VAR_emissions} 
\end{align}}%
as used in Secs.~\ref{sec:fig8} and~\ref{sec:basketball_experiment} we have
\[ \parameters_\pieceSystemState = (\recurrenceMatrixForSystem, \logTransitionMatrixForSystem), \quad \parameters_\pieceEntityState = \set{\recurrenceMatrixForEntity_\entity, \logTransitionMatrixForEntity_\entity}_{\entity=1}^\numEntities, \quad  \parameters_\pieceEntityEmissions = \set{\set{\observationMatrix_{\entity \someState}, \observationBias_{\entity \someState}, \observationNoiseCovariance_{\entity \someState}}_{\someState=1}^\numStates}_{\entity=1}^{\numEntities} \] 
 
 \blue{TODO: Add explicit parameter names for the initial distributions.}

Using this grouping of the parameters along with the complete data likelihood specification of \Eqref{eqn:complete_data_likelihood_factorization_for_rHSDM} and the prior assumption in \Eqref{eqn:factorized_prior_for_HSRDMs}, we can decompose the objective as
\[ \mathcal{L}(\parameters) = \mathcal{L}_\pieceInit(\parameters_\pieceInit) +  \mathcal{L}_\pieceSystemState(\parameters_\pieceSystemState) + \sum_{\entity=1}^\numEntities \mathcal{L}_\pieceEntityState^^\entity(\parameters_\pieceEntityState^^\entity) +  + \mathcal{L}_\pieceEntityEmissions^^\entity(\parameters_\pieceEntityEmissions^^\entity) \]
We can then complete the optimization by separately performing M-steps for each of the subcomponents of $\parameters$. For example, to optimize the parameters governing the entity-level discrete state transitions $\parameters_\pieceEntityState^^\entity$ for each entity $\entity=1,\hdots, \numEntities$, we only need to optimize 
{\footnotesize \begin{align*}
\mathcal{L}_\pieceEntityState^^\entity(\parameters_\pieceEntityState^^\entity) &\defeq
 \explaintermbrace{expected log entity discrete state transitions}{ \sum_{t=1}^T \E_{q(\allEntityStates^^\allEntities) q(\allSystemStates)  }  \bigg[ \log  p(\entityState_t^^\entity \cond \entityState_{t-1}^^\entity, \observation_{t-1}^^\entity, \systemState_t, \parameters_\pieceEntityState) \bigg]}  +  \explaintermbrace{log prior}{\log p(\parameters_\pieceEntityState)} \\
 &=  \sum_{t=1}^T  \sum_{\someState, \someState'=1}^\numStates \sum_{\someSystemState=1}^\numSystemStates \log  p(\state_t^^\entity = \someState' \cond \state_{t-1}^^\entity = \someState, \observation_{t-1}^^\entity, \systemState_t = \someSystemState, \parameters_\pieceEntityState)  \, q(\state_t^^\entity = \someState', \state_{t-1}^^\entity = \someState) \, q(\systemState_t = \someSystemState) \\
 & \qquad \qquad \qquad + \log p(\parameters_\pieceEntityState) \labelit \label{eqn:objective_for_ETP_M_step}
\end{align*}}%
In particular, we do not require the variational posterior over the full entity-level discrete state sequence $q(\allEntityStates^^\allEntities)$, but merely the pairwise marginals $q(\state_t^^\entity = \someState', \state_{t-1}^^\entity = \someState)$, obtainable from the VEZ step in \Eqref{eqn:CAVI_updates}.  Similarly, we do not require the variational posterior over the full system-level discrete state sequence $q(\allSystemStates)$, but merely the unary marginals  $q(\systemState_t = \someSystemState)$, obtainable from the VES step in \Eqref{eqn:CAVI_updates}. 
 
The other components of $\parameters$ are optimized similarly. In general, the optimization can be performed by gradient descent (e.g. using JAX for automatic differentiation~\citep{bradburyJAXComposableTransformations2018}), although it can be useful to bring in closed-form solutions for the M substeps in certain special cases.  For instance, when Gaussian VAR emissions are used as in \Eqref{eqn:Gaussian_VAR_emissions}, the entity emission parameters $\parameters_\pieceEntityEmissions =  \set{\set{\observationMatrix_{\entity \someState}, \observationBias_{\entity \someState}, \observationNoiseCovariance_{\entity \someState}}_{\someState=1}^\numStates}_{\entity=1}^{\numEntities}$ can be estimated with closed-form updates using the sample weights $q(\state_t^^\entity = \someState)$ available from the VEZ-step.

\blue{TODO: Add link to Von Mises.}

\blue{TODO: Possibly mention tpms.}

\subsection{Variational lower bound}

A lower bound on the marginal log likelihood $\VLBO \leq \log p(\allObservations^^\allEntities \cond \parameters)$, is given by
{\footnotesize \begin{align} 
\VLBO [& \parameters, q ] 
= \explaintermbrace{energy}{\E_q \big[\log  p(\allObservations^^\allEntities, \allStates^^\allEntities, \allSystemStates \cond \parameters) \big]} + \explaintermbrace{entropy}{\mathbb{H} \big[q(\allStates^^\allEntities, \allSystemStates)\big]} \label{eqn:VLBO_for_HSRDM_appendix}
\end{align}}%

The energy term $\E_q \big[\log  p(\allObservations^^\allEntities, \allStates^^\allEntities, \allSystemStates \cond \parameters) \big]$ is identical to the relevant term in the objective function for the M-step given in \Eqref{eqn:M_step_objective}.  Based on the structure of the model assumed in \Eqref{eqn:complete_data_likelihood_factorization_for_rHSDM}, the  energy term decomposes into separate pieces for initialization, system transitions, entity transitions, and emissions.  For example, see \Eqref{eqn:objective_for_ETP_M_step} for the piece relevant to entity transitions.

Now we consider computation of the entropy $\mathbb{H} \big[q(\allStates^^\allEntities, \allSystemStates)\big]$ in \Eqref{eqn:VLBO_for_HSRDM_appendix}. Since the variational factors $q(\allSystemStates)$ and $\set{q(\allStates^^\entity)}_{\entity=1}^\numEntities$ given respectively by the VES step in Sec.~\ref{sec:VES_step} and the VEZ step in Sec.~\ref{sec:VEZ_step} both have the form of  \rARHMMs, we can compute the entropy $\mathbb{H} \big[q(\allStates^^\allEntities, \allSystemStates)\big] = \sum_{\entity=1}^\numEntities \mathbb{H} \big[q(\allStates^^\entity)\big] + \mathbb{H} \big[q(\allSystemStates)\big]$, using for each individual term the entropy for HMMs provided in Eq. 12 of \citet{hughes2015scalable}.
  
\blue{TODO: Justify the runtime statement from the main body of the paper.}
 
 \subsection{Smart initialization} \label{sec:smart_initialization}

We can construct a ``smart" (or data-informed) initialization of a \HSRDM~via the following two-stage procedure:
\begin{enumerate}
\item We fit $\numEntities$ \textit{bottom-level} \rARHMMs, one for each of the $\numEntities$ entities.  In particular, the emissions for each bottom-level \rARHMM~are the emissions of the full \HSRDM~given in \Eqref{eqn:emissions}, and the transitions are the entity-level transitions given in \Eqref{eqn:instantiated_entity_transitions}. 
\item We fit one \textit{top-level} \ARHMM. Here, the $\numEntities$ emissions are the entity-level transitions given in \Eqref{eqn:instantiated_entity_transitions}.  The transitions are the system-level transitions given in \Eqref{eqn:instantiated_system_transitions}.  The observations are taken to be the most-likely entity-level states as inferred by the bottom-level \rARHMMs. 
\end{enumerate}  

Below we give details on these initializations. In particular, both the bottom-level and top-level models \textit{themselves} need initializations.  We use the term \textit{pre-initialization} to refer to the initializations of those models.  

\subsubsection{Initialization of bottom-level \rARHMMs} \label{sec:initialization_of_bottom_level_rARHMMs}

Here we fit $\numEntities$ \textit{bottom-level} \rARHMMs, one for each of the $\numEntities$ entities, independently.  In particular, the emissions for each bottom-level \rARHMM~are the emissions of the full \HSRDM~given in \Eqref{eqn:emissions}, and the transitions are the entity-level transitions given in \Eqref{eqn:instantiated_entity_transitions}. 

\paragraph{Pre-initialization.} The $\numEntities$ bottom-level \rARHMMs~\textit{themselves} need good (data-informed) initializations.  As an example, we describe the pre-initialization procedure in the particular case of Gaussian VAR emissions, as given in \Eqref{eqn:Gaussian_VAR_emissions}.  In particular, we focus on a strategy for pre-initializing these emission parameters $\set{\observationMatrix_\entity^^\someState, \observationBias_\entity^^\someState, \observationNoiseCovariance_\entity^^\someState}_{\entity, \someState}$, since the higher-level parameters in the model can be learned via the two-stage initialization procedure.

In particular, for each $\entity =1, \hdots, \numEntities$,
\begin{alphabate}
\item We assign the  observations $\observation_{1:T}^^\entity$ to one of $\numEntityStates$ states by applying the $\numEntityStates$-means algorithm to either the observations themselves or to their velocities (discrete derivatives) $\observation_{2:T}^^\entity - \observation_{1:T-1}^^\entity$, depending upon user specification.  We use the former choice in the FigureEight data, and the latter choice for basketball data.  
\item We then initialize the parameters by running separate vector autoregressions within each of the $\numEntityStates$ clusters.  In particular, for each state $\someState= 1,\hdots, \numEntityStates$, 
	\begin{alphabate}
	\item We find state-specific observation matrix $\observationMatrix_\entity^^\someState$ and biases $\observationBias_\entity^^\someState$  by applying a (multi-outcome) linear regression to predict $\observation_{t}^^\entity$  from the $\observation_{t-1}^^\entity$ whenever $\observation_{t}^^\entity$ belongs to the $\someState$-th cluster.
	 \item We estimate the regime-specific covariance matrices $\observationNoiseCovariance_\entity^^\someState$ from the residuals of the above vector autoregresssion.
	\end{alphabate}
\end{alphabate}

We initialize the entity-level transition parameters $\set{\recurrenceMatrixForEntity_\entity, \logTransitionMatrixForEntity_\entity}_{\entity=1}^\numEntities$ to represent a sticky transition probability matrix.  This implies that we initialize $\recurrenceMatrixForEntity_\entity = \+0$ for all $\entity$.

\blue{TODO: Consider describing the pre-initialization for the Von Mises AR emissions.}

\paragraph{Expectation-Maximization.}  After pre-initialization, we estimate the $\numEntities$ independent \rARHMMs~by using the expectation maximization algorithm.  Posterior state inference (i.e. the E-step) for this procedure is justified in Sec.~\ref{sec:appendix_ARHMM}.   Note that the posterior state inference for these bottom-level \rARHMMs~can be obtained by reusing the VEZ step of \Eqref{eqn: VEZ_step_explicitly} by setting the number of system states to $\numSystemStates=1$. 

\subsubsection{Initialization of top-level \ARHMM}

Here we fit a \textit{top-level} \ARHMM.  In particular, the emissions for the \ARHMM~are the entity-level transitions of the \HSRDM~given in \Eqref{eqn:instantiated_entity_transitions}, and the transitions of the \ARHMM~are the system-level transitions given in \Eqref{eqn:instantiated_system_transitions}.   We can perform posterior state inference for the top-level \ARHMM~by reusing the VES step of  \Eqref{eqn:VES_step_explicitly} with inputs being the posterior state beliefs on $\state_{0:T}^^\allEntities$ from the bottom-level \rARHMMs.

\blue{TODO:  Here or elsewhere, should we emphasize the reliability of results vs models like DSARF due to consistent smart init?}

\subsection{Multiple Examples} \label{sec:multiple_examples}

In some datasets, we may observe the same $J$ entities over several distinct intervals of synchronous interaction. We call each separate interval of contiguous interaction an ``example''. For example, the raw basketball dataset from Sec.~\ref{sec:basketball_experiment} is organized as a collection of separate plays, where each play is one separate example.  Between the end of one play and the beginning of the next, the players might have changed positions entirely, perhaps even having gone to the locker room and back for halftime. 

Let $E$ be the number of examples. 
Each example, indexed by $e \in \{1, 2, \ldots E\}$, starts at some reference time $\tau_e$ and has $T_e$ total timesteps, covering the time sequence $t \in \{\tau_{e}, \tau_{e} +1, \ldots, \tau_{e} + T_{e} \}$.
We'll model each per-example observation sequence $\observation_{\tau_e:\tau_e+T_e}^{(1:J)}$ as an iid observation from our \HSRDM~model.

To efficiently represent such data, we can stack the observed sequences for each example on top of one another. This yields a total observation sequence $\allObservations^^\allEntities$ that covers all timesteps across all examples, defining $T =T_1 + T_2 + ... +T_E$.
This representation doesn't waste any storage on unobserved intervals between examples, easily accommodates examples of arbitrarily different lengths, and integrates well with modern vectorized array libraries in our Python implementation.
As before in the single example case, our computational representation of $\allObservations^^\allEntities$ is as a 3-d array with dimensionality $(T,\numEntities, \dimObservations)$.

For properly handling this compact representation, bookkeeping is needed to track where one example sequence ends and another begins. We thus track the ending indices of each example in this stacked representation: $\exampleEndTimes = \set{t_0, t_1, ..., t_{E-1}, t_{E}}$, where $-1=t_0< t_1 < t_2 < \hdots < t_{E-1} < t_{E} =T$, and where $t_e = \tau_e + T_e$ is the last valid timestep observed in the $e$-th example for $e=1,\hdots,E$.

By inspecting the inference updates gives above, including the filtering and smoothing updates for \rARHMM~(see Props.~\ref{prop:filtering_for_rARHMM}~and~\ref{prop:smoothing_for_rARHMM}), we find that we can handle this situation as follows:

\begin{itemize}
\item \textit{E-steps (VEZ or VES)}: Whenever we get to a cross-example boundary, we replace the usual transition function  with an initial state distribution. More concretely, the transition function for the VES step in \Eqref{eqn:VES_step_explicitly} is modified so that any timestep $t$ that represents the start of a new example sequence (that is, satisfies $t - 1 \in \exampleEndTimes$) is replaced with $\pi_{\systemState}$, and the transition function  for the VEZ step in \Eqref{eqn: VEZ_step_explicitly} at such timesteps is replaced with $\pi_{\state_\entity}$. Similarly, the emissions functions at such timesteps are replaced with the initial emissions. This maneuver can be justified by noting that for any timestep $t$ designating the onset of a new example, the initial state distributions play the role of $\transitionMatrixARHMM_t$ and the initial emissions play the role of $\emissionsDensitiesARHMM_t$  in  Props.~\ref{prop:filtering_for_rARHMM}~and~\ref{prop:smoothing_for_rARHMM}.
\item \textit{M-steps}:  Due to the model structure, the objective function $\mathcal{L}$ for the M-step can be expressed as a sum over timestep-specific quanities; for example, see \Eqref{eqn:objective_for_ETP_M_step}.  Thus, in the case of multiple examples, we simply adjust the set of timesteps over which we sum in the objective functions relative to each M substep. We update the entity emissions parameters $\parameters_\pieceEntityEmissions$ by altering the objective to sum over timesteps  that \textit{aren't} at the beginning of an example (so we sum over timesteps $t$ where $t-1 \not\in \exampleEndTimes$). We update the system state parameters $\parameters_\pieceSystemState$  and entity state parameters $\parameters_\pieceEntityState$ by altering the objectives to sum only over timesteps that haven't straddled an example transition boundary. That is, we want to ignore any pair of timesteps $(t, t+1)$ where $t \in \exampleEndTimes$, so we again sum only over timesteps $t$ where $t-1 \not\in \exampleEndTimes$.   Finally, we update the initialization parameters $\parameters_\pieceInit$ by altering the objective to sum over all timesteps  that \textit{are} at the beginning of an example.
\end{itemize}

\section{Forecasting Methodology Details}

Here we detail how we assess model fit (Sec.~\ref{sec:model_fit}) and compute forecasts (Sec.~\ref{sec:partial_forecasting}).  The primary difference between fitting and forecasting is that only the former has access to observations from evaluated entities over a time interval of interest.  Hence, a good fit is more easily attained. A good forecast requires predictions of the discrete latent state dynamics without access to future observations, whereas fitting can use the future observations to infer the discrete latent state dynamics.  However, model fit is still useful to investigate; for instance, it can be useful to determine if piecewise linear dynamics (including the choice of $\numEntityStates$, the number of per-entity states) provide a good model for a given dataset.

\blue{DISCUSS: Are ``posterior mean" and ``forward simulation" the right labels for these procedures?  The ``posterior mean" is itself a kind of forward simulation.}

\subsection{Model fit} \label{sec:model_fit}

To compute the fit of the model to $\set{\observation_t^^\entity, \hdots \observation_{t+u}^^\entity}$, the $\entity$-th entity's observed time series over some slice of integer-valued timepoints $[t,\hdots, t+u]$, we initialize 
\begin{align*}
	\+\mu_{t-1}^^\entity &= \observation_{t-1}^^\entity \\
	\intertext{And then forward simulate. In particular, for time $\tau$ in $[t,\hdots, t+u]$, we do}
	\+\mu_{\tau}^^\entity & \defeq   \sum_{\someState=1}^\numEntityStates q(\entityState_{\tau}^^\entity = \someState)\+\mu_{\tau, \someState}^^\entity  	\labelit \label{eqn:model_fit} \\
	\intertext{where $\+\mu_{\tau,\someState}^^\entity$ is the conditional expectation of the emissions distribution from \Eqref{eqn:emissions} with Radon-Nikod{\`y}m density $p(\observation_\tau^^\entity  \cond \observation_{\tau-1}^^\entity = \+\mu_{\tau-1}^^\entity, \state_\tau^^\entity)$. For example, with Gaussian vector autoregressive (VAR) emissions, we have}
	\+\mu_{\tau,\someState}^^\entity & \defeq	 \observationMatrix_{\entity, \someState} \, \+\mu_{\tau-1, \someState}^^\entity + \observationBias_{\entity, \someState} 
\end{align*}
The resulting sequence $\set{\+\mu_{t}^^\entity, \hdots \+\mu_{t+u}^^\entity}$ gives the variational posterior mean for the $\entity$-th entity's observed time series over timepoints $[t,\hdots, t+u]$.

\blue{TODO: The posterior mean just gives one aspect of fit.  For example, what about the covariances?}

\blue{TODO (Notation): We're probably overloading $\+\mu$ here. Fix.}

\subsection{Partial forecasting} \label{sec:partial_forecasting}

By partial forecasting, we mean predicting 
$\set{\observation_t^^\entity, \hdots \observation_{t+u}^^\entity}_{\entity \in \mathcal{J}}$, the observed time series from some \textbf{to-be-forecasted entities} (with indices $\mathcal{J} \subset [1,\hdots, J]$) over some forecasting horizon of integer-valued timepoints $[t,\hdots, t+u]$, given observations $\set{\observation_t^^{\entity}, \hdots \observation_{t+u}^^{\entity}}_{\entity \in \mathcal{J}^c}$ from the \textbf{contextual entities} $\mathcal{J}^c \defeq \set{j \in [1,\hdots,J] : j \not\in \mathcal{J}}$ over that same forecasting horizon, as well as observations from all entities over earlier time slices $\set{\observation_0^^{\entity}, \hdots \observation_{t-1}^^{\entity}}_{\entity \in [1,\hdots, J]}$.  

To instantiate partial forecasting, we must first adjust inference, and then perform a forward simulation.

\begin{enumerate}
\item \textit{Inference adjustment.} The VEZ step (Sec.~\ref{sec:VEZ_step}) is adjusted so that the variational factors on the entity-level states over the forecasting horizon $\set{q(\state_t^^\entity, \hdots, \state_{t+u}^^\entity)}_\entity$ are computed only for the contextual entities $\set{\entity \in \mathcal{J}^c}$. Likewise, the VES step (Sec.~\ref{sec:VES_step}) is adjusted so that the variational factor on the system-level states over the forecasting horizon $q(\systemState_t, \hdots, \systemState_{t+u})$ is computed from the observations $\set{\observation_t^^{\entity}, \hdots \observation_{t+u}^^{\entity}}_\entity$ and estimated entity-level states $\set{q(\state_t^^\entity, \hdots, \state_{t+u}^^\entity)}_\entity$ only from the contextual entities $\set{\entity \in \mathcal{J}^c}$.  As a result, the M-step on the system-level parameters $\parameters_\pieceSystemState$ automatically exclude information from the to-be-forecasted entities $\mathcal{J}$ over the forecasting horizon $[t,\hdots, t+u]$.
\item \textit{Forward simulation.} Using the adjusted inference procedure from Step 1, we can use the Viterbi algorithm (or some other procedure) to obtain estimated system-states $\set{\widehat{\systemState}_t, \hdots, \widehat{\systemState}_{t+u}}$ that do not depend on information from the to-be-forecasted entities $\mathcal{J}$ over the forecasting horizon $[t,\hdots, t+u]$.  We then make forecasts by forward simulating. In particular, for time $\tau$ in $[t,\hdots, t+u]$, we sample
{\footnotesize \begin{align}
\state_t^^\entity & \sim p(\state_t^^\entity \cond \state_{t-1}^^\entity, \observation_{t-1}^^\entity, \widehat{\systemState_t},  \parameters) \labelit \label{eqn:forward_simulating_entity_state_for_partial_forcasting} \\
\observation_t^^\entity &\sim p(\observation_t^^\entity \cond \observation_{t-1}^^\entity, \state_t^^\entity, \parameters)
\end{align}}%
for all to-be-forecasted entities $\entity \in \mathcal{J}$.  
\end{enumerate}

Note in particular that the dependence of \Eqref{eqn:forward_simulating_entity_state_for_partial_forcasting} upon $\widehat{\systemState_t}$ allows our predictions about to-be-forecasted entities $\set{\entity \in \mathcal{J}}$ to depend upon observations from the contextual entities  $\set{\entity \in \mathcal{J}^c}$ over the forecasting horizon.


\section{FigureEight: Experiment Details, Settings, and Results}
\subsection{Data generating process}\label{data_gen} 

\begin{example}\remarktitle{FigureEight.} Consider a model where we directly observe  continuous observations $\allObservations^^\allEntities$, and where each $\observation_t^^\entity \in \R^2$ lives in the plane (i.e. $\dimObservations=2$).  We form ``Figure Eights" by having the observed dynamics rotate around an ``upper circle" $\mathcal{C}_1$ with unit radius and center $\widebreve{\observationBias}^^1 \defeq (0,1)^T$ and a ``lower circle"  $\mathcal{C}_2$ with unit radius and center $\widebreve{\observationBias}^^2 \defeq (0,-1)^T$.   Entities tend to persistently rotate around one of these circles; however, when the observation approaches the intersection of the two circles $\mathcal{C}_1 \cap \mathcal{C}_2 = \set{(0,0)}$, recurrent feedback can shift the entity's dynamics into a new state (the other circle). These shifts occur only when the system-level state has changed; these shifts are not predictable from the entity-level time series alone. 
In particular, we have 
\begin{subequations}
{\footnotesize \begin{align}
\explaintermbrace{system transitions}{\systemState_t \cond \systemState_{t-1}} & = h(\systemState_t)  \label{eqn:system_transitions_for_figure_eight_model}\\
\explaintermbrace{entity transitions}{\state_t^^\entity \cond \state_{t-1}^^\entity, \observation_{t-1}^^\entity, \systemState_t}   & \sim \text{Cat-GLM}_\numEntityStates \bigg(\utilityForEntity_t^^\entity =  \explaintermbrace{recurrence}{\recurrenceMatrixForEntity^^{\systemState_t} f(\observation_{t-1}^^\entity)} +  \explaintermbrace{transitions}{\logTransitionMatrixForEntity_\entity^T \oneHotify{\state^^\entity_{t-1}}} \bigg) \label{eqn:entity_transitions_for_figure_eight_model}\\
 \explaintermbrace{observation dynamics}{\observation_t^^\entity  \cond \observation_{t-1}^^\entity , \state_t^^\entity}  & \sim N \bigg (\observationMatrix_\entity^^{\state_t^^\entity } \observation_{t-1}^^\entity + \observationBias_\entity^^{\state_t^^\entity} \, , \;\observationNoiseCovariance_\entity^^{\state_t^^\entity} \bigg) \label{eqn:observation_dynamics_for_figure_eight_model}  
\end{align}}%
Here, the notation used follows that of \Eqref{eqn:instantiated_transitions}. Each line of this true data-generating process is explained in the corresponding paragraph below.

\textbf{System-level state transitions.} We take the number of system states to be $\numSystemStates=2$. We set the system state chain $\set{\systemState_t}_{t=1}^T$ through a deterministic process $h$ which alternates states every 100 timesteps. We emphasize that in the \emph{true} data-generating process, there is no recurrent feedback from observations $x$ to system states $s$.

\textbf{Entity-level state transitions.} 
We set entity-specific baseline transition preferences to be highly sticky, $\transitionMatrixForEntity_\entity = \begin{bmatrix}
 p & (1-p) \\
 (1-p) & p	
 \end{bmatrix}
$, where $p$ is close to 1.0 (concretely, $p=.999$).
By design, these preferences can be overridden when an entity travels near the origin.
We choose the recurrence transformation $f : \R^\dimObservations \to \R$ to be the radial basis function $f(\observation) = \kappa \exp(-\frac{||\observation||_2^2}{2\sigma^2})$, which returns a large value when the observation $\observation_{t-1}^^\entity$ is close to the origin.
Similarly, we set the weight vector for these recurrent features to nudge observations near the origin to the system-preferred state.
We set $\recurrenceMatrixForEntity^^\someSystemState \in \R^{\numStates}$ so entry $(\recurrenceMatrixForEntity^^\someSystemState)_{\someState} = a_{\text{high}}$ if the entity-level state $\someState$ is preferred by the system-level state $\someSystemState$, and $a_{\text{low}}$ otherwise, with $a_{\text{high}} \gg a_{\text{low}}$. Concretely,  We set $a_{\text{high}} = 2$ and $a_{\text{low}} = -2$.

\textbf{Emissions.}
To construct the entity-level emission distributions for each state (indexed by $\someState$), we choose $\observationMatrix_\entity^^\someState = \observationMatrix_\entity$ to be a rotation matrix with angle $\theta= (-1)^r \, \frac{2 \pi}{\tau_j}$ for all entity-level states $\someState$, where $\tau_j$ is the entity-specific periodicity and $r \in \set{0,1}$ determines the rotation direction.  We may use a rotation matrix $\observationMatrix$ to rotate the observation around a center $\widebreve{\observationBias}$, by constructing dynamics of the form $\observationMatrix (\observation-\widebreve{\observationBias}) + \widebreve{\observationBias}$; therefore, to construct  circle centers that are specific to entity-level states using  \Eqref{eqn:observation_dynamics_for_figure_eight_model},  we set $\observationBias_\entity^^\someState = (\+I - \observationMatrix_\entity) \widebreve{\observationBias}^^\someState$ for all entities $\entity$ and all entity-level states $\someState$.  
We set each of the observation noise covariance matrices $\observationNoiseCovariance_\entity^^{\someState}$ to be diagonal, with diagonal entries equal to 0.0001.
\label{eqn:figure_eights_model}	
\end{subequations}

We simulate data from the \textit{FigureEight} model (Example \ref{ex:figure_eights_model}), where there are $J=3$ entities, each with $T=400$ observations, where the periodicities for each entity are given by $(\tau_1, \tau_2, \tau_3) = (5,20,40)$.  
\label{ex:figure_eights_model}
\end{example}
 



\subsection{Models and Hyperparameter Tuning}\label{hyper-params}

\paragraph{HSRDM.}

We fit our \HSRDM~with transitions given in \Eqref{eqn:instantiated_transitions} and Gaussian vector autoregressive emissions as in \Eqref{eqn:Gaussian_VAR_emissions}. We set $\numSystemStates=\numStates=2$. We set the entity-level recurrence $f$ to a Gaussian radial basis function and no system-level recurrence $g$. We use a sticky Dirichlet prior on system-level transition parameters ($\alpha = 1$ and $\kappa = 10$). For 'smart' initialization, the bottom-half of the model is trained for 5 iterations while the top-half is trained for 20 iterations. For the K-Means algorithm, the random state is set to seed 120. The model is then trained for 10 CAVI iterations with 50 iterations per M-Step. We don’t tune any specific hyperparameters of our HSRDM. The model is run across 5 independent initializations (initialization seeds 120-124). Five forecasting samples are generated per trial (sample seeds 120-124). The optimal MSE across all initialization trials and forecasting samples is recorded. The average MSE for the trial containing the best sample is taken across all samples to demonstrate diversity in sample generation.
 
\blue{TODO: Provide more details on how we applied our method. Should just be a short reference to the inference section.}

\blue{TODO: Describe how we performed partial forecasting}

\paragraph{rAR-HMM.} A collection of $\numEntities$ \rARHMM~models can be fit as a special case of a \HSRDM~model where the number of system states is taken to be $\numSystemStates=1$. We train three rAR-HMM models with separate strategies: (``\emph{Indep.}'', ``\emph{Pool}'', and ``\emph{Concat.}''). For 'smart' initialization, the bottom-half of the models are trained for 5 iterations while the top-halves are trained for 20 iterations. For the K-Means algorithm, the random state is set to seed 120. The models are then trained for 10 CAVI iterations with 50 iterations per M-Step. We don’t tune any specific hyperparameters of the rAR-HMM. The models are run across 5 independent initializations (seeds 120-124). Five forecasting samples are generated per trial (sample seeds 120-124). The optimal MSE across all initialization trials and forecasting samples is recorded for each model. The average MSE for the trial containing the best sample is taken across all samples from that trial to demonstrate diversity in sample generation. 

\paragraph{DSARF.} We train the Deep Switching Autoregressive Factorization model \cite{farnooshDeepSwitchingAutoRegressive2021} with several different parameters. We train with several different choices of lags ($l$) and spatial factors ($K$), where the number of discrete states ($S = 2$) is fixed to match our data-generation. For each strategy (``\emph{Indep.}'', ``\emph{Pool}'', and ``\emph{Concat.}''), we conduct a grid search across combinations of $l$ and $K$, specifically for $l = \{1, ..., i\}, \forall i \in \{1, ..., 10\}$ and $K \in \{1, ..., 10\}$. The hyperparameters that produce the optimal MSE value are selected for further experimentation. DSARF \emph{Indep.} is optimal with spacial factors $K$ and lags $L$: $K = 3, L = \{1,2\}$; DSARF \emph{Pool} is optimal with $K = 9, L = \{1,2,3,4,5,6,7\}$; and DSARF \emph{Concat.} is optimal with $K = 5,L = \{1,2\}$. 

Once hyperparameters are selected, each model (and strategy) is trained with 500 epochs and a learning rate of 0.01, across 10 independent random seeds (seeds 120-129) representing different initializations and forecasts. Two seeds produce clear outlier results (one in \emph{Indep.} and one in \emph{Concat.}) from very poor initializations and were excluded from the average results in Fig. ~\ref{fig:fig8_forecasts}, as they would have drastically shifted the mean (in a direction that demonstrates worse perofrmance for the model). Post training for each model, we use the learned parameters to draw long-term forecasts.

\subsection{Recurrence Ablation}\label{section:fig8recurr}

In Fig.~\ref{fig:fig8_forecasts}, the rAR-HMM baselines are system state ablations for the \HSRDM. Here, we include a recurrence ablation for the \emph{FigureEight} task. The best MSE across 5 independent random initializations is $0.05$ and the Ave. MSE with standard error for the random initialization across forecasting samples is $2.41 (0.78)$. The Mdn. MSE across the independent trial sample averages is $2.41$. \HSRDM~without recurrence still performs worse than the \HSRDM~with recurrence. See the trajectory below in Fig.~\ref{fig:figure8_norecurr}.

{\begin{figure}[hbt!] 
\includegraphics[width=0.5\textwidth]{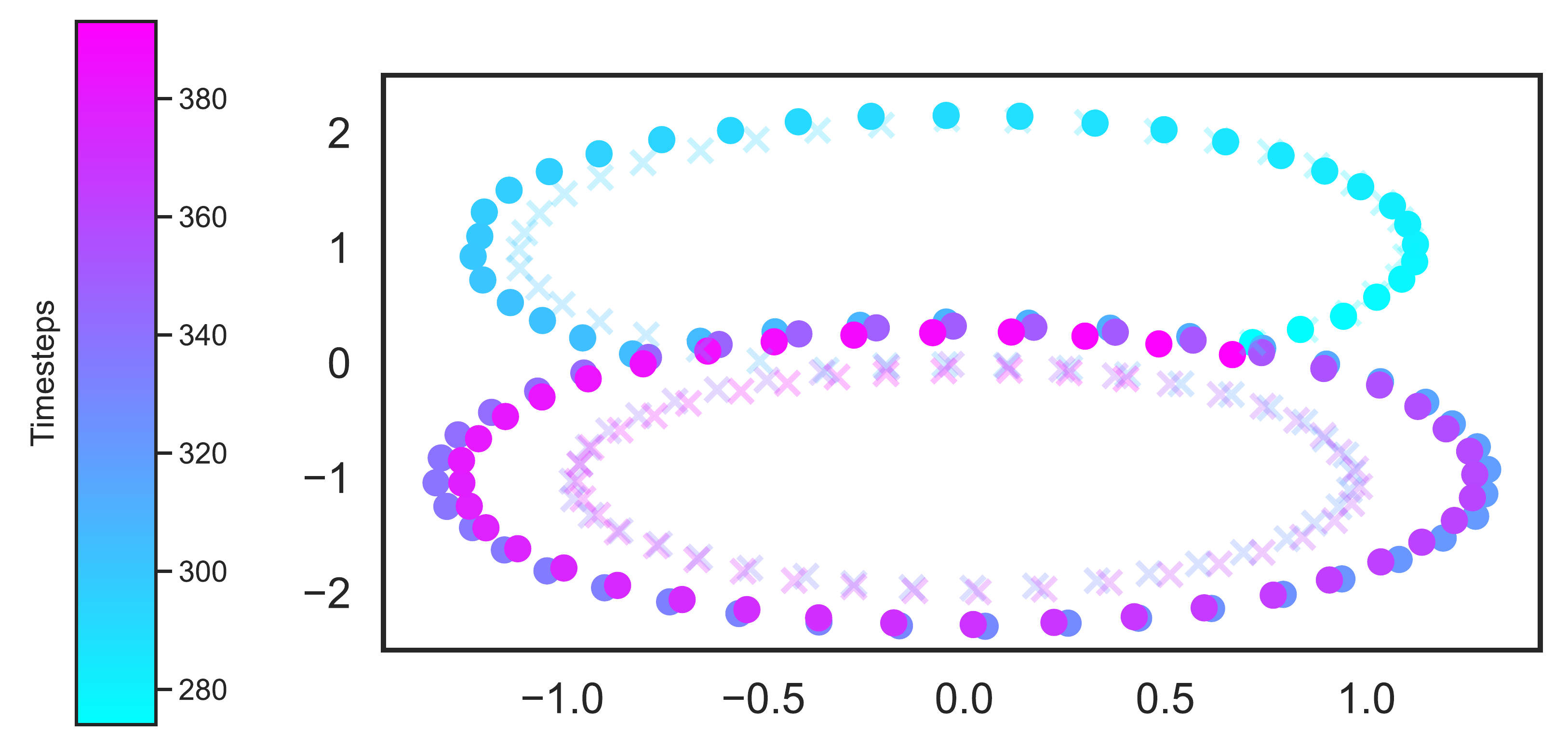}
    \caption{\emph{FigureEight} No Recurrent Feedback}
  \label{fig:figure8_norecurr}
\end{figure}

\label{sec:appendix_fig8}

\section{Basketball: Experiment Details, Settings, and Results} \label{sec:appendix_basketball}

\subsection{Dataset} \label{sec:basketball_dataset_appendix}

\paragraph{Raw dataset.} 

We obtain NBA basketball player location data for 636 games within the 2015-2016 NBA season from a publicly available repo~\citep{linouNBAPlayerMovements2016}.    Each sample provides the quarter of the game, number of seconds left in quarter, time on shot clock, (x,y,z) location of ball, and the (x,y) locations and IDs for the 10 players on the court.  The court is represented as the rectangle $[0,96] \times [0,50]$ in the space of squared feet.

\paragraph{Selection of games.} 

We focus on modeling the dynamics in games involving the Cleveland Cavaliers (CLE), the 2015-2016 NBA champions. In particular, out of 40 available games containing CLE, we investigate the 31 games containing one of the four most common starting lineups:  1. \texttt{K. Irving - L. James - K. Love - J. Smith - T. Thompson};  2. \texttt{K. Irving - L. James - K. Love - T. Mozgov - J. Smith}; 3. \texttt{L. James - K. Love - T. Mozgov - J. Smith - M. Williams}; 4. \texttt{M. Dellavedova - L. James - K. Love - T. Mozgov - J. Smith}.  Two games had data errors (lack of tracking or event data), which left a total of $G=29$ games for analysis.

\paragraph{Downsampling.}  The raw data is sampled at 25 Hz.  Following \citet{alcornBaller2vecLookAheadMultiEntity2021}, we downsample to 5 Hz.

\paragraph{From plays to examples.} The raw basketball dataset is represented in terms of separate \textit{plays} (e.g. \texttt{shot block, rebound offense, shot made}).  
Following \citet{alcornBaller2vecLookAheadMultiEntity2021}, we preprocess the dataset so that these plays are non-overlapping in duration. 
We also remove plays that do not contain one of CLE's four most common starting lineups.  
For the purpose of unsupervised time series modeling, we then convert the plays into coarser-grained observational units.  
Although plays are useful for the classification task pursued by \citet{alcornBaller2vecLookAheadMultiEntity2021}, play boundaries needn't correspond to abrupt transitions in player locations.  For example, the player coordinates are  essentially continuous throughout \texttt{shot block -> rebound offense -> shot made} sequence mentioned above.  Hence,   we concatenate consecutive plays from the raw dataset until there is an abrupt break in player motion and/or a sampling interval longer than the nominal sampling rate. These observational units are called \textit{events} in the main body of the paper (Sec.~\ref{sec:basketball_experiment}). Functionally, these observational units serve as \textit{examples} (Sec.~\ref{sec:multiple_examples}).  That is, when training models, each example is treated as an \text{i.i.d.} sample from the assumed model.   For the remainder of the Appendix, we refer to these observational units as examples.

By construction, examples have a longer timescale than the plays in the original dataset.  Examples typically last between 20 seconds and 3 minutes. For comparison, a \texttt{rebound offense} play takes a fraction of a second.

At the implementational level, we infer an example boundary  whenever at least one condition below is met in a sequence of observations:
\begin{enumerate}
	\item The wall clock difference between timesteps is larger than 1.2 times the nominal sampling rate.   
	\item The player's step size on the court (given by the discrete derivative between two timesteps) is abnormally large with respect to either the court's length or width, where abnormally large is defined as having an absolute z-score larger than 4.0.
\end{enumerate}


\paragraph{Court rotation.}  The location of a team's own basket changes at half time.  This can switch can alter the dynamics on the court.  We would like to control for the direction of movement towards the offensive and defensive baskets, as well as for player handedness.    To control for this, we assume that the focal team (CLE)'s scoring basket is always on the left side of the court.   When it is not,  we rotate the court 180 degrees around the center of the basketball court.   (Equivalently, we negate both the x and y coordinates with respect to the center of the court.) Since the basketball court has a width of 94 feet and a length of 50 feet, its center is located at $(47,25)$ when orienting the width horizontally.  We prefer this normalization strategy to the random rotations strategy of \citet{alcornBaller2vecLookAheadMultiEntity2021}, because the normalization strategy allows us to learn different dynamics for offense (movement to the left) and defense (movement to the right).

\paragraph{Index assignments.}  Each sample from our dataset gives the coordinates on the court of 10 players.  Here we describe how we map the players to entity indices.  Recall that we only model the plays that consist of starters from a focal team, CLE.   We assign indices 0-4 to represent CLE starters, and indices 5-9 to represent opponents.

Index assignment for CLE is relatively straightforward. Although we model plays from the $G$ games involving four different starting lineups, we can consistently interpret the indices as \texttt{0: Lebron James, 1: Kevin Love, 2: J.R. Smith, 3: Starting Center, 4: Starting Guard}. Depending on the game, the starting center was either T. Mazgov or T. Thompson.   Similarly, the starting guard was either K. Irving, M. Williams, or M. Dellavedova. 

Index assignment for the opponents is more involved.  The opponent teams can vary from game to game, and even a fixed team substitutes players throughout a game.  There are numerous mechanisms for assigning indices in the face of such \textit{player substitutions} \cite{raabe2023graph}.  Although role-based representations are popular (e.g. see \cite{felsen2018will} or \cite{zhanGeneratingMultiAgentTrajectories2019}) because they capture invariants lost within identity-based representations \cite{lucey2013representing}, we used a simple heuristic whereby we assign indices 5-9 based on the the player's typical positions. The typical positions can be scraped from Wikipedia.   We let the model discover dynamically shifting roles for the players via its hierarchical discrete state representation.

One complication in assigning indices from these position labels is that the provided labels commonly blend together multiple positions (e.g. `Shooting guard / small forward' or `Center / power forward').   Should the second player be labeled as a center or a forward?  What if there are multiple centers? How do we discriminate between two forwards?  To solve such problems, we proceed as follows, operating on a play-by-play basis

\begin{enumerate}
	\item \textit{Assign players to coarse position groups.}  We first assign players to coarse position groups (forward, guard, center).  We assume that each play has 2 forwards, 1 center, and 2 guards.  We use indices 5-6 to represent the forwards, index 7 to represent the center, and indices 8-9 to represent the guards.   As noted above, a given player can be multiply classified into a coarse position group; however, a reasonable assignment for a player can be made by considering the position labels for the other players who are on the court at the same time. To do this, we form $\+B$, a $5 \times 3$ binary matrix whose rows are players on the team and whose columns represent the coarse position groups.  An entry in the matrix is set to True if the player is classified into that position group.  We start with the rarest position group (i.e. the column in $\+B$ with the smallest column sum) and assign players to that position group, starting with players who have the least classifications (i.e. the players whose rows in $\+B$ have the smallest row sum). Ties are broken randomly. We continue until we have satisfied the specified assignments (2 forwards, 1 center, and 2 guards). If it is not possible to make such coarse assignments, we discard the play from the dataset.  
	\item \textit{Order players within coarse position groups.} This step is only needed for forwards and guards; there is only 1 ordering of the single center. We use an arbitrary ordering of forward positions: 
{\footnotesize
			\begin{verbatim}
			FORWARD_POSITIONS_ORDERED = [
			    "Small forward / shooting guard",
			    "Small forward / point guard",
			    "Small forward",
			    "Small forward / power forward",
			    "Power forward / small forward",
			    "Power forward",
			    "Power forward / center",
			    "Center / power forward",
			    "Shooting guard / small forward",
			]
			
			\end{verbatim}
%
			and guard positions by
			\begin{verbatim}
			GUARD_POSITIONS_ORDERED = [
			    "Small forward / shooting guard",
			    "Shooting guard / small forward",
			    "Shooting guard",
			    "Shooting guard / point guard",
			    "Point guard / shooting guard",
			    "Point guard",
			    "Combo guard",
			]
			\end{verbatim}
			}
		For each players assigned to a position group in $\set{forward, guard}$, we order the players in terms of their location of their position on the above lists.  Ties are broken randomly.  
	\end{enumerate}

\paragraph{Normalization} To assist with initialization and learning of parameters, we normalize the player locations on the court from  the rectangle $[0,96] \times [0,50]$ in units of feet to the unit square $[0,1] \times [0,1]$.  

\subsection{Models and Hyperparameter Tuning} \label{sec:models_and_hyperparameter_tuning}





\paragraph{\HSRDM.} 

Here we model $\numEntities=10$ basketball player trajectories on the court with an \HSRDM~with Gaussian vector autoregressive emissions; that is, we use 
\begin{align}
\explaintermbrace{system transitions}{\systemState_t 
    \cond \systemState_{t-1}, \observation_{t-1}^^\allEntities}
& \sim \text{Cat-GLM}_\numSystemStates \bigg(
\explaintermbrace{endogenous transition preferences}{\logTransitionMatrixForSystem^T \oneHotify{\systemState_{t-1}}} 
+ 
\explaintermbrace{bias from recurrence and covariates}{\recurrenceMatrixForSystem \,\recurrenceFunctionForSystem_\recurrenceParameterForSystem \big(\observation_{t-1}^^\allEntities, \covariatesForSystem_{t-1}\big)} 
\bigg)  
\label{eqn:system_transitions_for_basketball}  \\
\explaintermbrace{entity transitions}{\state_t^^\entity \cond \state_{t-1}^^\entity, \observation_{t-1}^^\entity, \systemState_t}  
& \sim \text{Cat-GLM}_\numEntityStates \bigg(
\explaintermbrace{endogenous transition preferences}{(\logTransitionMatrixForEntity_\entity^^{\systemState_t})^T \oneHotify{\state^^\entity_{t-1}}} 
+
\explaintermbrace{bias from recurrence and covariates}{\recurrenceMatrixForEntity_\entity^^{\systemState_t} \recurrenceFunctionForEntity_\recurrenceParameterForEntity \big(\observation_{t-1}^^\entity,  \covariatesForEntity_{t-1}^^\entity \big)} 
\bigg)
\label{eqn:entity_transitions_for_basketball}  \\
 \explaintermbrace{observation dynamics}{\observation_t^^\entity  \cond \observation_{t-1}^^\entity , \state_t^^\entity}  & \sim N \bigg (\observationMatrix_\entity^^{\state_t^^\entity } \observation_{t-1}^^\entity + \observationBias_\entity^^{\state_t^^\entity} \, , \;\observationNoiseCovariance_\entity^^{\state_t^^\entity} \bigg) \label{eqn:observation_dynamics_for_basketball_model}  	
\end{align}
where $\observation_t^^\entity \in \big( [0,1] \times [0,1] \big)$ gives player $\entity$'s location on the normalized basketball court at timestep $t$.  

Our system-level recurrence
$\recurrenceFunctionForSystem_\recurrenceParameterForSystem \big(\observation_{t-1}^^\allEntities, \covariatesForSystem_{t-1}\big) = \observation_{t-1}^^\allEntities$  reports \textit{all} player locations $\observation_t^^\allEntities$ to the system-level transition function, allowing the probability of latent game states to depend on player locations. 
Inspired by \citet{linderman2017bayesian}, our entity-level recurrence function $\recurrenceFunctionForEntity_\recurrenceParameterForEntity \big(\observation_{t-1}^^\entity,  \covariatesForEntity_{t-1}^^\entity \big) = (\observation_{t-1}^^\entity, \indicatebrackets{\observation_{t-1,0}^^\entity < 0.0}, \indicatebrackets{\observation_{t-1,0}^^\entity > 1.0}, \indicatebrackets{\observation_{t-1,1}^^\entity < 0.0}, \indicatebrackets{\observation_{t-1,1}^^\entity > 1.0})^T$, where $\observation_{t,d}^^\entity$ is the $d$-th coordinate of $\observation_{t}^^\entity$ and $\indicatebrackets{\cdot}$ is the indicator function,  reports an individual player's location $\observation_{t-1}^^\entity$ (and out-of-bounds indicators) to that player's entity-level transition function, allowing each player's probability of remaining in autoregressive regimes to vary in likelihood over the court.  

 We set the number of system and entity states to be $\numSystemStates=5$ and $\numStates=10$ based on informal experimentation with the training set; we leave formal setting of these values based on the validation set to future work.   For the sticky Dirichlet prior on system-level transitions, as given in \Eqref{eqn:sticky_dirichlet_prior_on_system_leve_transitions}, we set $\alpha=1.0$ and $\kappa=50.0$ so that the prior would put most of its probability mass on self-transition probabilities between .90 and .99. 

We initialize the model using the smart initialization strategy of Sec.~\ref{sec:smart_initialization}.  We pre-initialize the entity emissions parameters $\parameters_\pieceEntityEmissions$ by applying the $k$-means algorithm to each player's discrete derivatives (so long as consecutive timesteps do not span an example boundary).   
We pre-initialize the entity state parameters $\parameters_\pieceEntityState$ by setting
$\logTransitionMatrixForEntity$ to be the log of a sticky symmetric transition probability matrix with a self-transition probability of 0.90, and by drawing the entries of $\recurrenceMatrixForEntity$ i.i.d from a standard normal.  
We pre-initialize the system state parameters $\parameters_\pieceSystemState$ by setting
$\logTransitionMatrixForSystem$ to be the log of a sticky symmetric transition probability matrix with a self-transition probability of 0.95, and by drawing the entries of $\recurrenceMatrixForSystem$ i.i.d from a standard normal. We pre-initialize the initialization parameters $\parameters_\pieceInit$ by taking the initial distribution  to be uniform over system states, uniform over entity states for each entity, and  standard normal over initial observations for each entity and each entity state.
We execute the two-stage initialization process via 5 iterations of expectation-maximization for the $\numEntities$ bottom-half \rARHMMs, followed by 20 iterations for the top-half \ARHMM. 

We run our CAVI algorithm for 2 iterations, as informal experimentation with the training set suggested this was sufficient for approximate ELBO stabilization.

\paragraph{\rARHMMs.}  By ablating the top-level discrete "game" states (i.e., the system-level switches) in the \HSRDM, we obtain independent \rARHMMs~\cite{linderman2017bayesian}, one for each of the $\numEntities=10$ players.   More specifically, by removing the system transitions in \Eqref{eqn:system_transitions_for_basketball} from the model, the entity transitions simplify as $p(\state_t^^\entity \cond \state_{t-1}^^\entity, \observation_{t-1}^^\entity, \systemState_t) = p(\state_t^^\entity \cond \state_{t-1}^^\entity, \observation_{t-1}^^\entity)$, because the entity transition parameters simplify as $\logTransitionMatrixForEntity_\entity^^{\systemState_t} = \logTransitionMatrixForEntity_\entity$ and $\recurrenceMatrixForEntity_\entity^^{\systemState_t}= \recurrenceMatrixForEntity_\entity$.  As a result, the $\numEntities$ bottom-level \rARHMMs~are decoupled.  Implementationally, this procedure is equivalent to an \HSRDM~with $\numSystemStates=1$ system states.  Initialization and training is otherwise performed identically as with \HSRDM.

\paragraph{\texttt{HSDM}.} By ablating the multi-level recurrence from the \HSRDM, we obtain a \textit{hierarchical switching dynamical model} (\texttt{HSDM}).  This can be accomplished by setting $\recurrenceFunctionForSystem_\recurrenceParameterForSystem \equiv 0$ in \Eqref{eqn:system_transitions_for_basketball} and $\recurrenceFunctionForEntity_\recurrenceParameterForEntity  \equiv 0$ in \Eqref{eqn:entity_transitions_for_basketball}.
Initialization and training is otherwise performed identically as with \HSRDM.

\paragraph{Agentformer.}   AgentFormer \cite{yuanAgentFormerAgentAwareTransformers2021} is a multi-agent (i.e. multi-entity) variant of a transformer model whose forecasts depend upon both temporal and social (i.e. across-entity) relationships. Unless otherwise noted, we follow \citet{yuanAgentFormerAgentAwareTransformers2021} in determining the training hyperparameters.  In particular, our prediction model consists of 2 stacks of identical layers for the encoder and decoder with a dropout rate of 0.1. The dimensions of keys, queries and timestamps for the agentformer are set to 16, while the hidden dimension of the feedforward layer is set to 32. The number of heads for the multi-head agent aware attention is 8 and all MLPs in the model have a hidden dimension of (512,256). The latent code dimension of the CVAE is set to 32, and the agent connectivity threshold is set to 100.  Because the basketball training datasets have many more examples than the pedestrian trajectory prediction experiments in \citet{yuanAgentFormerAgentAwareTransformers2021} (which only have 8 examples),     we train the agentformer model and the DLow trajectory sampler for 20 epochs each (rather than 100) to keep the computational load manageable.  We therefore apply the Adam optimizer with learning rate of $10^{-3}$ rather than $10^{-4}$ to accommodate the reduced number of epochs.   Also, to match the specifications of the evaluation strategy from Sec.~\ref{sec:evaluation_strategy_for_basketball_appendix}, we set the number of future prediction frames during training to 30, and the number of diverse trajectories sampled by the trajectory sampler to 20. We ensure convergence by tracking the mean-squared error. 

\paragraph{\SNLDS.}   At suggestion of an anonymous reviewer, we also compared to a recent non-linear model which does not coordinate interactions across entities, but provides  discrete and continuous latent variables as well recurrent feedback: \SNLDS~\citep{dongCollapsedAmortizedVariational2020}.\footnote{ We used the open source implementation of the \SNLDS~model provided by a successor package for RED-SDS ~\citep{ansariDeepExplicitDuration2021}: \url{https://github.com/abdulfatir/REDSDS}.  We chose this because (1) this is the newest suggested codebase, so it should be the most “state-of-the-art”, and (2) as the newest codebase, it should be easier to install and maintain. (Older code from 2017 is often tough to get working on newer hardware, such as the non-Intel chips of newer Macbooks.)}  \SNLDS~inherently models only entity-level time series by design; it is not originally designed to model many interacting entities. We tried the Independence entity-to-system strategy (see Sec.~\ref{sec:fig8} of our paper) to adapt it to model many entities.    The precise hyperparameter settings can be found by inspecting the \texttt{basketball.yaml} configurations file used to run \SNLDS~(see our codebase for link).  Unless otherwise noted, we follow \citet{ansariDeepExplicitDuration2021}'s modeling of the electricity duration dataset to determine the training hyperparameters (e.g. trainable covariance matrices,  a transformer within the emissions network with the same network size, etc.). Paralleling the electricity duration dataset, we set the dimensionality of the latent continuous state (4) to be double that of the observed variables (2). For consistency with the \HSRDM~, we set the number of  entity states to be  $\numStates=10$.   We also allowed multi-level recurrence to both the continuous and discrete latent chains to parallel the multi-level recurrence of the \HSRDM.   We trained each basketball player's model for 20,000 iterations (using the first 1,000 as warmup iterations for setting the learning rate).  We ensure convergence by examining trace plots to verify that the ELBO-based training objective converged after many epochs, and by  verifying that reconstructions of past data looked reasonable. 

%


\blue{[TODO: Give link to animations, which should live somewhere on the Internet.  Put it in results section.]  The animation shows the observed trajectories of all 10 players as well as the basketball. Superimposed in the background at each timestep is the estimated vector field given by a convex combination of vector fields determined by the entity states, where the weights are given by the posterior $q(\state_{1:T}^^\entity)$.  [TODO: Give the exact formula for how the vector fields were determined.  That is, bring in $A,b$ and the difference formula.]  We plot the convex combination of vector fields because the most likely vector field can be a poor approximation to the dynamics when non-dominant vector fields are assigned non-negligible probability mass.  The color of the vector field at timestep $t$ is determined by the most likely entity state, $\argmax_{k \in \set{1,\hdots,\numStates}} q(\state_t^^\entity = \someState)$.  [TODO: Highlight the nice interpretability of this, perhaps contrasting with baseline methods.]}

\subsection{Evaluation strategy} \label{sec:evaluation_strategy_for_basketball_appendix}

 We divide the $G=29$ total games
into 20 games to form a candidate training set, 4 games to form a validation set (for setting hyperparameters), and 5 games to form a test set.  Of the first 20 games within our candidate training set, we construct  small (1 game), medium (5 games), and large (20 games) training sets.  The small, medium, and large training sets contained 20, 215, and 676 examples, respectively.


The test set contained $158$ examples overall. However, we required that each example be at least 10 seconds long (i.e. 50 timesteps) to be included in the evaluation run.  This exclusion criterion left $E=75$ examples.
For each such example,  we uniformly select a timepoint $T^* \in [T_{\text{min-context-length}},T-T_{\text{forecast-length}}]$ to demarcate where the context window ends.   
We set $T_{\text{min-context-length}}=4$ seconds (i.e. 20 timesteps) and $T_{\text{forecast-length}}=6$ seconds (i.e. 30 timesteps).  
The first $[0,T^*]$ seconds are shown to the trained model as context, and forecasts are made within the forecasting window of $\mathcal{F}:=[T^*+1, T^*+T_{\text{forecast-length}}]$ seconds.

For a fixed example $e$, forecasting sample $s$, player $j$, and forecasting method $m$, we summarize the error in a forecasted trajectory by \textbf{mean forecasting error} (\MFE)
{\footnotesize \begin{align}
\MFE_{m; e,s,j} \defeq  \frac{1}{|\mathcal{F}|} \sum_{t \in \mathcal{F}} \sqrt{\sum_{d=0}^1 (\widehat{x}_{e,t,j,d,m,s} -   x_{e,t,j,d})^2}
\end{align}}
where $x_{e,t,j,d}$ is the true observation on example $e$ at time $t$ for player $j$ on court dimension $d$, and $\widehat{x}_{e,t,j,d,m,s}$ is the forecasted observation by forecasting sample $s$ using forecasting method $m$.  So $\MFE_{m; e,s,j}$~gives the average distance over the forecasting window between the forecasted trajectory and the true trajectory.    

To quantify  performance of forecasting methods, we define a model's \textit{example-wise} mean forecasting error as 
{\footnotesize \begin{align}
\MFE_{m; e} \defeq \frac{1}{SJ} \sum_{s=1}^S \sum_{j=1}^J \MFE_{m; e,s,j} 	
\end{align}}%
Taking the mean of $\MFE_{m; e}$ and its standard error lets us quantify a model's typical squared forecasting error on an example, as  well as our uncertainty, with 
{\footnotesize \begin{align}
\MFE_{m} &\defeq \frac{1}{E} \sum_{e=1}^E \MFE_{m; e}, \qquad 
\sigma (\MFE_{m})\ \defeq \df{\sqrt{\sum_{e=1}^E (\MFE_{m;e} - \MFE_{m})^2}}{E}
\end{align}}%
Although in Sec.~\ref{sec:basketball_dataset_appendix}, we described normalization of basketball coordinates to the unit square for the purpose of model initialization and training, when evaluating models, we convert the forecasts and ground truth back to unnormalized coordinates, so that \MFE~has units of feet. That is, we represent observations $x_{e,t,j,d}$ and forecasts $\widehat{x}_{e,t,j,d,m,s}$ on the basketball court (of size $[0,94] \times [0,50]$ feet).  Thus $\sqrt{\MSE_{m}}$ can be interpreted as a model's typical amount of error in feet on the court at a typical timepoint in the forecasting window (but of course forecasting error tends to be lower at timepoints closer to $T^*$ than farther from $T^*$).

\blue{TODO: Add more detail about how I did the multiple comparisons tests.}

} 

\section{Marching Band: Experiment Details, Settings, and Results}

\subsection{Data generating process} 

We introduce \emph{MarchingBand}, a synthetic dataset consisting of individual marching band players (``entities'') moving across a 2D field in a coordinated routine to visually form a sequence of letters. Each observation is a position $\observation^{(j)}_t \in \mathbb{R}^2$, with the unit square centered at (0.5, 0.5) representing the field. Each entity's position over time on the unit square follows the current letter's prescribed movement pattern perturbed by small-scale iid zero-mean Gaussian noise. Each state is stable for 200 timesteps before transitioning in order to the next state. When reaching a field boundary, typically the player is reflected back in bounds. However, with some small chance, an entity will continue out-of-bounds (OOB, $\observation_{t1} \notin (0,1)$). When enough players become OOB (up to a user-controlled threshold), this bottom-up signal triggers the current system state immediately to a special ``come together and reset'' state, denoted "C". During the next 50 timesteps, all players move to the center, then return to repeat the most recent letter before continuing on to remaining letters. Note that the true data-generating process does not come from an \HSRDM~ generative model.

For our experiments, we set $J = 64$, the OOB threshold to be 11, and the letter sequence to spell "LAUGH". We use $N=10$ independent examples for fitting, where each sequence is of a different length according to the number of "C" states triggered. We observe corresponding time lengths for each sequence: $\{1000, 1050, 1050, 1000, 1000, 1000, 1050, 1050, 1000, 1100\}$. The total time $T$ across all sequences is $10300$, where end-times for each example are provided to the model as $\{1000, 2050, 3100, 4100, 5100, 6100, 7150, 8200, 9200, 10300\}$. 

\subsection{Models and Hyperparameter Tuning}

\paragraph{HSRDM (with and without recurrence).} For all methods, we fairly provide knowledge of the true number of system states, $S = 6 = \{L, A, U, G, H, C\}$. Models with system states (\HSRDM~ and its recurrent ablation) set $S=6$ and have $K=4$ entity states. We set system-level recurrence $g$ to count the number of entities out of bounds and the entity-level recurrence $f$ to the identity function. The emission model is set to a Gaussian Autoregressive. All system and entity states are initially set to uniform distributions, and we use a sticky Dirichlet prior on system-level transition parameters ($\alpha = 1$ and $\kappa = 10$). For 'smart' initialization, the bottom-half and top-half of the model are only trained for a single iteration, such that little to no initialization is used for this experiment. The K-Means algorithm is set to a random state with a seed 120. The model is then trained for 10 CAVI iterations with 50 iterations per M-Step. We don’t tune any specific hyperparameters. The model is run across 5 independent initializations (with recurrence: seeds 120-123, 126 and without recurrence: seeds 120-124). Seeds (such as 124 for the HSRDM with recurrence) that do not lead to \emph{any} effective optimization (NAN values) are discarded and an alternative seed is attempted. This procedure is the same for all competitor models, which also tend to produce ineffective training due to randomly poor initializations. The best system-state classification accuracy across all trials is recorded for the models. 

\paragraph{rAR-HMM.} A collection of $\numEntities$ \rARHMM~models can be fit as a special case of a \HSRDM~model where the number of system states is taken to be $\numSystemStates=1$. The number of entity states is taken to be $K=6$ with a uniform prior over possible states. For 'smart' initialization, the bottom-half and top-half of the model are only trained for a single iteration, such that little to no initialization is used for this experiment. The K-Means algorithm is set to a random state with a seed 120. The model is then trained for 10 CAVI iterations with 50 iterations per M-Step. We don’t tune any specific hyperparameters of the rAR-HMM. The model is trained across 5 independent initializations (seeds 120-122 and 124-126). The resulting most likely states for each entity at each timepoint are clustered with a K-Means algorithm (random state seed = 120), and are compared with the ground truth system-level accuracy for each initialization. The best classification accuracy is recorded. 

\paragraph{DSARF.} We train the Deep Switching Autoregressive Factorization model \cite{farnooshDeepSwitchingAutoRegressive2021} with several different parameters. We train with several different choices of lags ($l$) and spatial factors ($K$), where the number of discrete states ($S = 6$) is fixed to match our data-generation. We use the `\emph{Concat.}'') strategy, and conduct a grid search across combinations of $l$ and $K$, specifically for $l = \{1, ..., i\}, \forall i \in \{1, ..., 10, 200\}$ and $K \in \{1, ..., 10, 15, 25, 30\}$. These numbers for search were selected based on prior work in Ref. ~\cite{farnooshDeepSwitchingAutoRegressive2021}, such that our values match the hyperparameters used for datasets with a similar size. Also, the selections are based on the knowledge that our dataset changes discrete states around every $200$ time-steps. The hyperparameters that produce the optimal system classification accuracy are selected for further experimentation, which we observe to be: $l = l = \{1, ..., 200\}$ and $K = 25$. 

Once hyperparameters are selected, each model is trained with 200 epochs (as indicated to be sufficient in the hyperparameter search) and a learning rate of 0.01, across 10 independent random initializations (seeds 120-129). The resulting most likely states for each entity at each timepoint are clustered with a K-Means algorithm (random state seed = 120), and are compared with the ground truth system-level accuracy for each initialization. The best classification accuracy is recorded. 

\subsection{Experiments with up to 200 entities} \label{larger_experiments}

To demonstrate that the \HSRDM~can train and learn the system-level hidden states in the \emph{MarchingBand} dataset with a larger number of entities $J > 64$, we run an experiment with $J = 200$ entities. To simplify our experiment, we left out the cluster state C that gets triggered when a certain number of entities go out of bounds. Instead, we only have $S = 5 = \{L, A, U, G, H\}$. All other variables (number of dimensions ($D=2$), priors, initialization procedure, and training procedure) are kept the same as in the $J = 64$ case. Across 5 initializations, we obtain a best system state classification accuracy of $100\%$, a median accuracy of $100\%$, and a average accuracy of $90\%$. Two out of the five initializations only obtained an accuracy of about $\approx 70\%$, which brought down the total average. Varying results based on unlucky initializations for this type of model is common. 

\subsection{Experiments on Larger Feature Dimensions}

Experiments in the main paper focus on interpretable modeling of data with $D=2$ feature dimensions recorded each timestep. 
To demonstrate that the \HSRDM~can train and learn the system-level hidden states in the \emph{MarchingBand} dataset with a larger number of dimensions $D > 2$, we run experiments with $D = \{10, 30\}$ dimensions. While $D=2$ is the true $(x,y)$ position of the marching band player, $D = \{10, 30\}$ are the true positions with added gaussian noise $\mathcal{N}(0, 0.0004)$. To simplify this experiment, we left out the cluster state C that gets triggered when a certain number of entities go out of bounds. Instead, we only have $S = 5 = \{L, A, U, G, H\}$. All other variables (number of entities ($J = 64$), priors, initialization procedure, and training procedure) are kept the same as in the $J = 64$ case. 

Across 5 initializations, the average and standard deviation system level classification accuracies for $D = \{2, 10, 30\}$ are shown in Fig. \ref{higher_dimensions}. We observe that the best classification accuracies remain high for all $D$; however, the average decays as $D$ increases ($D=2: 88\%,D=10: 80\%, D=30: 75\% $). Since all $D$ sizes were run with the same number of initialization iterations, number of training examples, and the same number of CAVI iterations, we hypothesize that $D=30$ might achieve better performance with more careful tuning of these hyper-parameters. In Fig. \ref{higher_dimensions}, we also show the run times for a single trial of each $D$ size in minutes for an entire training iteration. 

\begin{figure}[hbt!]
  \centering
  \begin{subfigure}{0.45\linewidth}
    \centering
    \includegraphics[width=\linewidth]{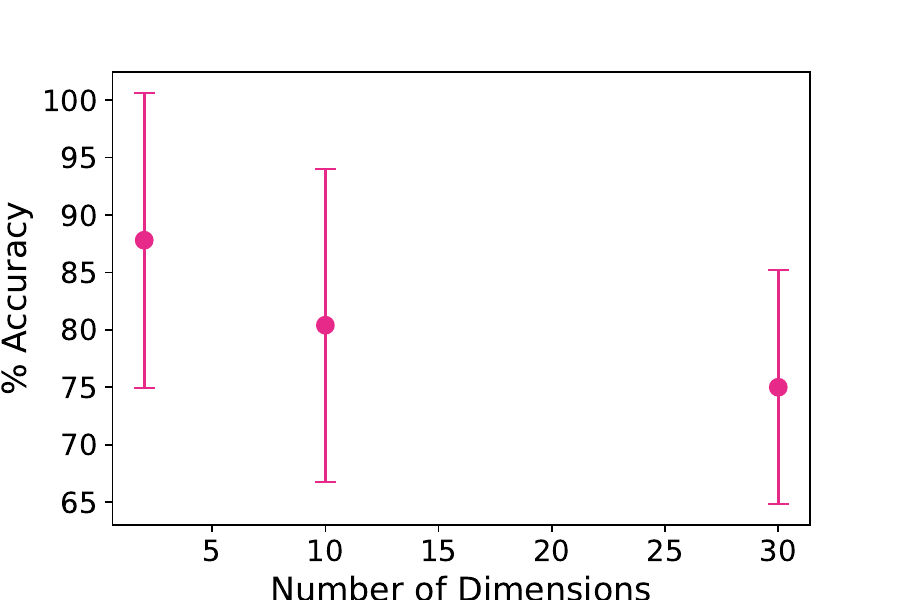}
    \caption{Accuracy vs. Dimension Size}
    \label{fig:sub_first}
  \end{subfigure}
  \hfill 
  \begin{subfigure}{0.45\linewidth}
    \centering
    \includegraphics[width=\linewidth]{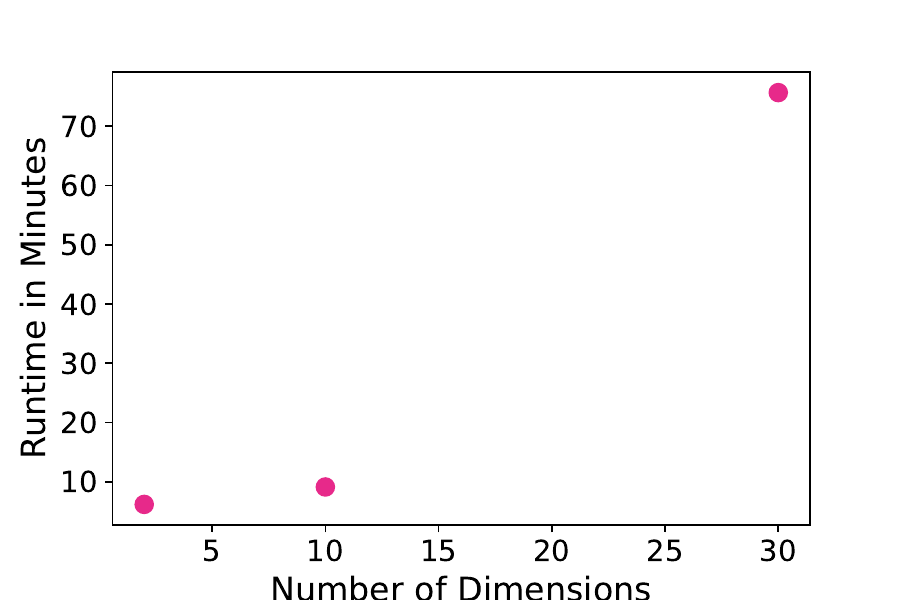}
    \caption{Runtime vs. Dimension Size}
    \label{fig:sub_second}
  \end{subfigure}
  \caption{\emph{MarchingBand} system level accuracy and runtimes for $D = \{2,10,30\}$.}
  \label{higher_dimensions}
\end{figure}

\label{appendix_march}

\newpage \color{black}{\section{Soldiers: Experiment Details, Settings, and Results}
For the visual security experiment, based on a quick exploratory analysis, we set $\numStates=4$ and $\numSystemStates=3$.   For the sticky Dirichlet prior on system-level transitions, as given in \Eqref{eqn:sticky_dirichlet_prior_on_system_leve_transitions}, we set $\alpha=1.0$ and $\kappa=50.0$, so that the prior would put most of its probability mass on self-transition probabilities between .90 and .99. }

\end{document}